\documentclass[11pt, a4paper]{article}
\usepackage{lineno,hyperref}


\usepackage[hmarginratio=1:1,top=25mm,right=18mm,left=18mm,bottom=25mm,columnsep=20pt, headsep=7mm]{geometry}



\usepackage[colorinlistoftodos]{todonotes}








\providecommand*{\Dom}[1]{{\rm dom}}							









\newcommand{\bracket}[1]{\left({#1}\right)}                    
\newcommand{\curlyb}[1]{\left\{{#1}\right\}}                    
\providecommand*{\N}[1]{\left\|{#1}\right\|} 					










 %







\newcommand{\xbs}{{\boldsymbol{\mathsf{x}}}}





\providecommand{\bbR}{\mathbb{R}}




\usepackage{subfig}
\usepackage{xcolor}
\usepackage{bbm}
\definecolor{ForestGreen}{RGB}{34,139,34}

\newcommand{\newmcommand}[2]{\newcommand{#1}{{\ifmmode {#2}\else\mbox{${#2}$}\fi}}}
\newcommand{\newmcommandi}[2]{\newcommand{#1}[1]{{\ifmmode {#2}\else\mbox{${#2}$}\fi}}}
\newcommand{\newmcommandii}[2]{\newcommand{#1}[2]{{\ifmmode {#2}\else\mbox{${#2}$}\fi}}}
\newcommand{\newmcommandiii}[2]{\newcommand{#1}[3]{{\ifmmode {#2}\else\mbox{${#2}$}\fi}}}


\newmcommandi{\paren}{\left({#1}\right)}

\newcommand{\x}{x}
\renewcommand{\v}{{v}}

\newcommand{\reals}{\bbR}

\newcommand{\W}{{W}}

\newcommand{\Laplacian}{{L}}

\newcommand{\M}{M}
\newcommand{\X}{{X}}

\newcommand{\ind}{\mathbbm{1}}
\newcommand{\R}{\mathbb{R}}
\newcommand{\Ev}{\mathbb{E}}


\usepackage{amsfonts, amsmath, amssymb, amsthm}
\usepackage{mathtools}
\usepackage{enumerate,enumitem}
\usepackage{upgreek}
\usepackage{bbm}

\usepackage{xcolor}
\usepackage{graphics}

\usepackage{array}
\usepackage{etoolbox}

\usepackage{caption,float}
\usepackage{bm}
\usepackage[section]{placeins} 

\usepackage{multirow}

\usepackage{siunitx}
\usepackage{textcomp} 
\usepackage{gensymb}

\usepackage{algorithm}
\usepackage{algorithmic}

\theoremstyle{plain}
	\newtheorem{theorem}{Theorem}[section]
	\newtheorem{proposition}[theorem]{Proposition}
	\newtheorem{lemma}[theorem]{Lemma}
	\newtheorem{corollary}[theorem]{Corollary}
	
	\newtheorem{remark}[theorem]{Remark}
	\newtheorem{definition}[theorem]{Definition}

\allowdisplaybreaks

%
%
\usepackage{layout}
\numberwithin{equation}{section}

\begin{document}

\title{\vspace{0.8cm}Structure from Voltage\vspace{0.5cm}}

\newcommand{\footremember}[2]{%
    \footnote{#2}
    \newcounter{#1}
    \setcounter{#1}{\value{footnote}}%
}
\newcommand{\footrecall}[1]{%
    \footnotemark[\value{#1}]%
} 

\author{%
    Robi Bhattacharjee\footremember{ucsdRobi}{Department of Informatics, University of California San Diego, 9500 Gilman Dr, La Jolla, CA 92093, United States}%
    \and
  Alexander Cloninger\footremember{ucsdAlex}{Department of Mathematics, University of California San Diego, 9500 Gilman Dr, La Jolla, CA 92093, United States}%
  \and 
  Yoav Freund\footremember{ucsdYoav}{Department of Computer Science and Engineering, University of California San Diego, 9500 Gilman Dr, La Jolla, CA 92093, United States}%
  \and
 Andreas Oslandsbotn\footremember{uio}{Department of Informatics, University of Oslo, Problemveien 7, 0315 Oslo, Norway}\footremember{simula}{Simula Research Laboratory, Kristian Augusts gate 23, 0164 Oslo, Norway}%
  }
\date{}

\maketitle
\begin{abstract}
Data is often represented as point clouds embedded in high-dimensional space, with an intrinsic structure that concentrates on or close to lower-dimensional sets. Dimensionality reduction is a field dedicated to uncovering these lower-dimensional structures to mitigate the curse of dimensionality. Since the intrinsic structure can be highly non-linear. Non-linear dimensionality reduction techniques are necessary. 

A prominent technique for non-linear dimensionality reduction is spectral graph embeddings such as Laplacian eigenmaps (LP). The challenge with these techniques is that they rely on calculating the eigenfunctions of a large matrix that scales with the number of data points. These operations are expensive and hard to parallelize and typically require the data to be stored in memory. 

In this paper, we propose a scalable non-linear dimensionality strategy that is easy to parallelize. The approach we propose is based on the notion of a localized voltage function defined on a graph constructed on the point cloud. This is closely connected to the strategy used to define the effective resistance (ER) on graphs. Unfortunately, it has been shown that when vertices correspond to a sample from a distribution over a metric space, the limit of the ER between distant points converges to a trivial quantity that holds no information about the graph's structure. 

To circumvent this, we propose the notion of \textit{grounded resistor graphs}, in which the source vertex in ER is replaced with a source region, and the sink vertex is replaced with a universal ground vertex that is connected to all points by a fixed constant resistor. We then show that the energy-minimizing voltage over this new construction converges towards a non-trivial solution in the large sample limit. These voltage solutions are both localized near their source regions and can be solved independently. Finally, we present preliminary results, both theoretical and numerical, demonstrating how these solutions can be used to provide low-dimensional embeddings for the underlying space. 
\end{abstract}



\section{Introduction}
Dimensionality reduction is the problem of finding a low dimensional
representation for locations in a point cloud. The importance of
finding efficient algorithms to solve this problem has increased with
the availability of datasets with millions of data points and thousands
of dimensions.

Principal component analysis (PCA) is a very effective and efficient
algorithm for {\em linear} dimensionality reduction. In particular,
PCA depends on the shape of the point cloud only through the
covariance matrix, and the transformations it produces are linear. However, large and complex point clouds typically
exhibit non-linear structures, which makes traditional PCA
insufficient. Because of this, the field of {\em non-linear}
dimensionality reduction (NLDR) has emerged as a very active and
productive area of research (see Section~\ref{sec:otherwork}).

One approach to NLDR is to view the point cloud as a sample from an
underlying distribution and to characterize the Laplace-Beltrami
operator (LBO) over that distribution. Two prominent representatives
of this line of work are Eigen
maps~\cite{belkin2003laplacian} and Diffusion
maps~\cite{coifman2005geometric}, both of these algorithms are based
on finding the eigenfunctions of the LBO.

Computing these eigen-functions for a point cloud of size $n$ requires
computational time of $O(n^3)$ on one computer and is hard to
parallelize. As a result, finding these vectors for point clouds
larger than 100,000 is impractical with current
computers. Furthermore, typical implementations of the calculations of
these vectors require {\em all} of the data to be stored in a single
computer.  The underlying reason that all of the data has to be
available at the same time is that typical eigen-functions are
``global'', by which we mean that for most eigen-functions $f$ one can
find pairs of points $a,b$ such that the distance $d(a,b)>r$ for some
large value of $r$ and at the same time $|f(a)|>\gamma$ and
$|f(b)| > \gamma$ for some large $\gamma$.  In this work, we suggest
characterizing the LBO on the point cloud using {\em local} functions
instead. We say that a function $g$ is local if for any pair of points
$a,b$ such that $d(a,b)>r$ either $|g(a)|<\epsilon$ or
$|g(b)|<\epsilon$ for some small $\epsilon$.

The eigenvectors of the LBO are solutions of {\em homogeneous}
systems of equations. Meanwhile, to construct {\em local} functions, one needs
{\em non-homogeneous} solutions, i.e., pointwise
constraints. To help our intuition, we follow Doyle and
Snell~\cite{doyle1984random} in using resistor circuits to model the graph Laplacian. In this setting, one can define the non-homogeneous constraints by
choosing two vertices: a ``source vertex'' whose voltage is fixed to one, and a
``sink vertex'' whose voltage is fixed to zero. The remaining vertices are
``floating'' or ``unconstrained'' and the voltages on those nodes are
set by minimizing the energy dissipated by the circuit.

The minimal energy solution determines the current flowing from the
source to the sink. In turn, the reciprocal of the current determines
the {\em effective resistance} (ER). The ER is well-defined for discrete graphs. However, it becomes trivial and uninformative when considering point clouds in a metric space and letting the number of points increase to infinity, as shown by Von-Luxburg et al.~\cite{von2010getting}. The first contribution of this paper is to show that this problem can be alleviated by considering the
effective resistance between pairs of small {\em regions} rather than
pairs of {\em points}. Keeping the regions fixed as the number of
points increases to infinity produces non-trivial limits that can be
used for NLDR.

A natural approach at this point is to use the voltage functions for
different source-sink location pairs as an alternative to the
eigenvector representation. Unfortunately, the resulting voltage
functions are not local if the sink and the source are far from each
other. To ensure that the functions are local, we introduce a particular
form of the resistor graph, namely the ``grounded resistor
graph''. A resistor graph is transformed into a grounded resistor
graph by adding a single node, the ``ground'', connected
to each node in the original graph. The voltage functions are
generated by selecting a source and using the ground as the sink. The
result is voltage functions that are one at their respective source and strictly decreasing away from it.

We consider sources to be {\em landmarks} and use the associated
voltage function to measure the divergence from the
landmark~\footnote{This divergence is not a metric because it is not
  symmetric. On the sum of the divergence from $A$ to $B$ with the
  divergence from $B$ to $A$ {\em is} a metric, and this metric is the
  effective resistance in the grounded graph.}. We propose using the
divergences from a small fixed set of landmarks as a
dimensionality-reducing mapping.  To represent a non-landmark
location, we measure its divergence from each landmark, and that set
of distances uniquely identifies the location of the point. Moreover,
If the point cloud lies on a
$k$-dimensional manifold, then the divergence to the nearest $k+1$
locations uniquely identify the location. Thus the identity of the
close landmarks, together with the distances from those landmarks, can
be used as a low-dimensional representation of the location.

Our contributions can be summarized as follows:
\begin{itemize}
\item We alleviate the problem described by Von-Luxburg et al.~\cite{von2010getting} by appropriately scaling the resistances as the sample size grows.  
\item We prove the existence, convergence to a non-trivial limit, and shape properties of the
grounded metric voltage function.
\item We derive an analytical solution for the grounded metric voltage
on the sphere and support these solutions by numerical experiments.
  \item We show the results of a few experiments on real-world data sets, including MNIST and
  Frey-face data sets.
\end{itemize}

The rest of the paper is organized as follows.
In section 2, we introduce the notion of a grounded resistor graph.
While section 3 extends this to the metric graph and the grounded resistor graph on metric spaces.
In section 4, we discuss the limitations of LE and ER in the large-data metric-graph limit and compare them to our method, which overcomes these limitations.
In section 5, we theoretically analyze the grounded metric graph and show existence, convergence, and shape properties of the voltage solution. In section \ref{section:Embedding using localized voltage functions}
we show theoretically how the voltage solution can be used to embed
the sphere and support our result with numerical experiments. In
Section \ref{section:A note on computational efficiency} we discuss
computational advantages with the metric voltage function while in
Section \ref{section:Future work} we conclude and discuss future work.

\subsection{Relations to other work}
\label{sec:otherwork}
In this section, we give a brief overview of related work.
\subsubsection*{Related work on non-linear dimensionality reduction}
The published literature on NLDR is vast; see the survey~\cite{van2009dimensionality}. 
Below is a non-exhaustive list of some of the significant approaches.

\begin{itemize}
\item {\bf Kernel based} The kernel-PCA~\cite{scholpf1998kernelpca} method generalizes
  the classical linear PCA to non-linear problems.
\item {\bf Manifold based} methods rely on the assumption that the
  point cloud lies on a low dimensional manifold and use ideas from
  differential geometry. These include
  ISOMAP~\cite{tenembaum2000aglobal} that uses shortest paths as an
  approximation to geodesics, and Locally linear
  embeddings~\cite{roweis2000lle} uses local linear approximations of
  manifold.
\item {\bf Optimization based} Direct optimization method use gradient
  descent algorithms to improve an initial embedding.  These include
  t-SNE~\cite{van2008visualizing}, UMAP~\cite{McInnes2018UMAP},
  force-based algorithms \cite{Steinerberger2022}, and LDLE
  \cite{kohli2021ldle}. 
\item {\bf Laplacian based} are based on the Laplace-Beltrami operator. These include Laplacian
eigen-maps~\cite{belkin2003laplacian,belkin2008towards} and Diffusion
maps~\cite{coifman2005geometric} 
\end{itemize}
There are various overlaps and combinations of these categories. For
example, it has been shown that Laplacian based methods correspond to
kPCA with particular choices of the kernel
\cite{ham2004kernel}. Similarly, Laplacian based methods have been used to
initialize optimization based methods~\cite{kobak2021initialization}.

\subsubsection*{Related work on effective resistance}
The ER was introduced as a distance function on graphs by Klein et al.~\cite{klein1993resistance}. Since then, it has proven a useful tool for capturing structural characteristics of graphs, with numerous applications, such as phylogenetic networks
\cite{forcey2020phylogenetic}, detecting community structure
\cite{zhang2019detecting}, distributed control
\cite{barooah2006graph}, graph edge sparsification
\cite{spielman2011graph}, and measuring cascade effects
\cite{tauch2015measuring} e.g. in power grids
\cite{kocc2014impact, wang2015network, cavraro2018graph}.


\section{Resistor Graphs}
\label{section:Grounded Resistor Graphs}

In this section, we introduce the notion of a \textit{grounded resistor graph}, which will serve as our fundamental tool for analyzing graphs along with the spaces from which they are sampled. We first review several key ideas and concepts about \textit{resistor graphs}.

\subsection{General Resistor Graphs}
\label{section:resistor_graphs}
Let $(\X,\W)$ be an undirected, weighted graph with nodes
$\X = \{x_1,\ldots, x_n\}$, and edge weights $\W_{i,j}$. Let the {\em
  voltage} $\v$ be a function $\v:\X\rightarrow\reals$. The {\em
  energy} of the voltage $\v$ is defined as
 \begin{equation} \label{eqn:energy}
   E(\v) \doteq  \sum_{x_i, x_j \in X} \W_{i,j} (\v(x_i)-\v(x_j))^2 =
   \v^T \Laplacian \v
 \end{equation}
where $\Laplacian = D - \W$ is the Laplacian matrix, and $D$ is a diagonal matrix such that $D_{ii} = \sum_j \W_{ij}$.
 
It is easy to see that, with no constraints, the minimal energy is
zero, and that zero energy is attained for any $\v$ where all of the
entries are equal; $v(x_i) = v$ for all vertices in the graph.
Therefore, to obtain a non-trivial voltage distribution on the graph, it is necessary to constrain the system. This motivates 
the Energy Minimizing Voltage (EMV), which is the main object we study in
this paper. 

\begin{definition}[EMV]
The energy-minimizing voltage of a weighted graph $(X, W)$ with respect to source and sink nodes $X^s$ and $X^g$, is the solution to the following optimization problem:
\begin{align*}
    \begin{split}
        \min_{v} & \quad \sum_{x_i, x_j \in X} \W_{i,j} (\v(x_i)-\v(x_j))^2 \\
        \text{Subject to} & \quad  v(x_i) = 1 \quad \forall x_i \in X^s, \quad  v(x_i) = 0 \quad \forall x_i \in X^g,
    \end{split}
\end{align*}
where $\X^s$ are the source nodes and $\X^g$ the sink nodes.
\label{def:EMV_for_LVE}
\end{definition}

\noindent The EMV owes its name to a helpful interpretation of the graph as an electrical network. In the following, we give a brief description of this view, while more details can be found in Doyle and Snell \cite{doyle1984random}. 

\paragraph*{Electric networks:} The EMV can be thought of as the voltage in an electrical network, where the graph corresponds to the underlying electric circuit. In this view, each vertex
has an associated {\em voltage} $\v(x_i)$ and each edge $(x_i,x_j)$ has
associated a non-negative {\em resistance} $R_{i,j}=\frac{1}{W_{i,j}}$ 
and a signed {\em current} $J_{i,j}=-J_{j,i}$. Furthermore, Ohm's law relates the current and resistance at an edge with the voltages at the vertices connected by the edge. The relation can be written as 
\begin{equation}
      \v(x_i)-\v(x_j) = R_{i,j} J_{i,j} \quad \text{or alternatively} \quad  J_{i,j} = W_{ij}(\v(x_i)-\v(x_j)).
\end{equation}
Meanwhile, from Kirchhoff's law, the sum of currents entering a node $i$ must be zero, namely
\begin{equation}
\sum_{j\sim i} J_{i,j} = J_{ext,i},
\end{equation}
where $I_{ext, i}$ is an external current that can be either a source, a sink or zero if the node is un-constrained (no external source applied). Combining these laws we have that
\begin{equation}
    (Lv)_i =\sum_{j\sim i} W_{ij}(v(x_i) - v(x_j)) = J_{ext,i}
    \label{eq:combined_kirchhoff_and_ohm}
\end{equation}
from which it is easy to see that the EMV in Definition \ref{def:EMV_for_LVE} can be attained. In particular, the constraints on the source and ground voltages can be enforced by the external current.

Because of this, we can interpret Equation~(\ref{eqn:energy}) as the energy dissipated in a circuit in the form of heat when no power is injected. With no external source, it is clear that this energy is zero. Meanwhile, the constrained system in Definition~(\ref{def:EMV_for_LVE}) corresponds to a circuit connected to an external source for which a non-trivial minimal energy voltage vector exists.


\subsection{Grounded Resistor Graphs}
Computing the EMV over arbitrary choices of sources $\X^s$ and sinks $\X^g$ can reveal aspects of the global structure of a graph -- for example, measuring the total current that flows from $\X^s$ to $\X^g$ can give a measure of the connectivity between these two sets. In this work, we are particularly interested in the case where $\X^s$ is a small set of closely connected vertices, and $\X^g$ is selected to reveal the \textit{local} structure of the graph around $\X^s$. Motivated by this, we introduce the idea of a \textit{grounded resistor graph}, which replaces the set of sink nodes $\X^g$ with a dummy node $g$ that represents a universal sink. We refer to this sink as the grounding node.

\begin{definition}[Grounded resistor graph] \label{def:grounded_resistor_graph}
Let $(X, W)$ be a weighted graph. Then its grounded graph with grounded weight $\rho$, $(X, W, \rho)$, is the graph in which the extra node, $g$, is connected to all vertices with an edge of weight $\rho$. 
\end{definition}

Here, we don't consider $g$ as a node of $X$ as its behavior is completely determined by the weight $\rho$. As we will see later, it will be convenient to keep $X, W$ unchanged when considering the grounded graph. 
Because everything is connected to the ground, the EMV over $(X, W, \rho)$ will naturally decay to $0$ on nodes far from the source. We define the grounded EMV as follows.

\begin{definition}[Grounded EMV]\label{def:grounded_EMV}
\begin{align*}
    \begin{split}
        \min_{v: X\rightarrow [0, 1]} & \sum_{i, j \in X} W_{ij}(v(x_i) - v(x_j))^2 + \sum_i \rho v^2(x_i) \\
        \text{Subject to} & \quad  v(x_i) = 1 \quad \text{for all} \quad x_i \in \X^s.
    \end{split}
\end{align*} 
\end{definition}

Here, the term $\sum_i \rho v(x_i)$ represents the amount of energy corresponding to the edges connecting each node to the ground vertex. Furthermore, we exclude the ground node from $v$ because it is always defined to have a voltage of $0$. For a given source $X^s$, we can describe the solution of the EMV using the \textit{grounded weight matrix} $\widetilde{D}^{-1}\widetilde{W}^{(s)}$. Here $\widetilde{D}\in \bbR^{n \times n}$ is a diagonal matrix with $D_{ii}=1$ for $x_i\in X^s$ and otherwise $\widetilde{D}_{ii} = \rho + \sum_{j = 1}^n W_{ij}$. Furthermore, $\widetilde{W}^{(s)}\in \bbR^{n \times n}$ is defined as
\begin{equation*}
    \widetilde{W}^{(s)}_{ij} = 
    \begin{cases}
     1, & \text{if} \quad i=j, x_i \in \X^s \\
     0, & \text{if} \quad i\neq j, x_i \in \X^s \\
     W_{ij}, & \text{otherwise}.
    \end{cases}
\end{equation*}
We note that the source is incorporated by $\widetilde{W}^{(s)}$ and $\widetilde{D}$ incorporates the effect of the grounding node through the additional $\rho$ in the sum. With the definition of the \textit{grounded weight matrix} we have from Lemma \ref{lemma:Solution_EMV_for_LVE} that the solution to the grounded EMV can be written as:

\begin{lemma}\label{lemma:powermethod} The solution to the EMV in Def. \ref{def:grounded_EMV} is the unique voltage function $v^*: X \to [0, 1]$, which satisfies $v^* = \widetilde{D}^{-1}\widetilde{W}^{(s)} v^*$ and $v^*(x_i) = 1$ for all $x_i \in \X^s$. 
\label{lemma:Solution_EMV_for_LVE}
\end{lemma}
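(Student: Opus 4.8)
The plan is to exploit that the grounded energy is a strictly convex quadratic, so that the constrained problem has a unique minimizer, and then to show that the fixed-point equation $v^* = \widetilde D^{-1}\widetilde W^{(s)} v^*$ together with $v^* = 1$ on $X^s$ is exactly its first-order optimality condition, with the box constraint $v \in [0,1]$ turning out to be inactive.

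First I would establish existence and uniqueness. Writing the objective of Definition~\ref{def:grounded_EMV} as $E(v) = \sum_{i,j} W_{ij}(v(x_i)-v(x_j))^2 + \rho\sum_i v^2(x_i)$, its Hessian is positive definite, being the sum of the positive-semidefinite Laplacian term (with matrix proportional to $L = D - W$) and the positive-definite term arising from $\rho\sum_i v^2(x_i)$ (matrix proportional to $\rho I$); recall $\rho > 0$. Hence $E$ is strictly convex, and over the compact convex feasible set $\{v \in [0,1]^X : v(x_i) = 1 \text{ for } x_i \in X^s\}$ it has a unique minimizer. I would in fact first minimize over the larger affine set $A = \{v : v = 1 \text{ on } X^s\}$ (dropping the box); there the minimizer is still unique and is characterized by the vanishing of the partial derivatives of $E$ in the free (non-source) coordinates.

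Next I compute those partials. For a free node $x_k$ the partial derivative is a positive multiple of $\rho v(x_k) + \sum_j W_{kj}(v(x_k) - v(x_j))$; setting it to zero and rearranging yields $(\rho + \sum_j W_{kj})\, v(x_k) = \sum_j W_{kj} v(x_j)$, that is $\widetilde D_{kk} v(x_k) = (\widetilde W^{(s)} v)_k$. On source rows both $\widetilde D$ and $\widetilde W^{(s)}$ act as the identity, so those rows of $\widetilde D v = \widetilde W^{(s)} v$ hold automatically once $v = 1$ on $X^s$. Thus the stationary point of $E$ on $A$ is exactly the solution of $v^* = \widetilde D^{-1}\widetilde W^{(s)} v^*$ with $v^* = 1$ on $X^s$. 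Restricted to the free block this is the linear system $(L_{ff} + \rho I) v_f = W_{fs}\mathbf 1$, whose matrix is positive definite (a principal submatrix of $L$ plus $\rho I$), which independently gives existence and uniqueness of $v^*$.

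The remaining and only genuinely delicate step is to check that the box constraint is inactive, i.e. that this $v^*$ already satisfies $0 \le v^*(x_i) \le 1$, so that the minimizer over $A$ is feasible, and hence optimal, for the box-constrained problem. I would argue by a discrete maximum principle: the free-node equation rewrites as $v^*(x_k) = \big(\rho\cdot 0 + \sum_j W_{kj} v^*(x_j)\big)\big/\big(\rho + \sum_j W_{kj}\big)$, exhibiting $v^*(x_k)$ as a convex combination of the ground value $0$ and the neighbour values $v^*(x_j)$. If $\max_i v^*(x_i) > 1$, it would be attained at some free $x_k$ (the sources equal $1$), and the convex-combination identity forces $v^*(x_k) \le \theta_k \max_i v^*(x_i)$ with $\theta_k = \sum_j W_{kj}/(\rho + \sum_j W_{kj}) < 1$, a contradiction; the symmetric argument rules out $\min_i v^*(x_i) < 0$. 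Hence $v^* \in [0,1]^X$. Combining the pieces, $v^*$ uniquely minimizes $E$ over $A$, lies in the box, and is therefore the unique minimizer of the box-constrained grounded EMV, which proves the claim.
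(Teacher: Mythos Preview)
Your argument is correct and, in fact, more careful than the paper's own proof on two points: you explicitly justify uniqueness via strict convexity of $E$, and you verify the range constraint $v^*\in[0,1]$ through a discrete maximum principle. The paper's proof does neither explicitly.

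The route, however, is genuinely different. The paper works on the extended $(n{+}1)$-node graph that includes the ground vertex $g$, forms the Laplacian $L$ there, and enforces the equality constraints $v(s)=1$, $v(g)=0$ via Lagrange multipliers $\lambda_s,\lambda_g$. Setting the gradient of $v^\top L v - \lambda_s v^\top e_s - \lambda_g v^\top e_g$ to zero yields $v = D^{-1}Wv + D^{-1}\lambda_s e_s - D^{-1}\lambda_g e_g$; the multipliers are then eliminated by reading off the source and ground rows, and the ground row/column are dropped to arrive at $v=\widetilde D^{-1}\widetilde W^{(s)} v$. You instead work directly on the $n$-node reduced quadratic (the ground already substituted out as the $\rho\sum_i v_i^2$ term), differentiate in the free coordinates, and recognise the resulting linear system as the non-source block of $\widetilde D v=\widetilde W^{(s)} v$. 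Your approach is more self-contained and avoids the bookkeeping of carrying $g$ and two Lagrange multipliers; the paper's approach makes the circuit interpretation (source/sink currents as multipliers) more transparent. Both reach the same fixed-point characterisation, but only your version closes the gap on the $[0,1]$ range claimed in the lemma.
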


The motivation for writing the EMV in this way will become clear later when we introduce the metric

\begin{remark}
We note that the sum of the $i$-th row of $\widetilde{W}^{(s)}$ is $1/(1 + \rho/\sum_{i} W_{ij})$ where $1/\rho$ is the resistance to ground. This relation highlights the importance of the ground node because $\rho/\sum_{i} W_{ij}\approx 0$ means a trivial solution since the rows sum to one. Therefore, as the large graph limit means increasing $\sum_{i}W_{ij}$, tuning of $\rho$ can be used as a counterweight. Furthermore, as we will show, the voltage decay away from the source is tightly linked to the magnitude of $\rho$.
\label{remark:rowsum_of_WHat}
\end{remark}



\section{Grounded Resistor Graphs over Metric Spaces}\label{section:Grounded Resistor Graphs over Metric Spaces} 

We focus on a special type of graphs, namely graphs constructed from samples drawn from a distribution over a metric space. Let $(\M, d)$ be a compact metric space and $\mu$ the probability measure over $\M$.
Let the kernel function $k: \M \times \M \rightarrow [0, 1]$ be a function that defines what it means for two points to be "near" each other. 
Two commonly used kernel functions are:
\begin{itemize}
\vspace{0.25cm}
    \item The radial kernel: $k_r(x, y) = \mathbf{1}(d(x, y) \leq r)$ where $r > 0$ is some fixed radius.
    \vspace{0.25cm}
    \item The Gaussian kernel: $k_\sigma(x,y) = \exp(\frac{-d(x,y)^2}{2\sigma^2}),$  where $\sigma > 0$ is the fixed temperature parameter.
\vspace{0.25cm}
\end{itemize}


Let $X_n = \{\x_1, \cdots, \x_n\}$ be a sample of points $\x_i \in \M$ drawn i.i.d. from $\mu$. Our main idea is to construct a weighted grounded graph, $(\X_n, W, \rho)$ by connecting points $\x_i, \x_j$ by weights $W_{ij} \propto k(\x_i, \x_j)$, and by utilizing a grounded weight proportional to $\rho$. Then, if $\M^s \subseteq M$ is a local region in $M$, we can leverage the EMV over the grounded graph to understand the structure of $M$ nearby $\M^s$. 

\begin{figure}[htb!]
\centering
\includegraphics[width=0.9\textwidth]{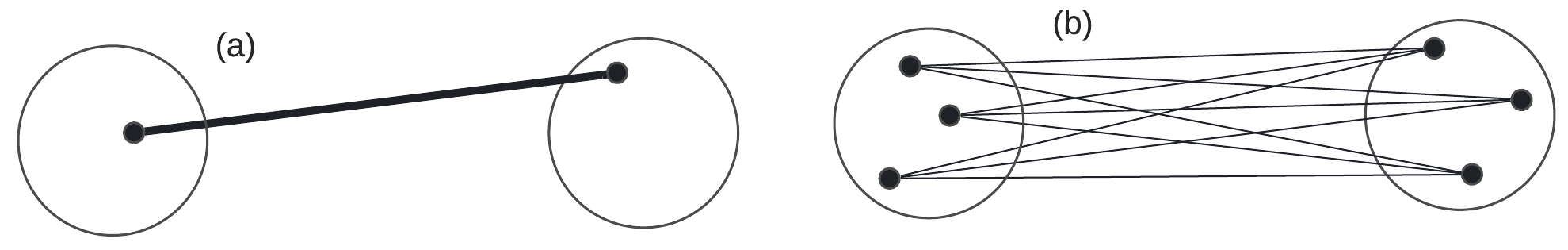}
\caption{Example of resistance scaling.  {\bf (a)} Number of edges connecting $T$ and $T'$ for $m=1$ point sampled from each region.  {\bf (b)} Number of edges connecting $T$ and $T'$ for $m=3$ points sampled from each region. Notice that the number of edges between these regions is $m^2$. } \label{fig:resistor_split}
\end{figure} 

To do so, it is essential that our computations converge towards a non-trivial solution as the number of sampled points, $n$, goes towards infinity. In particular, it is necessary for $W, \rho$ to appropriately scale as $n$ changes. It is therefore natural to demand that the physical properties of the graph, embodied by the resistance, current and voltage, should remain relatively stable as $n$ increases. Thus, it is crucial to understand how the edge resistances should scale with the number of points sampled.

\paragraph{Scaling based on regional density}
To this end, consider two small regions $T$ and $T'$ such that $k(x,x')>0$ for all $x\in T$ and $x'\in T'$.  Note, these are not necessarily sources and sinks, just two small regions.  For simplicity let $k(x,x')$ be constant, which means each edge has equal resistance $R = 1/k(x,x')$.  Our goal is to keep the resistances between these two regions constant as the number of points changes.  For a fixed $X_n$, on average there are $m$ points $x\in S_n = \curlyb{x\in X_n : x\in T}$ and $m$ points $x'\in S_n' = \curlyb{x\in X_n : x\in T'}$.  This results in $m^2$ edges between $T$ and $T'$.   This means the total resistance between these regions is $R/m^2$, given that these edges are connected in parallel.

The issue here is that, once we move to a denser sample $X_{2n}$ there will be, on average, $2m$ points in $S_{2n}$ and $S_{2n}'$ respectively.  This will create a net resistance $\frac{1}{4} R/m^2$, which means the resistance between these physical regions $T, T'$ is decreasing and will go to 0 as $n$ goes to infinity.
We illustrate this construction in Figure \ref{fig:resistor_split}.

Based on these considerations, we formally define the grounded metric resistor graph, illustrated in Fig. \ref{fig:groundedGraph}.

\begin{definition}[Grounded metric resistor graph]\label{definition:grounded_metric_resistor_graph}
Let $X_n = \{x_1, x_2, \dots, x_n\} \sim \mu$ be a set of points sampled from data distribution $\mu$ over $M$, $k: M \times M \to [0, 1]$ be a kernel similarity function, and $\rho_g$ be a fixed scaling constant for the grounded weight. Then the \textbf{grounded metric resistor graph}, $(X_{n}, W, \rho)$, is the weighted graph defined with grounded weight, $\rho = \frac{\rho_g}{n}$, and edge weights $(W)_{ij} = \frac{k(x_i, x_j)}{n^2}$. 
\end{definition}
Next, we can define the grounded EMV for a region $M^s$ by considering all points inside $M^s$ as sources. 

\begin{definition}[Grounded metric voltage function]
Let $M^s \subseteq M$ be any set, $\rho$ be a constant, and $(X_n, W, \rho)$ be the corresponding grounded resistor graph constructed from $X_n \sim \mu^n$. We define the \textbf{grounded metric voltage function} with respect to $M^s$, denoted $v_n^{*}:X_n\rightarrow  [0, 1]$, as the grounded EMV with respect to $X^s = M^s \cap X_n$. 
\label{def:grounded_metric_voltage_function}
\end{definition}

\begin{figure}[htb!]
\centering
\includegraphics[width=.6\textwidth]{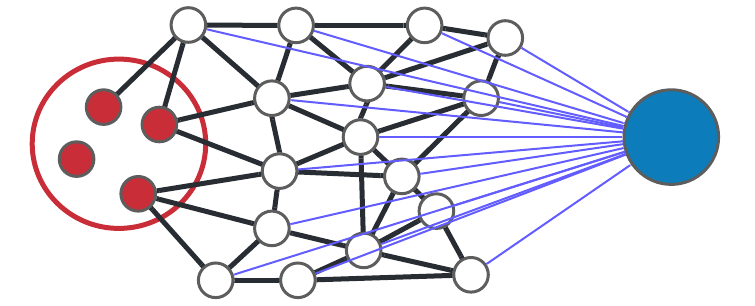}
\caption{An example of a grounded graph with red nodes denoting $\X^s$ and the blue node $g$ for the terminating ground node.  Note that all points are connected to the ground.}\label{fig:groundedGraph}
\end{figure}


\section{Outlining our method and comparing to existing work}
We propose to use the grounded voltage function from Definition \ref{def:grounded_metric_voltage_function} as an embedding tool. The idea is to distribute sources (landmarks) across the point cloud we want to embed and, for each source, solve for an independent grounded voltage function. With $d$-independent voltages, we can construct a $d$-dimensional embedding, a strategy we call Localized voltage eigenmaps (LVE).

In the remainder of this paper, we will establish the theoretical foundations for LVE. However, before we do this, we will outline two existing approaches, namely the effective resistance (ER) and the Laplacian eigenmaps (LE), and explain how our method improves on these schemes.

\subsection{Effective resistance}
The ER is a measure for calculating distances on graphs \cite{klein1993resistance}. It arises naturally from the electrical system interpretation by considering each node pair $(l, k)$ as a source-sink connection and enforcing a unit current between them. The standard formulation of ER follows by enforcing these constraints on Equation \eqref{eq:combined_kirchhoff_and_ohm}, which gives
\begin{equation}\label{eq:effective_resistance}
    R^{eff}_{lk} = v(x_l) - v(x_k) = (e_l -e_k)^\top L^{\dagger}(e_l -e_k) = \N{z_l - z_k}^2,
\end{equation}
where the effective resistance $R^{eff}_{lk}$ is an euclidean distance matrix \cite{ghosh2008minimizing}. The last term in Eq. \eqref{eq:effective_resistance} shows how the ER can be considered an n-dimensional embedding with $z_l = \Lambda^{-1/2}V^\top e_l$ for $L = V\Lambda V^\top$. The advantage of ER is that the distance between nodes can be computed without explicitly calculating the voltage functions $v$. Instead, it suffices to solve for $L^{\dagger}$, the pseudo-inverse of $L$. Methods have also been proposed for distributed computation \cite{gillani2021queueing}. 

\paragraph{Effective resistance as an EMV}
We can relate the ER to the grounded voltage function by re-formulating the ER as the solution to an EMV optimization problem. For a node pair $(l,k)$, the voltage function defining the effective resistance $R_{lk}^{eff} = v(x_l) - v(x_k)$ is the solution to the following minimization problem.
\begin{align}
    \begin{split}
     \min_{v: X \rightarrow \bbR} \quad & \sum_{i,j} W_{ij}(v(x_i) - v(x_j))^2 - \sum_{i\in (l,k)} J_i v(x_i)\\
     \text{Subject to} \quad & J_i = 1; \,\,  i = l, \quad J_i = -1; \,\,  i = k \quad \text{and} \quad J_i = 0 \quad \text{otherwise}.
    \label{eq:ER_optimization_problem}
    \end{split}
\end{align}
By considering Ohm's law $J_{ij} = (v_j - v_i)/R_{ij}$ with $v_j=0$, $R_{ij} = \rho$ and summing over all nodes instead of $(l,k)$, the relation to the grounded EMV becomes clear.

\paragraph*{Limitations with the effective resistance}
The problem with the ER, demonstrated by  \cite{von2010getting}, is that the distance between nodes connected by a path converges in the large graph limit to a trivial quantity which only depends on the degree of the end-nodes. 


\subsection{Laplacian eigenmaps}
Laplacian Eigenmaps is a widely used tool for finding a $d < n$ dimensional representation of the $n$ nodes on a graph. The goal is to assign coordinates $z_k \in \bbR^d$ to each node $k$, so nearby points on the graph remain
close. This is done by the following: for $l=1,\dots, d$ solve
\begin{align}
    \begin{split}
        \min_{v^{(l)}: X\rightarrow \bbR} \quad & \sum_{i,j} W_{ij}(v^{(l)}(x_i) - v^{(l)}(x_j))^2\\
        \text{Subject to} \quad & v^{(l)} \perp v^{(l^\prime)} \quad \text{and} \quad v^{(l)} \perp 1.
    \end{split}
    \label{eq:LE_optimization_problem}
\end{align}

To avoid the trivial solution $v^{(l)} = 1$, the constraint $v^{(l)} \perp 1$ is enforced. Meanwhile, the orthogonality constraint $v^{(l)} \perp v^{(l^\prime)}$ is introduced to avoid solving for the same function $d$ times. It turns out that the minimizing set, satisfying these constraints, are the first $d$ eigenfunctions of $L$, modulo the first, which turns the optimization into an eigenvalue problem.

\paragraph*{Laplacian eigenmaps as an electrical network} The eigenfunctions that solves the LE problem satisfies trivially $L^{(l)} = \lambda_{l} D v^{(l)}$. By thinking of the right-hand side as an external current $J_{ext, l} = \lambda_{l} D v^{(l)} = \lambda_l (d_1 v^{(l)}(x_1),\dots,d_n v^{(l)}(x_n))^\top$ we can think of each $v^{(l)}$ as a voltage over the nodes, that satisfies Kirchhoff's and Ohm's law through Equation \eqref{eq:combined_kirchhoff_and_ohm}. 

This means that the LE optimization problem in Equation \eqref{eq:LE_optimization_problem} can be considered an EMV defined over an electrical resistor graph, where constraints are: no current accumulation due to $v^{(l)} \perp 1$ and orthogonality between each voltage solution. Furthermore, the interpretation of the external current $J_{ext, i}$ is that for each voltage function $v^{(l)}$, the nodes in the graph can act as a source or a sink depending on the sign of $v^{(l)}(x_k)$ at node $k$. In particular, we note that there are no constraints on the locality of these sources and sink nodes. 

\paragraph*{Limitations with Laplacian eigenmaps}
An important limitation of LE is the orthogonality condition, which prevents the eigenvectors from being computed independently, preventing distributed computation. Furthermore, the Laplacian eigenvectors are typically global and, therefore, expensive to compute. By global, we mean in this context that the eigenfunctions have non-zero support almost everywhere on the graph, meaning we effectively need all nodes to compute the eigenfunctions. We can understand this global behavior from the electrical network interpretation because, as we have seen, the source and sink nodes are not confined to specific regions of the graph, and because of this, neither are the voltage solutions.

\subsection{Advantages of the grounded metric voltage function}
In this paper, we are motivated by making an embedding for metric spaces, which requires a computationally cheap and non-trivial solution in the large graph limit. As we have seen, both LE and ER face limitations in this limit; The ER suffers from a trivial solution; Whereas the LE is prevented from distributed computations and suffers from high computational complexity. Because of these limitations, both LE and ER are in their traditional form, prevented from being used in the metric graph setting. In this paper, we show that embedding using the grounded voltage function has the potential to overcome these limitations.

\paragraph{Overcoming the limitations of LE} As we show in Section \ref{section:Bounding the shape of the voltage function}, combining a localized source with a universal ground creates a localized voltage solution, reducing computational complexity. Furthermore, the grounded voltage functions are free of dependency conditions, such as the orthogonality condition enforced by LE, allowing distributed computations.

\paragraph{Overcoming the limitations of ER} As apparent from Eq. \eqref{eq:ER_optimization_problem} there is a close connection between the ER and the grounded EMV. Therefore, one could expect a trivial limit also for the grounded voltage function. However, as we show in Sections \ref{section:Convergence_of_the_grounded_metric_voltage_function} and \ref{sect:examples_grounded_resistor_graphs_over_metric_spaces}, the grounded metric voltage function converges to a non-trivial limit as the number of samples goes to infinity. This non-trivial limit is achieved because we think of each node in the graph as a region with a density instead of individual points, a view discussed in detail in Section \ref{section:Grounded Resistor Graphs over Metric Spaces}. In particular, this view introduces region-based scaling and sources that cover a finite region instead of the point sources used in the ER setting. 

Figure \ref{fig:er_compare} illustrates the difference between using a point source and a regional source whose strength increases proportionally with the number of samples.  We demonstrate this with four algorithms across different numbers of points sampled from the same metric space. All of these are done without the universal grounding node but with a source at $s=(0.1, 0.1)$ and a sink at $g=(0.7, 0.7)$.  Both source and sink regions are of radius $0.1$.  We compute voltage curves through the following four methods: (PM) the power method described in Lemma \ref{lemma:powermethod}; (RegionER) ER using a source vector $e_{X^s}$ where the voltage curve is 
\begin{align*}
v=L^\dagger \left(e_{X^s} - e_{X^g} - (p_s - p_g)\right), \quad e_{X^s}(x) = \begin{cases}1, & \textnormal{ if } x\in X^s\\0, & \textnormal{ else} \end{cases}, \quad e_{X^g}(x) = \begin{cases}1, & \textnormal{ if } x\in X^g\\0, & \textnormal{ else} \end{cases},
\end{align*}
where $p_s = \frac{1}{|X|}\sum_{x\in X} e_{X^s}(x)$ is the density of the source, same for $p_g$ and the sink. The mean is subtracted so that the total external current is mean 0; (DensityER) ER using an indicator source vector $e_s$ localized at the exact source node $s\in X^s$, an equivalent sink vector $e_g$, and the voltage curve $$v=L^\dagger \left(p_s\cdot e_{s} -p_g \cdot e_{g} - (p_s-p_g)\right),$$
where the mean is subtracted so that the total external current is mean 0; (ER) standard ER using the indicator source vector $e_s$, the equivalent sink vector $e_g$, and the voltage curve $v = L^\dagger (e_s - e_g)$.    It is clear from Figure \ref{fig:er_compare} that using a source region instead of a source point (even if that point is weighted by the local density) is critical to attaining a nontrivial limit as the number of points increases.  

\begin{figure}[htbp]
  \centering
  \footnotesize
  \begin{tabular}{cccc}
    \includegraphics[width=0.2\textwidth]{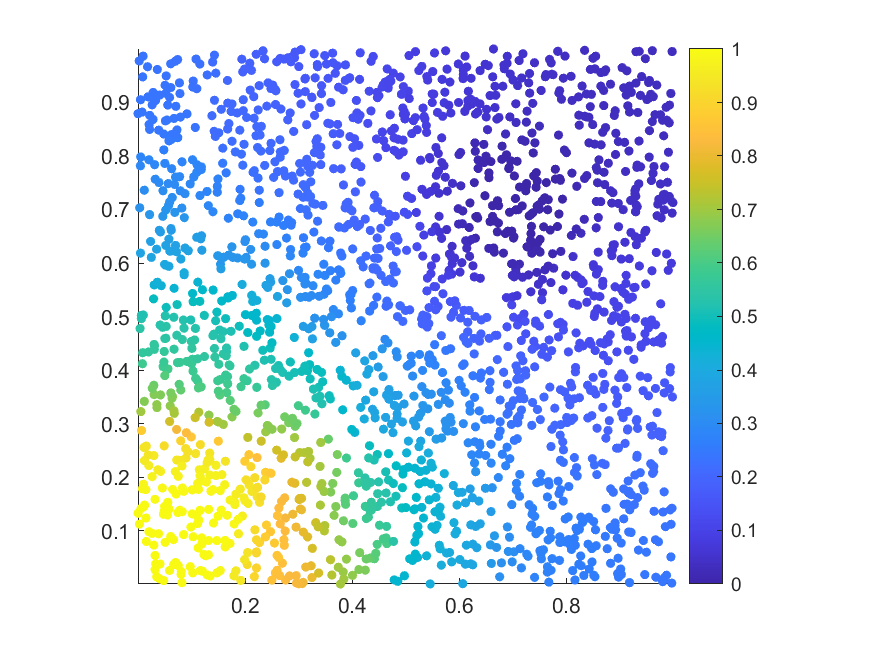} & 
    \includegraphics[width=0.2\textwidth]{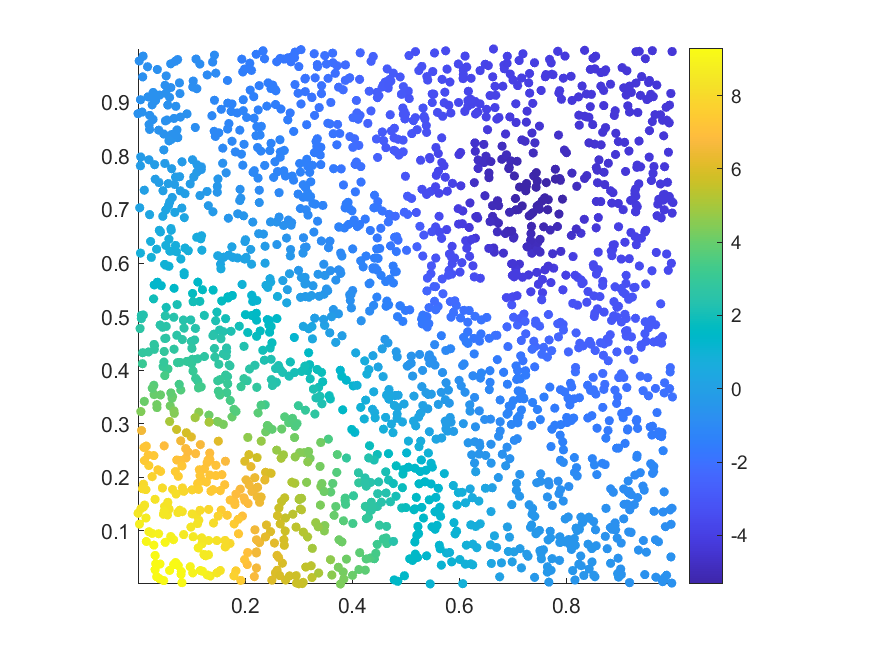} & 
    \includegraphics[width=0.2\textwidth]{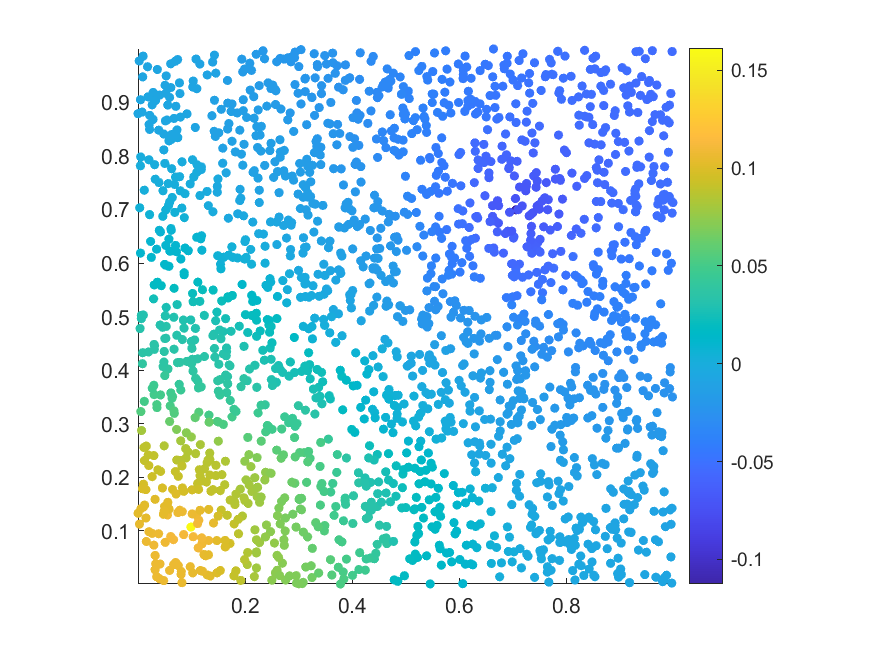} & 
    \includegraphics[width=0.2\textwidth]{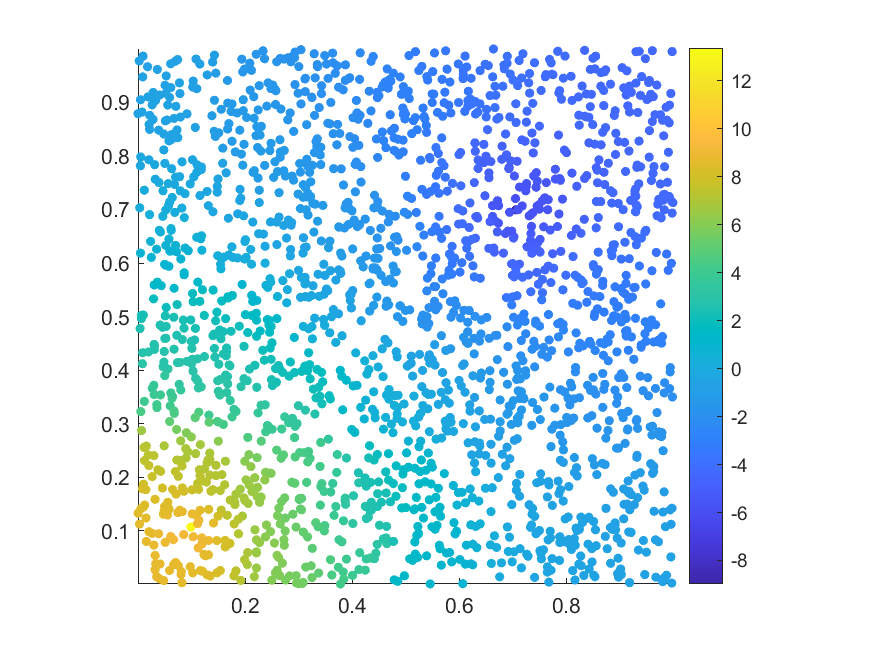} \\
    \includegraphics[width=0.2\textwidth]{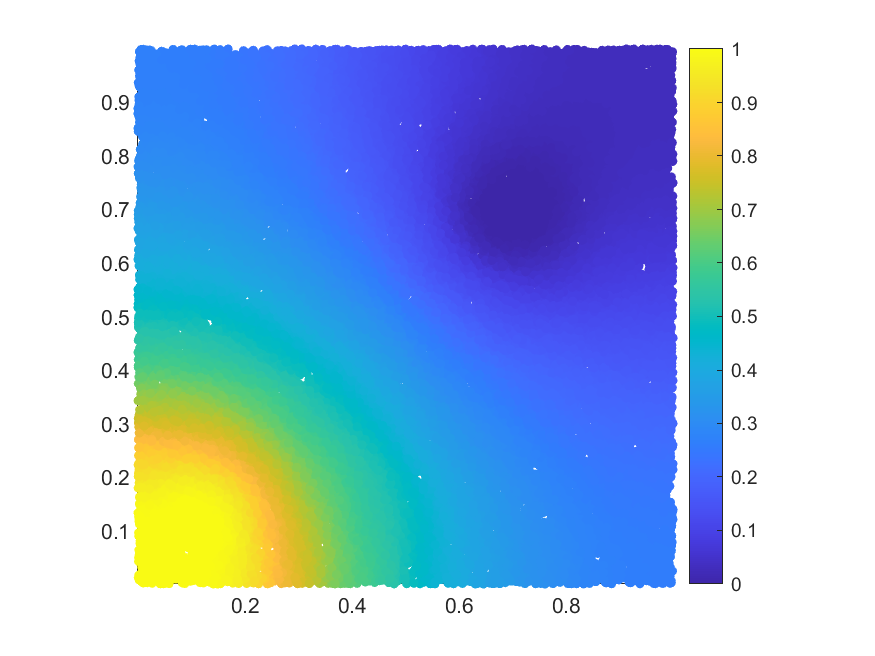} & 
    \includegraphics[width=0.2\textwidth]{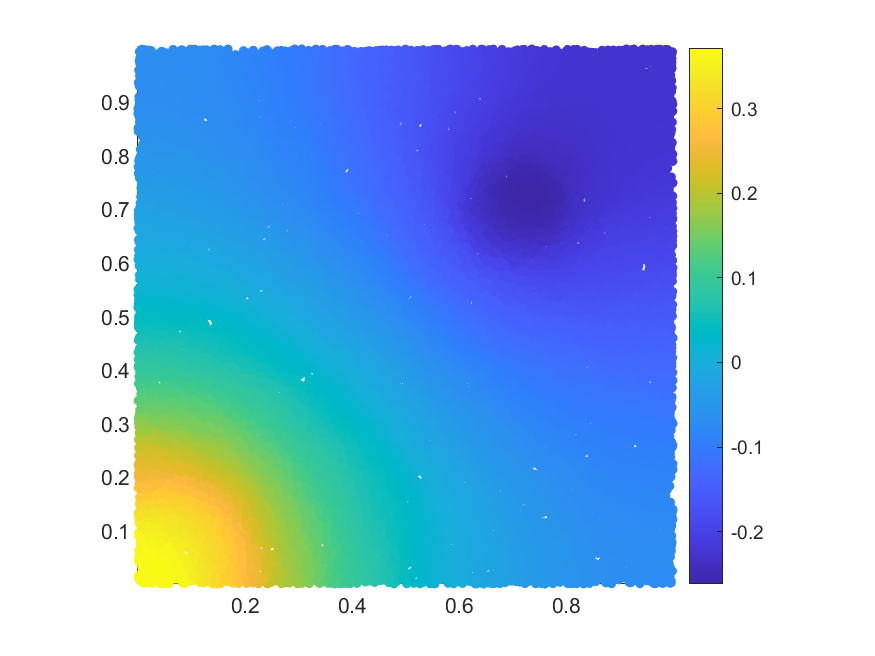} & 
    \includegraphics[width=0.2\textwidth]{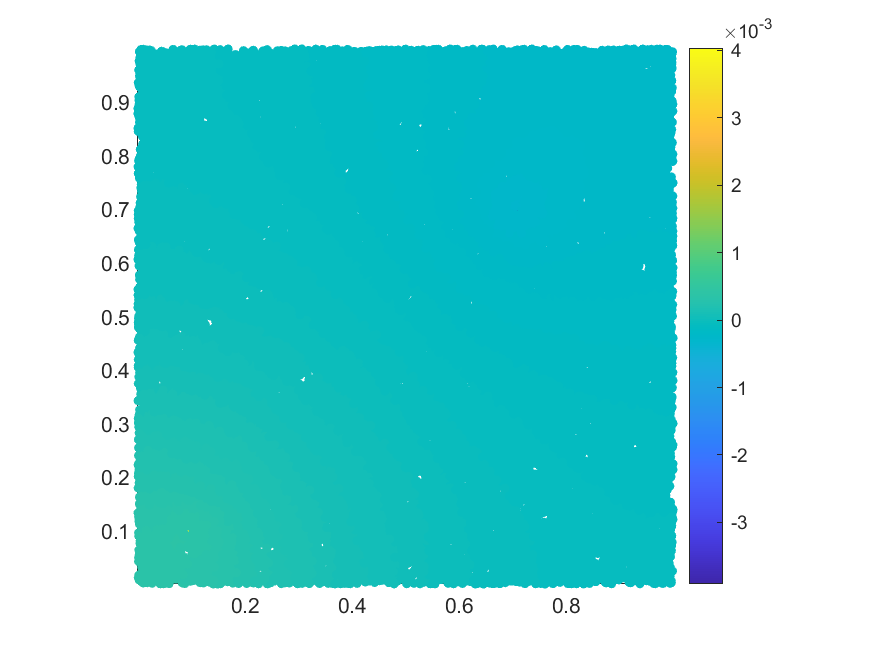} &
    \includegraphics[width=0.2\textwidth]{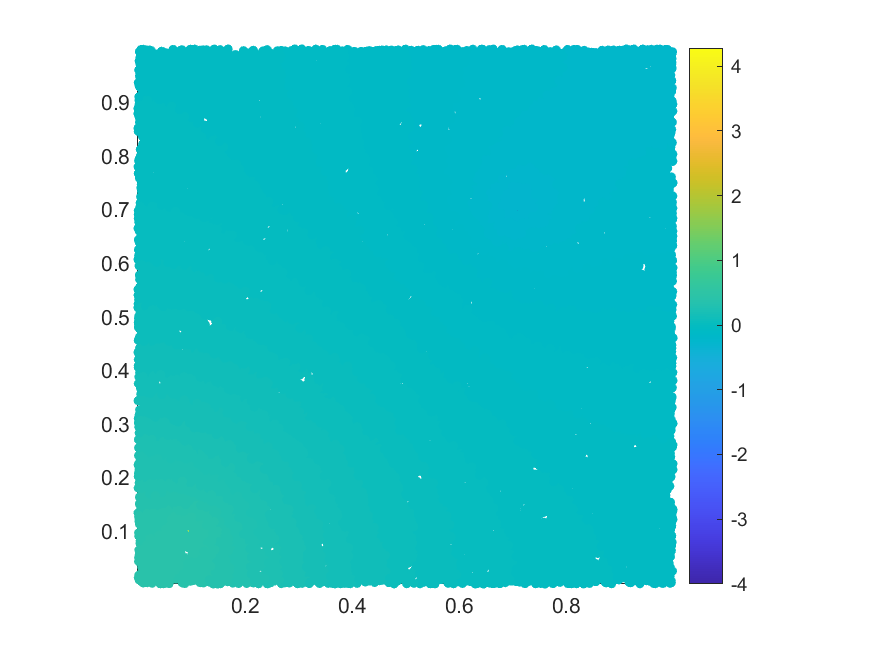} \\
    PM & RegionER & DensityER & ER
  \end{tabular}
  \caption{Trivial Limit of ER: (top) $2^{11}$ points, (bottom) $2^{15}$ points.}
  \label{fig:er_compare}
\end{figure}

Finally, Figure \ref{fig:Motivation} summarizes our contribution compared to the LE and ER schemes.

\begin{figure}[htbp]
  \centering
  \includegraphics[width=\textwidth]{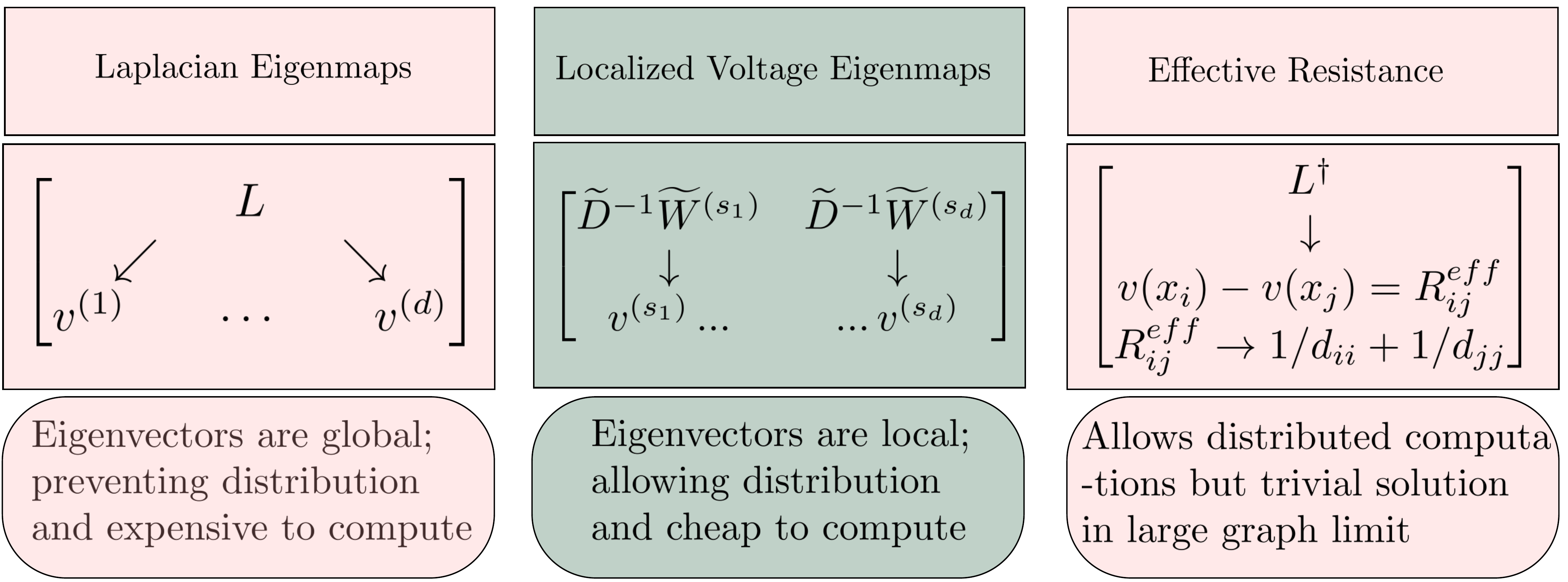}
  \caption{Illustration of our contribution}
  \label{fig:Motivation}
\end{figure}


\section{Analysis of the grounded metric voltage function}\label{sec:ground_analysis}
Let $v_n^*$ be the grounded metric voltage function from Def. \ref{def:grounded_metric_voltage_function}. In this section, we show that this voltage function converges to a non-trivial function as the sample size increase and provide shape bounds on the voltage decay away from the source.

\subsection{Convergence of the grounded metric voltage function}
\label{section:Convergence_of_the_grounded_metric_voltage_function}
We start by showing that $v_1^*, v_2^*, \dots $ converge as our sample size increases. Our first step is to define the voltage function formally, $\v^*: \M \to \reals$, that they converge to. To this end, we start with an explicit expression for $v_n^*$ which follows directly from Lemma \ref{lemma:Solution_EMV_for_LVE}.


\begin{proposition}\label{prop_ohms_law_result}
Let $M^s \subseteq M$ be a source subset, $X_n \sim \mu^n$ be a finite sample, $X^s = M^s \cap X_n$ be the set of source nods, and $\rho$ be a scaling constant. Let $v_n^{*}$ be the induced grounded metric voltage over $(X_n, W, \rho)$. Then, for  all $x_i \in X_n$, we have
\begin{equation*}
    v_n^*(x_i) = \frac{\sum_{x_j \in X^s} k(x_i, x_j) + \sum_{x_j \in \X_n \setminus X^s} v_n^*(x_j)k(x_i, x_j)}{\rho + \sum_{\xbs_j \in \X_n} k(x_i, x_j)},
\end{equation*}
along with $v^*_n(\x_i) = 1$ for $\x_i\in \X^s$.
\label{corollary:Grounded_metric_voltage_function_expression}
\end{proposition}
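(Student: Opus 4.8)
The plan is to read the pointwise identity directly off the matrix fixed-point characterization supplied by Lemma~\ref{lemma:Solution_EMV_for_LVE}, so the whole argument is a row-by-row expansion of $v_n^* = \widetilde{D}^{-1}\widetilde{W}^{(s)} v_n^*$ together with the boundary constraint $v_n^*(x_i)=1$ on $X^s$. First I would clear the inverse and rewrite the fixed-point equation as $\widetilde{D} v_n^* = \widetilde{W}^{(s)} v_n^*$, then examine a single coordinate,
\begin{equation*}
\widetilde{D}_{ii}\,v_n^*(x_i)=\sum_{x_j\in X_n}\widetilde{W}^{(s)}_{ij}\,v_n^*(x_j),
\end{equation*}
splitting into the two cases dictated by the definitions of $\widetilde{D}$ and $\widetilde{W}^{(s)}$.

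For a source node $x_i\in X^s$ the definitions give $\widetilde{D}_{ii}=1$ and $\widetilde{W}^{(s)}_{ii}=1$ with every other entry of that row equal to zero, so the $i$-th equation collapses to the tautology $v_n^*(x_i)=v_n^*(x_i)$; the actual value there is pinned to $1$ by the constraint carried along from Lemma~\ref{lemma:Solution_EMV_for_LVE}, which is precisely the ``along with $v_n^*(x_i)=1$'' clause. For a non-source node $x_i\in X_n\setminus X^s$ we instead have $\widetilde{D}_{ii}=\rho+\sum_{x_j}W_{ij}$ and $\widetilde{W}^{(s)}_{ij}=W_{ij}$, so the coordinate equation reads
\begin{equation*}
\Big(\rho+\sum_{x_j\in X_n}W_{ij}\Big)\,v_n^*(x_i)=\sum_{x_j\in X_n}W_{ij}\,v_n^*(x_j).
\end{equation*}
I would then split the right-hand sum over $x_j\in X^s$ versus $x_j\in X_n\setminus X^s$, substitute $v_n^*(x_j)=1$ on the source part, and divide by $\widetilde{D}_{ii}$ to obtain the quotient of $\sum_{x_j\in X^s}W_{ij}+\sum_{x_j\in X_n\setminus X^s}W_{ij}\,v_n^*(x_j)$ over $\rho+\sum_{x_j}W_{ij}$.

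The final step passes from abstract edge weights to the kernel by inserting the metric-graph scaling $W_{ij}\propto k(x_i,x_j)$ from Definition~\ref{definition:grounded_metric_resistor_graph}: since the same normalization factor multiplies every term in both numerator and denominator, it cancels, leaving exactly the advertised expression with $k(x_i,x_j)$ and the grounded weight $\rho$. I do not anticipate a genuine obstacle, as all the substance is already packaged in Lemma~\ref{lemma:Solution_EMV_for_LVE} and the rest is bookkeeping. The only two points that need care are (i) tracking the normalization consistently, i.e.\ checking that clearing the common factor leaves precisely the $\rho$ of the statement in the denominator, which effectively fixes what the symbol $\rho$ denotes after normalization; and (ii) being explicit that the displayed quotient is the characterization at \emph{non-source} nodes, while source values come from the boundary constraint, so that the two assertions jointly determine $v_n^*$ on all of $X_n$.
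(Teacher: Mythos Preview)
Your proposal is correct and is exactly the paper's approach: the paper's proof is a single sentence stating that the result follows from Lemma~\ref{lemma:Solution_EMV_for_LVE} by writing out each entry of the voltage vector with the kernel as weights, which is precisely your row-by-row expansion of $\widetilde{D}v_n^*=\widetilde{W}^{(s)}v_n^*$. Your version is in fact more explicit than the paper's, and your caveat (i) about tracking the normalization so that the constant in the denominator is identified with the stated $\rho$ is a genuine point the paper glosses over.
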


\begin{proof}
Follows from Lemma \ref{lemma:Solution_EMV_for_LVE}, by writing each element in the voltage vector explicitly, for a given choice of kernel $k$ as weights.

\end{proof}

Proposition \ref{corollary:Grounded_metric_voltage_function_expression} suggests a natural limit object for $v_n^*$.

\begin{theorem}\label{thm:existence_ground}
Let $\M$ be a metric space, and $\M^s$ denote a measurable set of source vertices. Let $\rho > 0$ be a scaling constant. Then there exists a unique map $v^*: \M \to [0, 1]$ such that $v^*(x) = 1$ for all $x \in \M_1$ and the following holds for all $x \in \M \setminus \M^s$: $$v^*(x) = \frac{\int_{\M^s} k(x,y)d\mu(y) + \int_{M \setminus M^s} v^*(y)k(x,y)d\mu(y)}{\rho + \int_\M k(x,y)d\mu(y)}.$$
\end{theorem}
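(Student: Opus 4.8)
The plan is to read the displayed identity as a fixed-point equation and solve it via the Banach fixed-point theorem. Define an operator $T$ acting on functions $v:\M\to[0,1]$ that are constrained to equal $1$ on $\M^s$, by setting $(Tv)(x)=1$ for $x\in\M^s$ and
\[
  (Tv)(x)=\frac{\int_{\M^s}k(x,y)\,d\mu(y)+\int_{\M\setminus\M^s}v(y)k(x,y)\,d\mu(y)}{\rho+\int_{\M}k(x,y)\,d\mu(y)}
  \qquad(x\in\M\setminus\M^s).
\]
A map satisfying the theorem is exactly a fixed point $v^*=Tv^*$. I would work in the space $\mathcal V$ of bounded measurable functions $v:\M\to[0,1]$ with $v\equiv1$ on $\M^s$, equipped with the supremum norm; this is a closed subset of the Banach space of bounded functions and is therefore itself a complete metric space.

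First I would verify that $T$ is well defined and maps $\mathcal V$ into itself. Writing $s(x)=\int_\M k(x,y)\,d\mu(y)$, the denominator is bounded below by $\rho>0$, so no division issue arises, and (with $k$ jointly measurable) $Tv$ is measurable as a ratio of measurable functions. For the range: since $v\ge 0$ and $k\ge0$ the numerator is nonnegative, so $(Tv)(x)\ge0$; and since $v\le1$ the numerator is at most $\int_{\M^s}k+\int_{\M\setminus\M^s}k=s(x)\le\rho+s(x)$, so $(Tv)(x)\le1$. Hence $Tv\in\mathcal V$.

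The heart of the argument is the contraction estimate. For $v,w\in\mathcal V$ and $x\in\M\setminus\M^s$ the constant term and the source integral cancel, leaving
\[
  |(Tv)(x)-(Tw)(x)|\le\frac{\int_{\M\setminus\M^s}|v(y)-w(y)|\,k(x,y)\,d\mu(y)}{\rho+s(x)}\le\|v-w\|_\infty\,\frac{s(x)}{\rho+s(x)},
\]
while the difference vanishes on $\M^s$. Because $k\le1$ and $\mu$ is a probability measure we have $s(x)\le1$, and since $t\mapsto t/(\rho+t)$ is increasing this gives $s(x)/(\rho+s(x))\le 1/(1+\rho)$ uniformly in $x$. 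Thus $\|Tv-Tw\|_\infty\le\frac{1}{1+\rho}\|v-w\|_\infty$ with $\frac{1}{1+\rho}<1$, so $T$ is a contraction. The Banach fixed-point theorem then yields a unique $v^*\in\mathcal V$ with $Tv^*=v^*$, which is precisely the asserted map; pointwise uniqueness on $\M^s$ is the imposed constraint and on $\M\setminus\M^s$ it is forced by the equation.

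The step I expect to demand the most care is not the contraction inequality itself but the verification that the uniform contraction factor stays strictly below $1$: this rests on the boundedness $k\le1$ together with $\mu$ being a probability measure (giving $s\le1$) and on $\rho>0$ (separating the denominator from the numerator). If the kernel were unbounded or the grounding weight $\rho$ vanished, the pointwise factor $s(x)/(\rho+s(x))$ could approach $1$ and the sup-norm contraction would break down — the analytic reflection of exactly the trivial-limit pathology that grounding is designed to prevent. A secondary technical point is ensuring joint measurability of $k$, so that $T$ preserves measurability and the completeness of $\mathcal V$ can be invoked.
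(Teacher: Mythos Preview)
Your proposal is correct and essentially the same as the paper's proof: both recast the displayed identity as a fixed-point equation for an affine operator and establish existence and uniqueness via the contraction estimate $\|Tv-Tw\|_\infty\le\frac{1}{1+\rho}\|v-w\|_\infty$, derived from $k\le1$, $\mu$ a probability measure, and $\rho>0$. The only cosmetic difference is that the paper builds the Cauchy sequence $u_{i+1}=A_\mu u_i+b_\mu$ by hand (starting from $u_0=0$) and verifies the limit properties explicitly, whereas you invoke the Banach fixed-point theorem directly on the complete metric space $\mathcal V$; the underlying analysis is identical.
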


Finally, to prove convergence, we must extend our solutions, $v_n^*$ over grounded metric resistor graphs to the entire metric space $\M$. 

\begin{definition}\label{defn:extended_solution}
Let $v_n^*$ be the EMV over grounded graph $(X_n, W, \rho)$. For all $x \in \M$, we define the \textbf{extension} of $v_n^*$ as the map $v_n^*: M \to [0, 1]$ satisfying the following: $v_n^*(x) = 1$ if $x \in \M^1$, and otherwise, $$v_n^*(x) = \frac{\sum_{i=1}^n k(x, x_i)v_n^*(x_i)}{\sum_{i=1} k(x, x_i)}.$$ 
\end{definition}

\begin{theorem}\label{thm:convergence_ground}
Fix $\rho$ and $M_1 \subseteq M$. Let $v_n^*$ denote the extension of the EMV over the grounded graph, $(X_n, W, \rho)$, and $v^*$ denote the limit object described in Theorem \ref{thm:existence_ground}.  Then for any $x \in \M$, the sequence $v_1^*(x), v_2^*(x), \dots $ converges to $v^*(x)$ in probability (taken over the randomness of sampling $X_n \sim \mu^n$).
\end{theorem}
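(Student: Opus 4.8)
The plan is to exhibit both the limit $v^*$ and every finite-sample extension $v_n^*$ as the unique fixed point of an averaging operator, to show these operators are uniform sup-norm contractions, and then to inherit convergence from a law of large numbers for the underlying empirical averages.

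\emph{Setting up the operators.} Because $v^* = 1$ on $M^s$, the source integral in Theorem \ref{thm:existence_ground} can be absorbed into the main integral, so $v^*$ is the fixed point of the operator $T$ defined by $(Tw)(x) = \big(\int_M w(y)k(x,y)\,d\mu(y)\big)\big/\big(\rho + \int_M k(x,y)\,d\mu(y)\big)$ for $x \notin M^s$ and $(Tw)(x)=1$ for $x\in M^s$. Proposition \ref{corollary:Grounded_metric_voltage_function_expression} together with Definition \ref{defn:extended_solution} shows that the extended $v_n^*$ is the fixed point of the empirical operator $\widehat{T}_n$ obtained by replacing $\mu$ with $\tfrac1n\sum_i \delta_{x_i}$. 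The decisive structural point---the same one flagged in Remark \ref{remark:rowsum_of_WHat}---is that $\rho>0$ forces both operators to be strict contractions in $\|\cdot\|_\infty$: since $k\le 1$ and $\mu$ is a probability measure, $\int_M k\,d\mu\le 1$ and $\tfrac1n\sum_i k(x,x_i)\le 1$, whence for two competitors agreeing on $M^s$ the Lipschitz constant is at most $\alpha := 1/(1+\rho) < 1$, uniformly in $x$ and in the sample.

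\emph{Propagating the error.} Let $\delta_n = \max_{x_i\in X_n}|v_n^*(x_i)-v^*(x_i)|$, where source nodes contribute $0$. Writing, at each sample point, $v_n^*(x_i)-v^*(x_i) = \widehat{T}_n v_n^*(x_i) - T v^*(x_i)$ and inserting $\pm\,\widehat{T}_n v^*(x_i)$ splits the error into a contraction term, bounded by $\alpha\delta_n$, and the empirical-versus-population discrepancy $\epsilon_n(x):=|\widehat{T}_n v^*(x) - Tv^*(x)|$ evaluated at the \emph{fixed} function $v^*$. Maximizing over sample points gives $\delta_n\le \alpha\delta_n + \sup_{x\in M}\epsilon_n(x)$, i.e. $\delta_n\le (1-\alpha)^{-1}\sup_{x\in M}\epsilon_n(x)$. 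It then remains to show $\sup_{x}\epsilon_n(x)\to 0$ in probability; since $\epsilon_n(x)$ is assembled from empirical averages of the $[0,1]$-valued integrands $v^*(y)k(x,y)$ and $k(x,y)$ against their $\mu$-integrals, with denominator bounded below by $\rho$, it suffices to establish uniform-in-$x$ convergence of these two averages. Finally, fixing the evaluation point $x\in M$ and splitting once more yields $|v_n^*(x)-v^*(x)|\le \alpha\delta_n + \epsilon_n(x)$, where $\epsilon_n(x)\to 0$ by a single-point Hoeffding bound; together with $\delta_n\to 0$ this proves $v_n^*(x)\to v^*(x)$ in probability.

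\emph{Main obstacle.} The crux is the uniform law of large numbers $\sup_{x\in M}\epsilon_n(x)\to 0$, since the points at which $\widehat{T}_n$ is evaluated are the very same random samples, so a naive union bound over the $n$ points fails due to dependence. I would resolve this using compactness of $M$ together with regularity of $k$: for a continuous kernel the family $\{k(x,\cdot):x\in M\}$ is uniformly equicontinuous, so Hoeffding over a finite $\varepsilon$-net plus equicontinuity gives uniform control, while for the radial kernel $\mathbf 1(d(x,\cdot)\le r)$ the relevant indicator class has finite VC dimension and standard uniform-deviation bounds apply. This uniform step, rather than the contraction bookkeeping, is where the real work lies.
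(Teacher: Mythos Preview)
Your proposal is correct and follows the same contraction-plus-empirical-approximation skeleton as the paper: both arguments split $v_n^*-v^*$ into a contraction term (bounded by $\alpha\,\delta_n$) and the discrepancy $\epsilon_n=|\widehat T_n v^*-Tv^*|$, then solve for $\delta_n$. The genuine difference is in how the ``main obstacle'' is resolved. You upgrade $\max_{x_i\in X_n}\epsilon_n(x_i)$ to $\sup_{x\in M}\epsilon_n(x)$ and then appeal to a full uniform law of large numbers, via equicontinuity and $\varepsilon$-nets for continuous $k$, or VC theory for the radial kernel. The paper stays with the smaller quantity $\max_{x_i\in X_n}\epsilon_n(x_i)$ and handles the dependence you flagged by a leave-one-out trick: for each $x_i$, it observes that $x_i$ is independent of $S\setminus\{x_i\}$, applies the single-point Hoeffding bound to $T_{\mu_{S\setminus x_i}}v^*(x_i)$, and then shows that adding the single point $x_i$ back perturbs the empirical operator by at most $O(1/(n\rho))$; a union bound over the $n$ sample points then gives $\|v^*-T_{\mu_S}v^*\|_S\le\epsilon$ with probability $1-4n\exp(-c(n-1)\rho^2\epsilon^2)$. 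This is more elementary---no covering numbers, no VC dimension---and, unlike your route, works for \emph{any} measurable kernel $k:M\times M\to[0,1]$ without continuity or finite-VC assumptions. Your approach, on the other hand, is the standard empirical-process hammer and would give the same conclusion under the specific kernels the paper actually uses.
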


\subsection{Bounding the shape of the voltage function}
\label{section:Bounding the shape of the voltage function}We have shown that the grounded metric voltage converges to a non-trivial function in the large sample limit. In this section, we want to gain insights into the shape of this voltage function. Throughout this analysis we assume the radial kernel $k(x, y) = d(||x - y || \leq r)$, and restrict the analysis to the unit sphere $S^{d-1}$. Furthermore, we assume a uniform density and use the Lebesgue measure, $\mu(A) = vol(A)$ to have a distribution to draw from.

We want to build a grounded metric graph on the sphere. Suppose the source region $\M_1$ consists of the density contained in a ball $B(x_s, r_s)$ of radius $r_s$ centered on the source landmark $x_s$, where  $x_s$ is a point on the sphere. 
We can then construct a grounded metric resistor graph as described in Def \ref{definition:grounded_metric_resistor_graph}, with resistance to ground $\rho$. In the previous sections, we showed that a non-trivial voltage function $v^*: S^{d-1} \to [0, 1]$ exists for this setting. For our configuration, we denote this function $\lambda$.
When the dimension $d$ is fixed, this function is determined by three parameters, namely the source radius $r_s$, the kernel radius $r$, and resistance to ground $\rho$. In this setting, we have the following bounds on the shape of the grounded metric voltage function:

\begin{theorem}\label{thm:single_landmark_bounds} Let $z \coloneqq d_M(x_s, x) = \arccos(\langle x_s, x \rangle)$ be the geodesic distance from the source landmark $x_s$ to a point $x$ in the unit sphere. Furthermore, let $\phi(r) \coloneqq d_m(x_c, x)$ be the arch-length (geodesic) between two points that have euclidean distance $r$, where $r$ is also the radius of the kernel $k$ used to construct the grounded metric graph. Let $r_s, r, \rho$ be fixed. It then exists a unique map $\lambda: S^{d-1} \to [0, 1]$ satisfying the following properties:
\begin{equation*}
    \lambda(x) = \frac{\int_{B(0, r_s)} k(x, y)d\mu(y) + \int_{S^{d-1}} k(x,y)\in d(y \in S^{d-1} \setminus B(0, r_s))\lambda(y)d\mu(y)}{\rho + a}
\end{equation*}
for $x \notin B(0, r)$ where $a$ denotes the volume of the ball of radius $r$, $\lambda(x) = 1$ for $x \in B(0, r)$ and for $t\in [1, \infty)$
\begin{center}
\begin{itemize}
\item $\lambda$ is radially symmetric and monotonically decreasing in distance outside of $B(0, r_s)$. In particular, there exists a function $h$ such that $\lambda(x) = h(z)$. 
\item (Upper Bound) For $z \geq 2r$, $  h(z + t \phi(r)) \leq \exp{(-t\ln{(1+2\rho/a)})}$.
\item (Lower Bound) There exists a constant $\Gamma$ s.t. $h(z + t \phi(r/2)) \geq \exp{(-t\ln{((a+\rho)/\Gamma)}}$
\end{itemize}
\end{center}
\end{theorem}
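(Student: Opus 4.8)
The existence and uniqueness of $\lambda$ are inherited directly from Theorem \ref{thm:existence_ground}, so my plan concentrates on the three shape properties. The first thing I would record is a simplification special to the sphere: because $S^{d-1}$ is homogeneous and $\mu$ is uniform, the normalizing denominator is a constant, $\rho + \int_{S^{d-1}} k(x,y)\,d\mu(y) = \rho + a$, where $a$ is the volume of a radius-$r$ cap. The fixed-point relation then collapses to the clean averaging form $\lambda(x) = \tfrac{1}{\rho+a}\int_{B(x,r)}\lambda\,d\mu$ for $x\notin B(x_s,r_s)$, with $\lambda\equiv 1$ on the source (the source term is just the part of this integral where $\lambda=1$). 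For \emph{radial symmetry}, any rotation $\mathcal{R}$ fixing the axis through $x_s$ preserves the source ball, the kernel, and $\mu$, so $\lambda\circ\mathcal{R}$ solves the same equation with the same boundary values; uniqueness in Theorem \ref{thm:existence_ground} forces $\lambda\circ\mathcal{R}=\lambda$, and since these rotations act transitively on each geodesic sphere $\{d_M(x_s,\cdot)=z\}$ I get $\lambda(x)=h(z)$. For \emph{monotonicity} I would show the averaging operator $T$ maps radial non-increasing functions to radial non-increasing functions and note that $\lambda$ is the limit of the iterates $T^m\mathbf{1}_{B(x_s,r_s)}$ (the power method of Lemma \ref{lemma:Solution_EMV_for_LVE}, whose convergence is underwritten by the row-sum contraction $a/(\rho+a)<1$), a class that is closed under pointwise limits. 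The preservation step is a reflection argument: for $x_1,x_2$ on a common geodesic through $x_s$ with $d(x_s,x_1)<d(x_s,x_2)$, reflecting across their perpendicular-bisector hyperplane $P$ swaps $B(x_1,r)$ and $B(x_2,r)$, so their symmetric difference decomposes as $A_1\cup\sigma_P(A_1)$ with $A_1$ on the source side, giving $T\lambda(x_1)-T\lambda(x_2)=\tfrac{1}{\rho+a}\int_{A_1}(\lambda-\lambda\circ\sigma_P)\ge 0$ since points of $A_1$ are no farther from $x_s$ than their reflections.

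\emph{Upper bound.} Evaluating the averaging identity at $x$ with $d(x_s,x)=z\ge 2r$ (which keeps the relevant kernel balls clear of the source), I would split $B(x,r)$ by the totally geodesic hyperplane through $x$ orthogonal to the geodesic $x_sx$ into a near half $B^-$ and a far half $B^+$, each of measure $a/2$ by the reflection symmetry of the cap. The spherical law of cosines gives, for $y\in B^+$, $\cos d(x_s,y)\le \cos z\,\cos d(x,y)\le \cos z$, hence $d(x_s,y)\ge z$ and $\lambda(y)\le h(z)$; on $B^-$ every point lies at distance $\ge z-\phi(r)$, so $\lambda\le h(z-\phi(r))$. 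This yields $(\rho+a)h(z)\le \tfrac{a}{2}h(z-\phi(r))+\tfrac{a}{2}h(z)$, i.e. $h(z)\le \tfrac{1}{1+2\rho/a}\,h(z-\phi(r))$, and iterating $t$ times from $z$ with $h\le 1$ gives $h(z+t\phi(r))\le\exp(-t\ln(1+2\rho/a))$.

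\emph{Lower bound.} Symmetrically, I would restrict the averaging integral to a sub-cap of $B(x,r)$ aimed toward the source. For $x$ at distance $w+\phi(r/2)$, choose $x'$ on the geodesic toward $x_s$ at Euclidean distance $r/2$ from $x$, so $B(x',r/2)\subset B(x,r)$ and $d(x_s,x')=w$. The portion $A'=B(x',r/2)\cap\{d(x_s,\cdot)\le w\}$ (a fixed solid angle of directions pointing toward $x_s$) has measure bounded below by a constant $\Gamma=\Gamma(r,d)$, and there $\lambda\ge h(w)$. Hence $(\rho+a)h(w+\phi(r/2))\ge\int_{A'}\lambda\ge\Gamma h(w)$, i.e. $h(w+\phi(r/2))\ge\tfrac{\Gamma}{a+\rho}h(w)$; iterating from the source boundary, where $h\approx 1$ serves as the base of the recursion, produces $h(z+t\phi(r/2))\ge\exp(-t\ln((a+\rho)/\Gamma))$.

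\emph{Main obstacle.} The genuinely delicate parts are the spherical-geometry estimates feeding both bounds. In the upper bound the clean factor $1+2\rho/a$ rests on the two half-caps having measure exactly $a/2$ together with the law-of-cosines sign $\cos d(x_s,y)\le\cos z$ on the far half, and this sign holds only for $z\le\pi/2$; I therefore expect the regime $z>\pi/2$ to require separate treatment, most naturally by arguing that the voltage is already exponentially small there so the bound is vacuous or can be re-derived by symmetry. In the lower bound, pinning down a uniform positive constant $\Gamma$ for the source-facing sub-cap — uniformly over the relevant range of $w$ and as $w$ approaches $\pi/2$ — is the step I expect to need the most care, as is making the monotonicity-preservation reflection argument fully rigorous (verifying that $A_1$ indeed lies on the source side and that $\sigma_P$ is measure-preserving on the sphere).
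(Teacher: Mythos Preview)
Your proposal is correct and follows essentially the same route as the paper: existence/uniqueness from Theorem~\ref{thm:existence_ground}, radial symmetry via the symmetry of the operator (the paper phrases it through the iterates rather than uniqueness, but this is equivalent), monotonicity via a reflection argument carried through the iteration, and then the same one-step recursions $h(z)\le \frac{1}{1+2\rho/a}\,h(z-\phi(r))$ and $h(z)\ge \frac{\Gamma}{a+\rho}\,h(z-\phi(r/2))$ obtained by splitting the kernel ball and iterated to get the exponential bounds. Your lower-bound region (a sub-ball $B(x',r/2)$ aimed toward the source) differs cosmetically from the paper's choice (the portion of $B(x,r)$ with $d(x_s,\cdot)\le z-\phi(r/2)$), but both produce a fixed positive mass $\Gamma$ on which $\lambda\ge h(z-\phi(r/2))$; and your worry about the regime $z>\pi/2$ is legitimate but the paper does not treat it either---its ``at most half the mass has $v(y)\ge v(x)$'' claim has the same issue near the antipode.
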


Theorem \ref{thm:single_landmark_bounds} shows that the voltage function $\lambda$ is essentially bounded between two functions that exponentially decay with respect to $z=d_M(x_s, x)$, the geodesic distance from $x$ to the source landmark located at $x_s$. We also note that the result in Theorem \ref{thm:single_landmark_bounds} can easily be translated to a Disk in $\bbR^d$ by replacing the geodesic distance with the euclidean distance. This holds because the symmetry arguments used to derive the result for a sphere are also true for a disk. Corollary \ref{coro:voltage_shape_disk} summarize this result.

\begin{corollary}\label{coro:voltage_shape_disk}
For a Disk in $\bbR^d$ we have the result in Theorem \ref{thm:single_landmark_bounds}, with the geodesic replaced by the euclidean distance, s.t. $z=d(x_s, x)$, and the following bounds on $h$:
\begin{center}
\begin{itemize}
\item (Upper Bound) For $z \geq 2r$, $  h(z + t r) \leq \exp{(-t\ln{(1+2\rho/a)})}$.
\item (Lower Bound) There exists a constant $\Gamma$ s.t. $h(z + t r) \geq \exp{(-2t\ln{((a+\rho)/\Gamma)}}$
\end{itemize}
\end{center}
\end{corollary}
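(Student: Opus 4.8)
The plan is to exploit the fact that the proof of Theorem~\ref{thm:single_landmark_bounds} is essentially geometry-agnostic: it relies only on (i) invariance of the domain and the radial kernel under the rotations fixing the source landmark, which forces $\lambda$ to be radially symmetric and monotonically decreasing away from the source, (ii) the integral fixed-point relation satisfied by $\lambda$, and (iii) volume estimates for the kernel mass $\mu(B(x,r))$ and for the shells at a fixed distance from the source. All three ingredients transfer verbatim when $S^{d-1}$ is replaced by a disk in $\bbR^d$ with source region $B(0,r_s)$ centered at the origin, since the disk is invariant under rotations fixing the origin and the Euclidean ball $B(x,r)$ again has constant mass $a$ at interior points. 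So I would first record that the monotonicity and radial-symmetry statements, together with the fixed-point identity displayed in the corollary, follow by repeating the spherical symmetrization argument in flat space.

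The only place where the spherical geometry enters the quantitative bounds is through the map $\phi$, which converts a Euclidean chord length into the corresponding geodesic arc length. On a flat disk the geodesic distance coincides with the Euclidean distance, so $\phi$ degenerates to the identity, $\phi(s)=s$. I would therefore substitute $\phi(r)=r$ directly into the upper-bound step of Theorem~\ref{thm:single_landmark_bounds} to obtain, for $z\ge 2r$,
\begin{equation*}
  h(z + tr) \leq \exp\!\big(-t\ln(1 + 2\rho/a)\big),
\end{equation*}
which is precisely the claimed upper bound with no further work.

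For the lower bound I would reuse the one-step contraction estimate from the sphere, which there advances the distance by $\phi(r/2)$ per step at the cost of a factor $\Gamma/(a+\rho)$. On the disk this step size is $\phi(r/2)=r/2$, so the flat analogue of the spherical bound reads $h(z + t(r/2)) \geq \exp(-t\ln((a+\rho)/\Gamma))$. To express this in increments of the full kernel radius $r$, I would re-index: advancing by $tr$ corresponds to $2t$ half-steps of size $r/2$, which doubles the exponent and yields
\begin{equation*}
  h(z + tr) \geq \exp\!\big(-2t\ln((a+\rho)/\Gamma)\big),
\end{equation*}
matching the stated bound, with $\Gamma$ the same geometric constant now evaluated for flat shell volumes.

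The main obstacle I anticipate is not the algebra of the substitution but verifying that the shell and cap volume estimates underlying the constant $\Gamma$ and the factor $1+2\rho/a$ genuinely hold on a bounded disk, where—unlike the closed manifold $S^{d-1}$—there is a boundary and $\mu(B(x,r))<a$ once $B(x,r)$ meets it. I expect such boundary effects only accelerate the decay in the upper bound and can be absorbed into $\Gamma$ for the lower bound, but confirming radial monotonicity of $\lambda$ all the way out to the boundary, rather than merely at interior points where the domain looks homogeneous, is the step that requires genuine care and where the flat-space version of the spherical symmetrization must be checked explicitly.
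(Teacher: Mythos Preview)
Your proposal is correct and follows essentially the same route as the paper: the paper justifies the corollary only by the one-line remark that ``the symmetry arguments used to derive the result for a sphere are also true for a disk,'' and the concrete content of that remark is exactly your substitution $\phi(s)=s$ in the sphere bounds together with the re-indexing $t\mapsto 2t$ that turns the $\phi(r/2)=r/2$ lower-bound step into an increment of $r$. Your discussion of boundary effects is in fact more careful than the paper, which does not address them at all.
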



\subsection{Examples}
\label{sect:examples_grounded_resistor_graphs_over_metric_spaces}

We demonstrate the convergence of the grounded metric graph voltage on several basic examples in Figure \ref{fig:voltageGrounded}.  

\begin{figure}[h!]
\centering
\begin{tabular}{ccc}
\includegraphics[width=.3\textwidth,height=.2\textwidth]{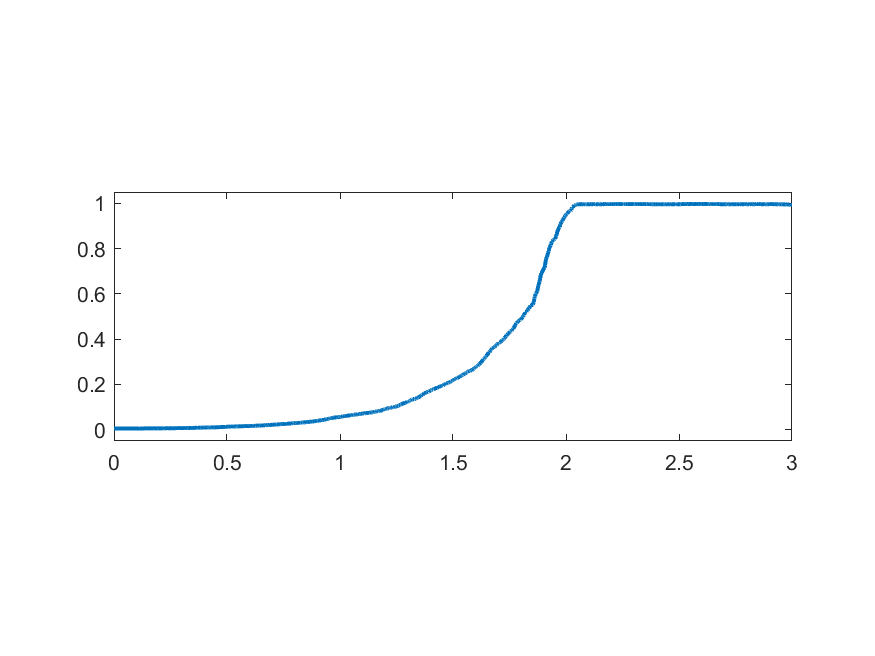} \vspace{-.75em}& \hspace{-3em}
\includegraphics[width=.3\textwidth,height=.2\textwidth]{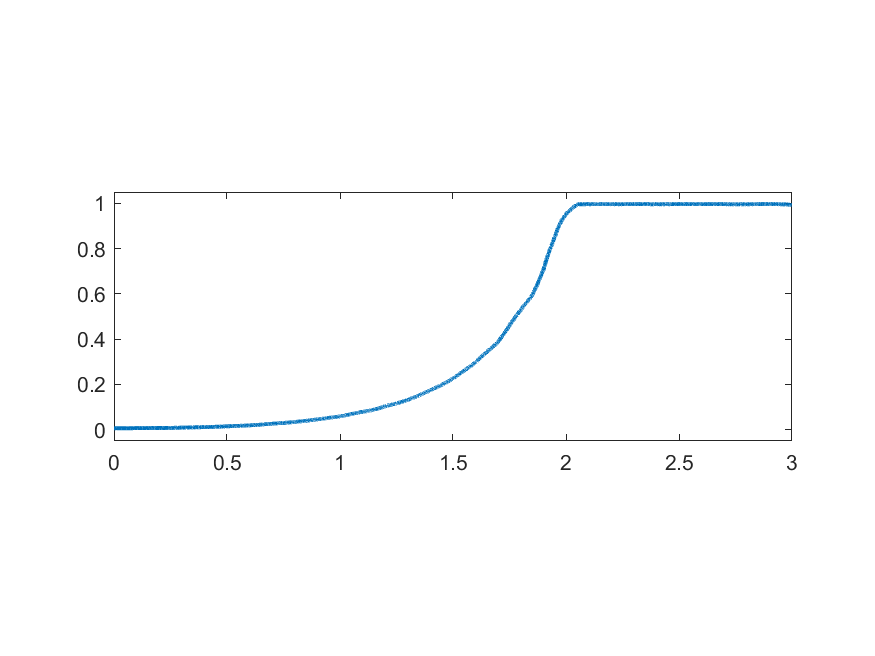} \vspace{-.75em} & \hspace{-3em}
\includegraphics[width=.3\textwidth,height=.2\textwidth]{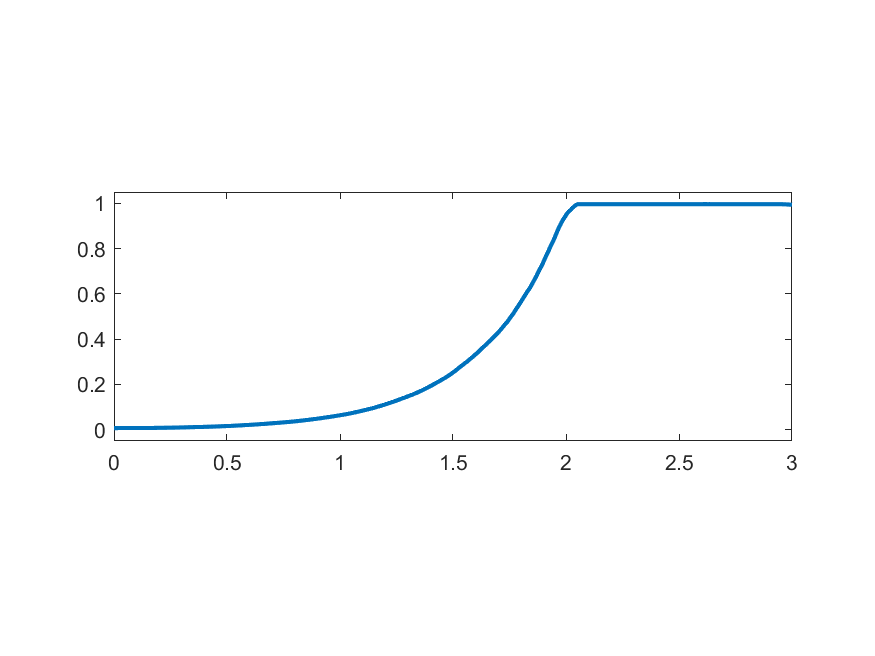} \vspace{-.75em} \\
\includegraphics[width=.3\textwidth,height=.2\textwidth]{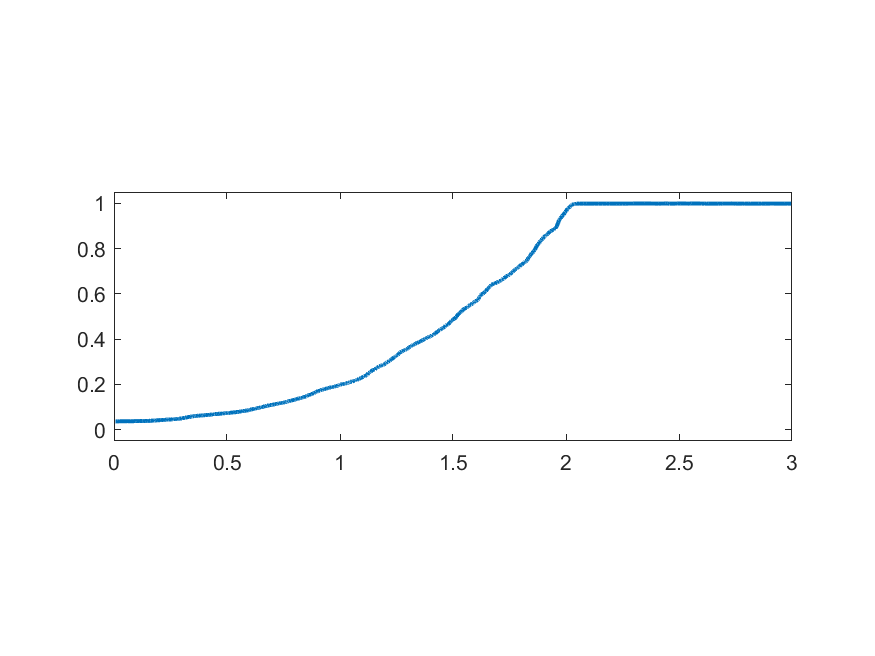} \vspace{-.5em}& \hspace{-3em}
\includegraphics[width=.3\textwidth,height=.2\textwidth]{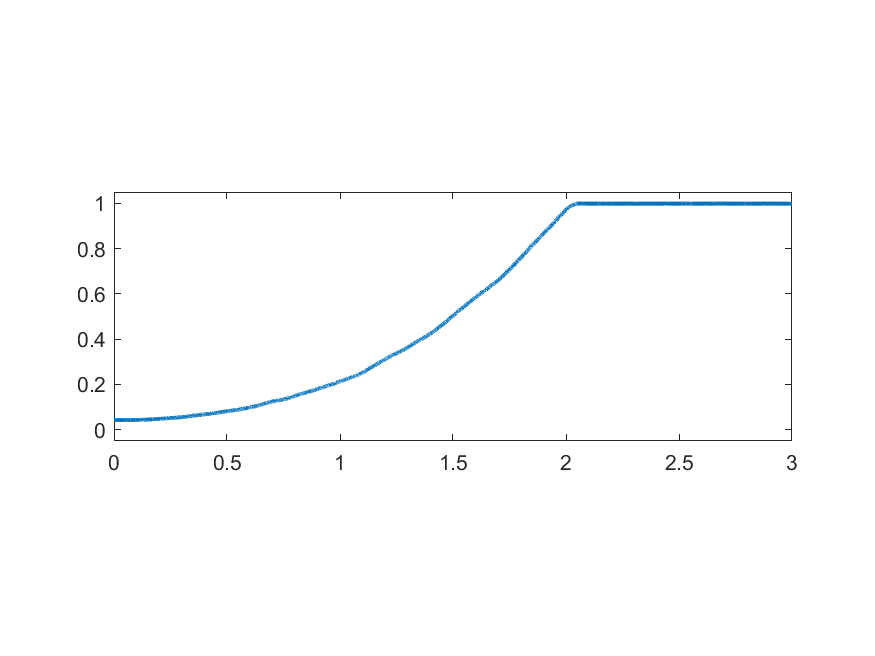} \vspace{-.5em} & \hspace{-3em}
\includegraphics[width=.3\textwidth,height=.2\textwidth]{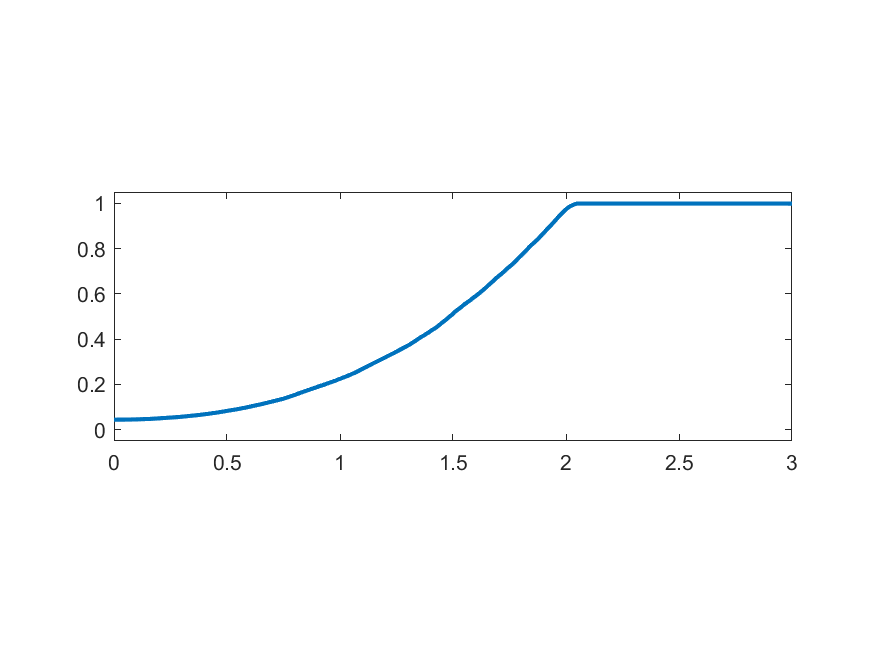} \vspace{-.5em} \\
\includegraphics[width=.3\textwidth]{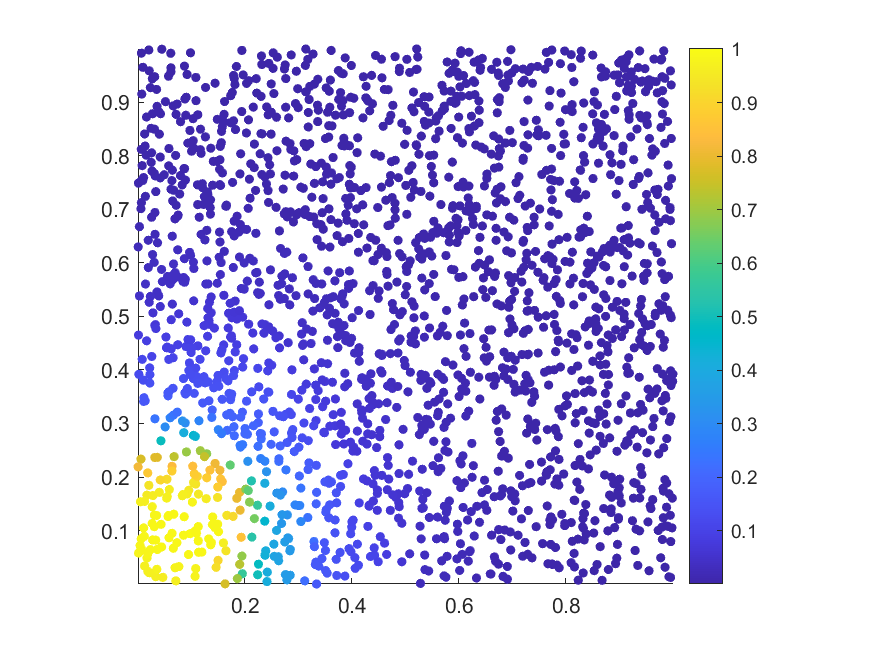} & \hspace{-3em}
\includegraphics[width=.3\textwidth]{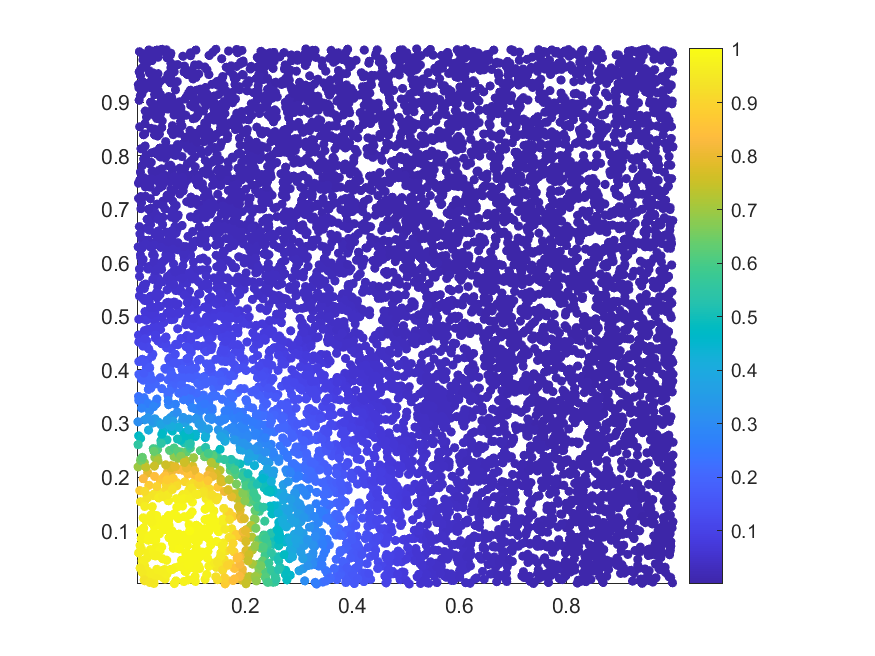} & \hspace{-3em}
\includegraphics[width=.3\textwidth]{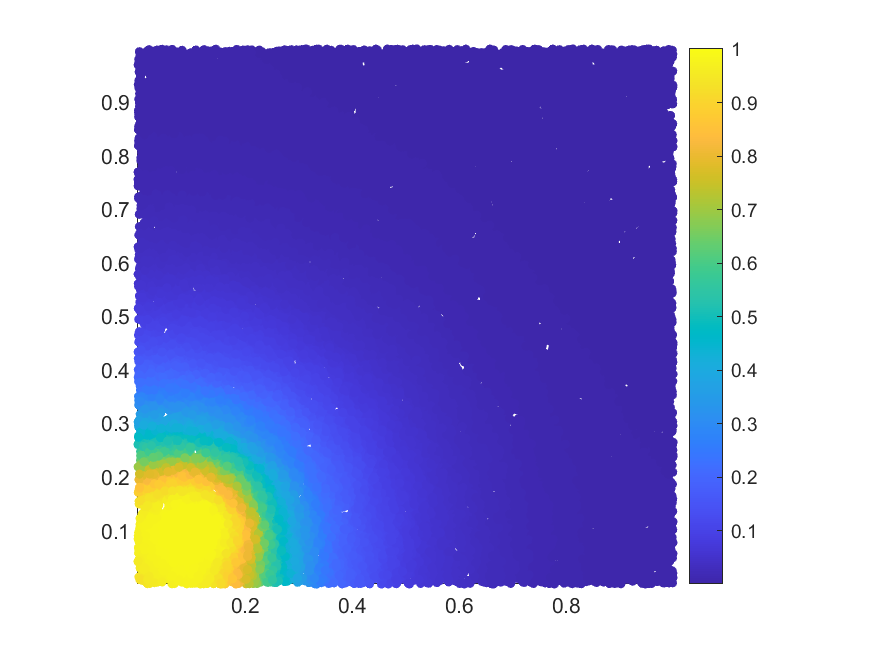}
\end{tabular}
\caption{Grounded metric graph voltage.  {\bf Top:} $\X_1$ on $[2,3]$ with large $\rho$ to encourage fast decay.  {\bf Middle:} $\X_1$ on $[2,3]$ with small $\rho$ to encourage slow decay.  {\bf Bottom:} $\X_1$ centered at $(.1,.1)$ with radius $0.1$ with large $\rho$.  {\bf Left:} $n=2^{11}$, {\bf Center:} $n=2^{13}$, {\bf Right:} $n=2^{15}$}\label{fig:voltageGrounded}
\end{figure}

The first is on the 1D line $[0,3]$, where $\X_1$ is a source of radius $0.5$ on the interval $[2,3]$, and the sink $\X_0$ is of radius $0.5$ on the interval $[0,1]$.  The second example is on the 2D unit square, with the source and sink on opposite corners of the square.  In both cases, the Laplacian is formed with radius $r=0.05$, and the number of points range from $2^{11}, 2^{13}, 2^{15}$.  We also vary $\rho$ to demonstrate the effect of the ground radius on the speed of decay of the voltage. In all cases, the voltage function is strictly non-increasing away from the source.


\section{Embedding using localized voltage functions}
\label{section:Embedding using localized voltage functions}
In this section, we show how the grounded metric voltage function from Def. \ref{def:grounded_metric_voltage_function} can be used as an embedding tool for various manifolds. The experiments in section \ref{sect:examples_grounded_resistor_graphs_over_metric_spaces} along with Thm. \ref{thm:single_landmark_bounds} demonstrate, for a unit sphere $S^{d-1}$ and a disk in $\bbR^d$, a consistent decay of the voltage solution away from the source. The advantage of this behavior is that the voltage solution gives information about how far other points in $\M$ are from the source vertex; points with high voltages must be close, whereas points with low voltages must be far. Our idea is that the decay property can be used to construct an embedding using a selection of independent voltage functions generated by sources distributed across the manifold.

Our first step is to show theoretically that an embedding exists for the unit sphere $S^{d-1}$. With a basis in the discussion in
\cite{li2017efficient} on spherical principal components analysis, we argue that there exists a large family of manifolds that can be approximated locally by sphere segments, which makes this result apply to a variety of cases beyond $S^{d-1}$. We illustrate the method's applicability on several manifolds, including a sphere and the MNIST data set. 







\subsection{Embedding of a unit sphere $S^{d-1}$}
We show theoretically that we can embed the unit sphere, $S^{d-1}$, using ground voltage vectors from $d$ voltage sources. We will use a disk kernel $k$ with bandwidth $0 < r < 1$, and we will also take source radius $r_s = r$. We begin with the notion of an $\epsilon$-injective embedding. We will also assume that our data distribution over $S^{d-1}$ is the uniform distribution.

\begin{definition}
Let $f: X \to Z$ be a map between metric spaces $X, Z$. For $\epsilon > 0$, we say that $f$ is \textbf{$s$-injective} if $$d(x, x') > \epsilon \implies f(x) \neq f(x').$$
\end{definition}

Our goal will be to find an $\epsilon$-injective embedding of $S^{d-1}$. To do so, we consider the standard embedding of $S^{d-1}$ in $\reals^d$, and let $||x- x'||$ denote the $\ell_2$ distance between points $x, x'$. Note that because the distance metric completely determines our voltage functions, it follows that any result for this setting will carry over to any isometric embedding of $S^{d-1}$.

Next, we characterize voltage functions over $S^{d-1}$. We let $\angle(x, x')$ denote the angle between them on the sphere. That is $\angle(x, x') = \arccos \langle x,x' \rangle.$ We also let $r' = \angle(x, x'): ||x - x'|| = r,$ denote the angle between two points with an $\ell_2$ distance of $r$. 

\begin{theorem}
Let $x_0 \in S^{d-1}$ be an arbitrary point, and let $v_0$ be the grounded voltage function centered at $x_0$ with ground resistance $\rho$, source radius $r_s = r$, and kernel function $k$ being the disk kernel with bandwidth $r$. Then there exists a function $f: [0, \pi] \to 1$ with the following properties. 

1. $f$ fully determines $v_0$. That is, $v_0(x) = f(\angle(x_0, x))$ for all $x \in S^{d-1}$).
 
2. $f$ satisfies $f(\theta) > f(\theta')$, for all $\theta' > \theta + r$ 

\end{theorem}

\begin{proof}
Property 1 immediately holds due to the rotational symmetry of the sphere about $x_0$. Furthermore, by the radial symmetry of the sphere, the same function $f$ must suffice for all $x_0 \in S^{d-1}$.

To show property 2, we first show that $f$ is weakly monotonic, that is $f(\theta) \geq f(\theta')$ if $\theta \leq \theta'$. To do so, let $A_*, b_*$ be the operator and function from Definition \ref{defn:affine} such that $v_0 = A_*v_0 + b_*$. As we showed earlier, we have that $$v_0 = \sum_{i=0}^\infty A_*^ib_0.$$ Thus, it suffices to show that all of the partial sums of this series are weakly monotonic. We do so through induction. 

The base case is trivial as $b_0$ is clearly weakly monotonic with respect to the geodesic distance. For the inductive step, suppose that $v_0^n = \sum{i=0}^n A_*^ib_0$ is weakly monotonic. Fix any $\theta \leq \theta'$ and let $x, x' \in S^{d-1}$ be points such that $\angle(x_0, x) = \theta, \angle(x_0, x') = \theta'$, and $x_0, x, x'$ all lie on a great circle (that is, along a geodesic).

Let $B$ denote the disk of radius $r$ centered at $x$ intersected with $S^{d-1}$, and $B'$ denote the analogous disk for $x'$. The key observation is that $B'$ is precisely the reflection of $B$ across the perpendicular bisector of $x, x'$ in $S^{d-1}$. Furthermore, this reflection is clearly an isometry and preserves the uniform measure $\mu$ over $S^{d-1}$. We let $\tau$ denote this reflection. Finally, observe that for all $y \in B \setminus (B \cap B')$, $$\angle(x_0, y) \leq \angle(x_0, \tau(y)).$$ This holds even in the extreme case where $x'$ is the antipodal point to $x_0$. 

We now substitute $\tau$ into the equation $v_0^{n+1} = A_*v_0^n + b_*$. Doing so, we have
\begin{equation*}
\begin{split}
v_0^{n+1}(x') &= A_*v_0^n(x') + b_*(x') = \frac{\int_{B'} k(x', y)v_0^n(y)d\mu(y)}{\rho + \int_{B'} k(x', y)d\mu(y)} + b_*(x') \\
&= \frac{\int_{B} k(x', y)v_0^n(\tau(y))d\mu(y)}{\rho + \int_{B'} k(x', y)d\mu(y)} + b_*(x') \\
&\leq \frac{\int_{B} k(x', y)v_0^n(y)d\mu(y)}{\rho + \int_{B'} k(x', y)d\mu(y)} + b_*(x') = v_0^{n+1}(x),
\end{split}
\end{equation*}
with the inequalities holding from our observations about $\tau$.

Finally, having shown that $v_0$ is weakly monotonic, we now turn to Property 2. Fix $\theta, \theta'$ and let $x, x'$ be such that $\angle(x_0, x) = \theta$, $\angle(x_0, x') = \theta'$. The key observation is that for all $y \in B(x', r)$, $\angle(x_0, y) > \theta$. This is from simple geometry as $\theta' > \theta + r'$. Thus, it consequently follows that $v_0(y) \leq v_0(x)$ for all such $y$. Substituting this, we have that 
\begin{equation*}
\begin{split}
v_0(x') &= A_*v_0(x') + b_*(x') = \frac{\int_{B'} k(x', y)v_0(y)d\mu(y)}{\rho + \int_{B'} k(x', y)d\mu(y)} + b_*(x') \\
&\leq \frac{\int_{B'} k(x', y)v_0(x)d\mu(y)}{\rho + \int_{B'} k(x', y)d\mu(y)} + b_*(x') \\
&\leq v_0(x) \frac{\int_{B'}k(x', y)d\mu(y)}{\rho + \int_{B'}k(x', y)d\mu(y)} + b_*(x') \\
\end{split}
\end{equation*}
with the last inequality holding since $b_*(x') = 0$ as $x'$ cannot be inside the source region as $\angle(x_0, x') > r'$.
\end{proof}

We now show how to obtain an $\epsilon$-injective embedding of $S^{d-1}$.

\begin{theorem}
Let $e_1, \dots, e_d$ denote the standard normal basis of $\reals^{d-1}$, and associate them as voltage sources on $S^{d-1}$. Let $v_1, \dots, v_d$ denote their respective voltage functions using a disk kernel of radius $r$ (and a source radius of $r$). Then the map $$x \mapsto (v_1(x), \dots, v_d(x))$$ is an $r'\sqrt{d}$-injective map, where $r' \in [0, \pi]$ denote the angle subtending a chord of length $r$ on the unit circle. 
\end{theorem}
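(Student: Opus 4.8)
The plan is to reduce the injectivity claim to a coordinate-wise angular estimate by exploiting the explicit radial form of each voltage function from the preceding theorem. Since the sources $e_1,\dots,e_d$ are the standard basis vectors of $\reals^d$ (each lying on $S^{d-1}$), and each voltage function is determined by the angle to its source, I would first write $v_i(x) = f(\angle(e_i,x)) = f(\arccos\langle e_i,x\rangle) = f(\arccos(x_i))$, where $x_i$ is the $i$-th coordinate of $x$ and $f$ is the common radial profile (the same $f$ for every source, by the rotational symmetry of the sphere). The only properties of $f$ I need are those already established: $f$ is weakly monotonic (non-increasing) and strictly decaying at scale $r'$, i.e. $f(\theta) > f(\theta')$ whenever $\theta' > \theta + r'$.

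I would argue by contrapositive: assume the images coincide, $v_i(x) = v_i(x')$ for all $i$, and deduce $\|x-x'\| \le r'\sqrt{d}$. Coincidence of the $i$-th coordinate voltages reads $f(\arccos(x_i)) = f(\arccos(x'_i))$. The strict-decay property then forces the two angles to be close: if (say) $\arccos(x_i) < \arccos(x'_i)$ with a gap exceeding $r'$, then $f$ would separate them strictly, a contradiction; hence $|\arccos(x_i) - \arccos(x'_i)| \le r'$ for every coordinate $i$.

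The remaining step converts these per-coordinate angular bounds into a single ambient Euclidean bound. Because cosine is $1$-Lipschitz, $|x_i - x'_i| = |\cos(\arccos(x_i)) - \cos(\arccos(x'_i))| \le |\arccos(x_i) - \arccos(x'_i)| \le r'$ for each $i$. Summing the squared coordinate differences gives $\|x-x'\|^2 = \sum_{i=1}^d |x_i - x'_i|^2 \le d\,(r')^2$, so $\|x-x'\| \le r'\sqrt{d}$. Taking the contrapositive, $\|x-x'\| > r'\sqrt{d}$ forces at least one coordinate voltage to differ, so the image map separates $x$ and $x'$, which is exactly $r'\sqrt{d}$-injectivity in the $\ell_2$ metric fixed just before the statement.

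The main obstacle is conceptual rather than computational: the earlier theorem supplies only one-sided strict decay at the single scale $r'$, not full injectivity of $f$, so I must phrase carefully how equal voltage values yield angular proximity within $r'$ (weak monotonicity together with the $r'$-scale strict inequality is what gives $|\alpha-\beta|\le r'$ from $f(\alpha)=f(\beta)$). Once that is pinned down, the Lipschitz estimate and the sum-of-squares are routine; indeed it is precisely the sum-of-squares structure of the $\ell_2$ distance that produces the clean $\sqrt{d}$ factor, so I would also be explicit that the metric in the definition of $\epsilon$-injectivity is the ambient Euclidean distance rather than the geodesic one.
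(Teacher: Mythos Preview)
Your proposal is correct and is essentially the contrapositive of the paper's own argument: the paper assumes $\|x-x'\|$ is large, pigeonholes to find a coordinate with $|\cos\theta_i-\cos\theta_i'|$ large, and then uses the same $1$-Lipschitz bound on cosine (via the product-to-sum identity) to force $|\theta_i-\theta_i'|>r'$ and hence $v_i(x)\neq v_i(x')$. Your careful handling of the fact that the previous theorem gives only $r'$-scale strict decay (not full injectivity of $f$) is exactly what is needed, and the paper uses the same property in the same way.
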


\begin{proof}
Let $x, x'$ be arbitrary points on $S^{d-1}$ with $||x - x'|| > r\sqrt{d}$. Let $\theta_i$ denote the geodesic angle distance of $x$ from $e_i$, and $\theta_i'$ be the same for $x'$. It follows that $$\sum_{i=1}^d (\cos \theta_i - \cos \theta_i')^2 > r^2d.$$ Thus, for some $i$, we must have $(\cos \theta_i - \cos \theta_i')^2 > r^2$. WLOG, this holds for $i = 1$. Applying this, we have 
\begin{equation*}
\begin{split}
r'^2 &< (\cos \theta_1 - \cos \theta_1')^2 = (2\sin \frac{\theta_1 + \theta_1'}{2}\sin\frac{\theta_1' - \theta_1}{2})^2 \\
&\leq 4\sin^2 \frac{\theta_1' - \theta_1}{2} \leq (\theta_1' - \theta_1)^2.
\end{split}
\end{equation*}
Thus $\theta_1' - \theta_1 > r'$ which implies $v_1(x') \neq v_1(x)$, as desired. 
\end{proof}

\subsection{Examples}
To support our theoretical results, we show numerically how the grounded metric voltage function can be utilized to embed the unit sphere. Furthermore, we illustrate our method on two real-world data sets, namely the Frey faces data-set \cite[Accessed: 2022-09-30]{dataset_webpage} and MNIST \cite{deng2012mnist}. 

In the experiments, we build an embedding by computing $m$ independent voltage functions $v^{(i)}_n$ from $m$ different sources $\widetilde{x}_i$ (landmarks), selected randomly from the manifold. From these voltages we then construct an $m$ dimensional embedding $Z = (v^{(1)}_n, \dots, v^{(m)}_n) \in \bbR^{n \times m}$. Since $m>3$ for our experiments, the embeddings can not be visualized directly. Because of this, we create a projection of the embedding into $d\leq 3$ dimensions using $X_{d} = U_d\Lambda_d \in \bbR^{n\times d}$, which corresponds to a multi-dimensional scaling embedding (MDS) \cite{dokmanic2015euclidean}. Here $Z_s = U \Lambda V^\top$ is a centering of $Z$ and $U_d, \Lambda_d$ are the $d$ leading eigenvectors and eigenvalues.

\paragraph{Unit sphere experiment}
We consider the embedding of the two first quadrants of the unit sphere $S^3$. Using $n=2^{13}$ points sampled i.i.d. from this sphere segment we calculate the voltage for $m=\{3, 5, 7, 9\}$ sources. In Fig. \ref{fig:embedding180sphere} we show the results from running these experiments. We see that increasing the number of sources gives an increasingly better embedding, which demonstrates the robustness of this algorithm w.r.t. choosing more sources than the intrinsic dimension. 

\begin{figure}[htb!]
    \centering
    \subfloat[\label{subfig:nlm3Ref}\protect\centering Sphere segment 3 sources ]{\includegraphics[width=0.24\textwidth]{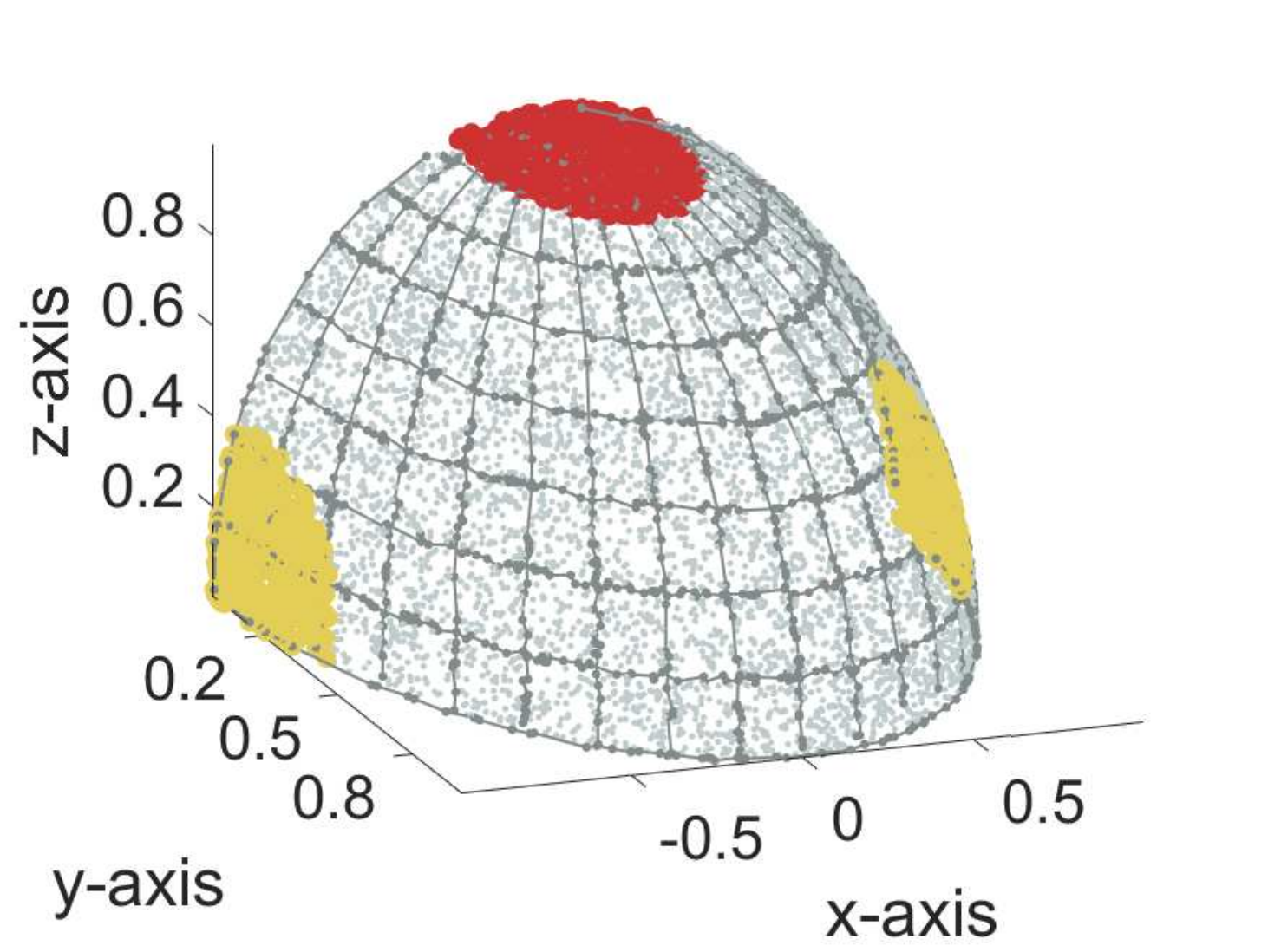} }
        \hfill
    \subfloat[\label{subfig:nlm5Ref}\protect\centering Sphere segment 5 sources ]{{\includegraphics[width=0.24\textwidth]{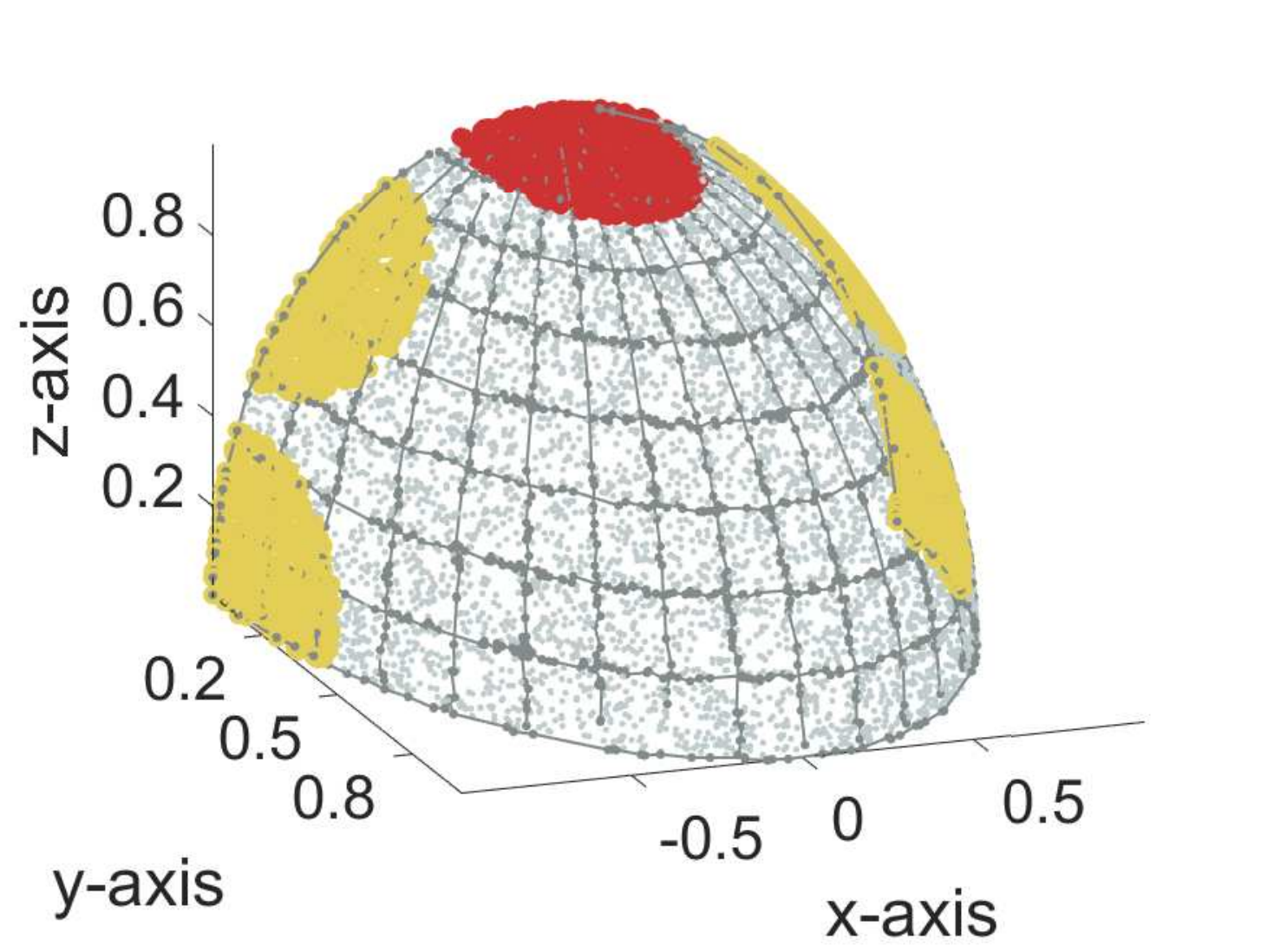} }}%
     \hfill
    \subfloat[\label{subfig:nlm7Ref}\protect\centering Sphere segment 7 sources ]{{\includegraphics[width=0.24\textwidth]{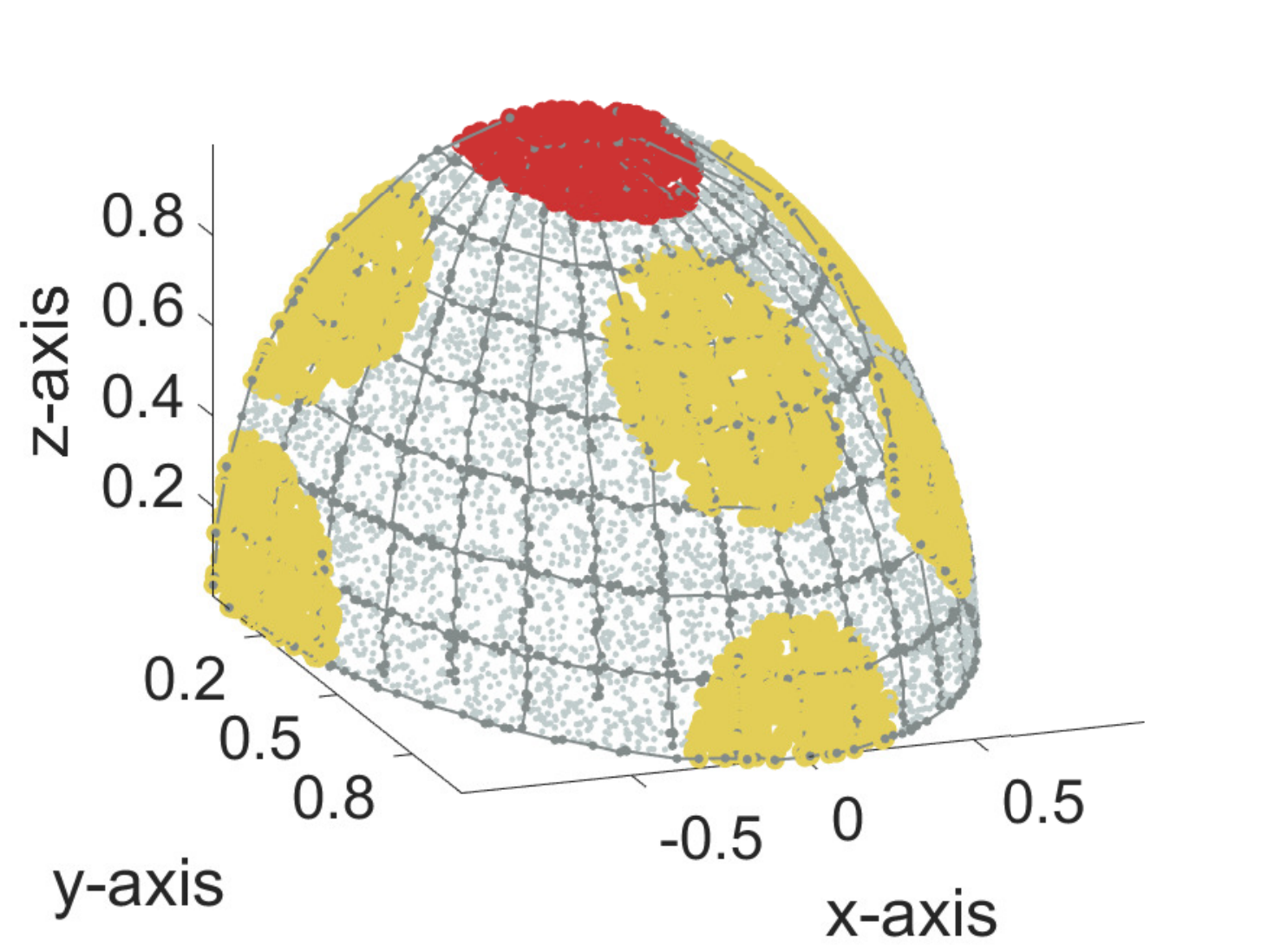} }}%
         \hfill
    \subfloat[\label{subfig:nlm9Ref}\protect\centering Sphere segment 9 sources ]{{\includegraphics[width=0.24\textwidth]{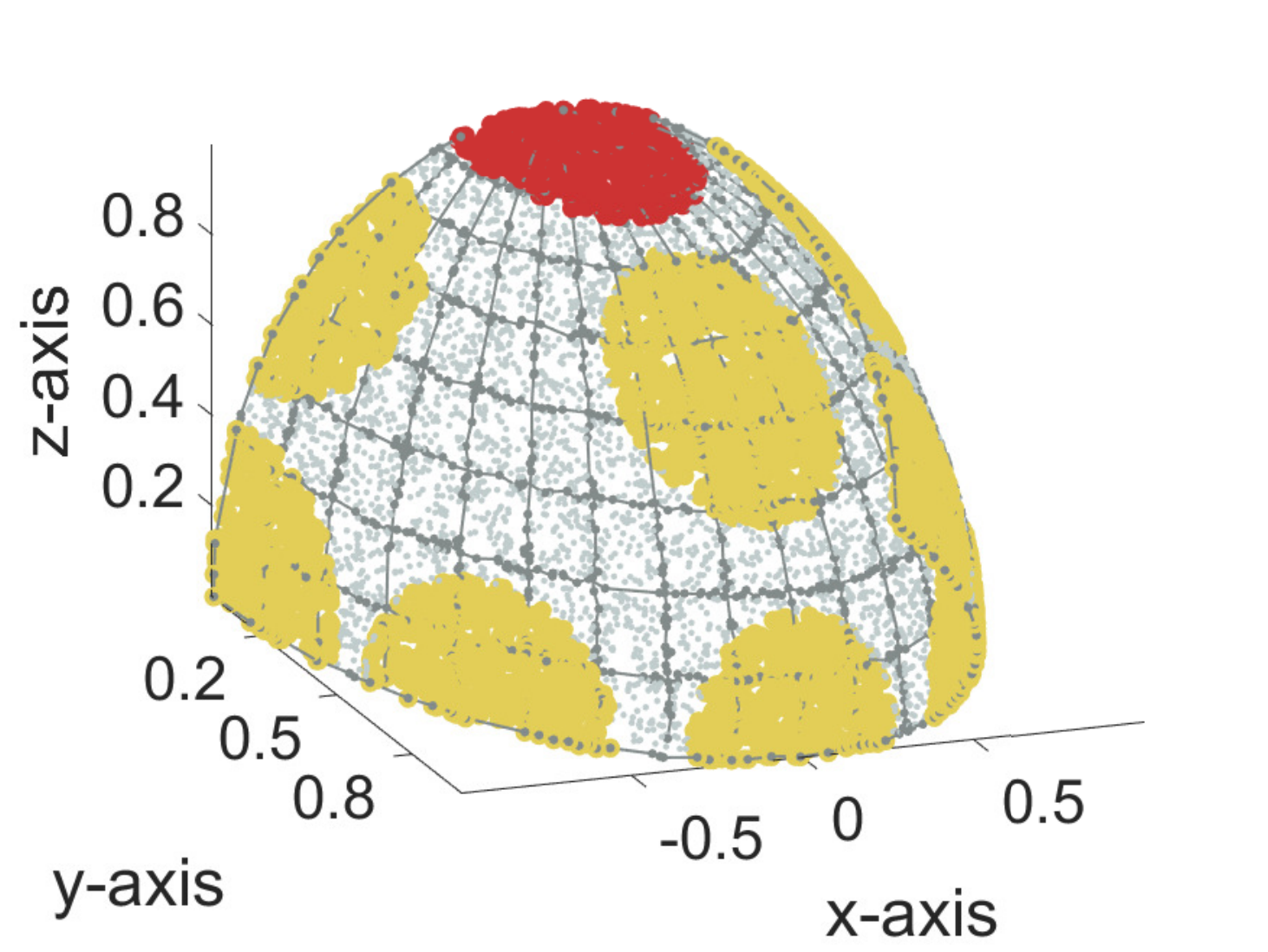} }}%
        \hfill
    \subfloat[\label{subfig:nlm3MDSemb}\protect\centering Embedding 3 sources ]{{\includegraphics[width=0.24\textwidth]{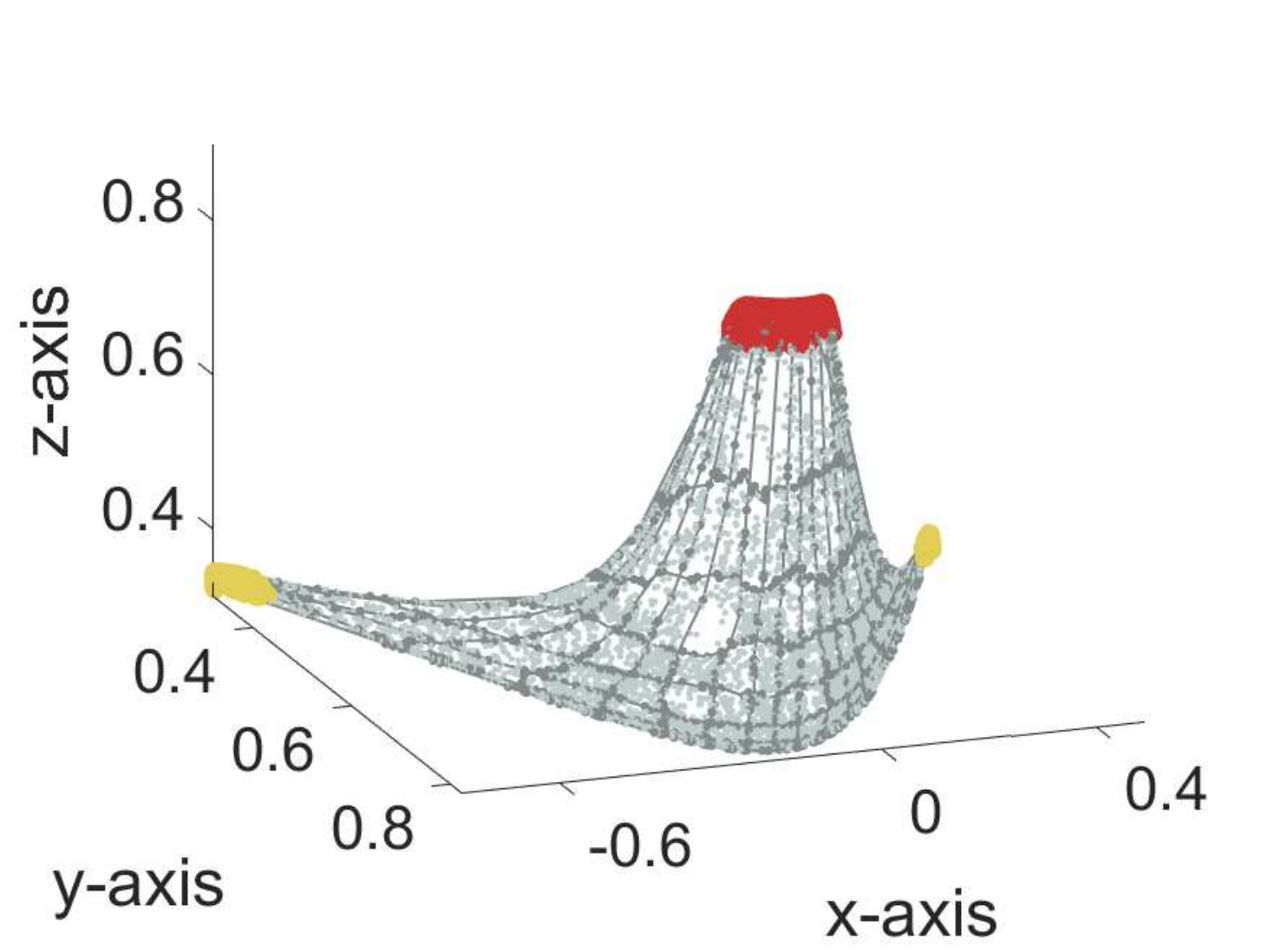} }}%
         \hfill
    \subfloat[\label{subfig:nlm5MDSemb}\protect\centering Embedding 5 sources ]{{\includegraphics[width=0.24\textwidth]{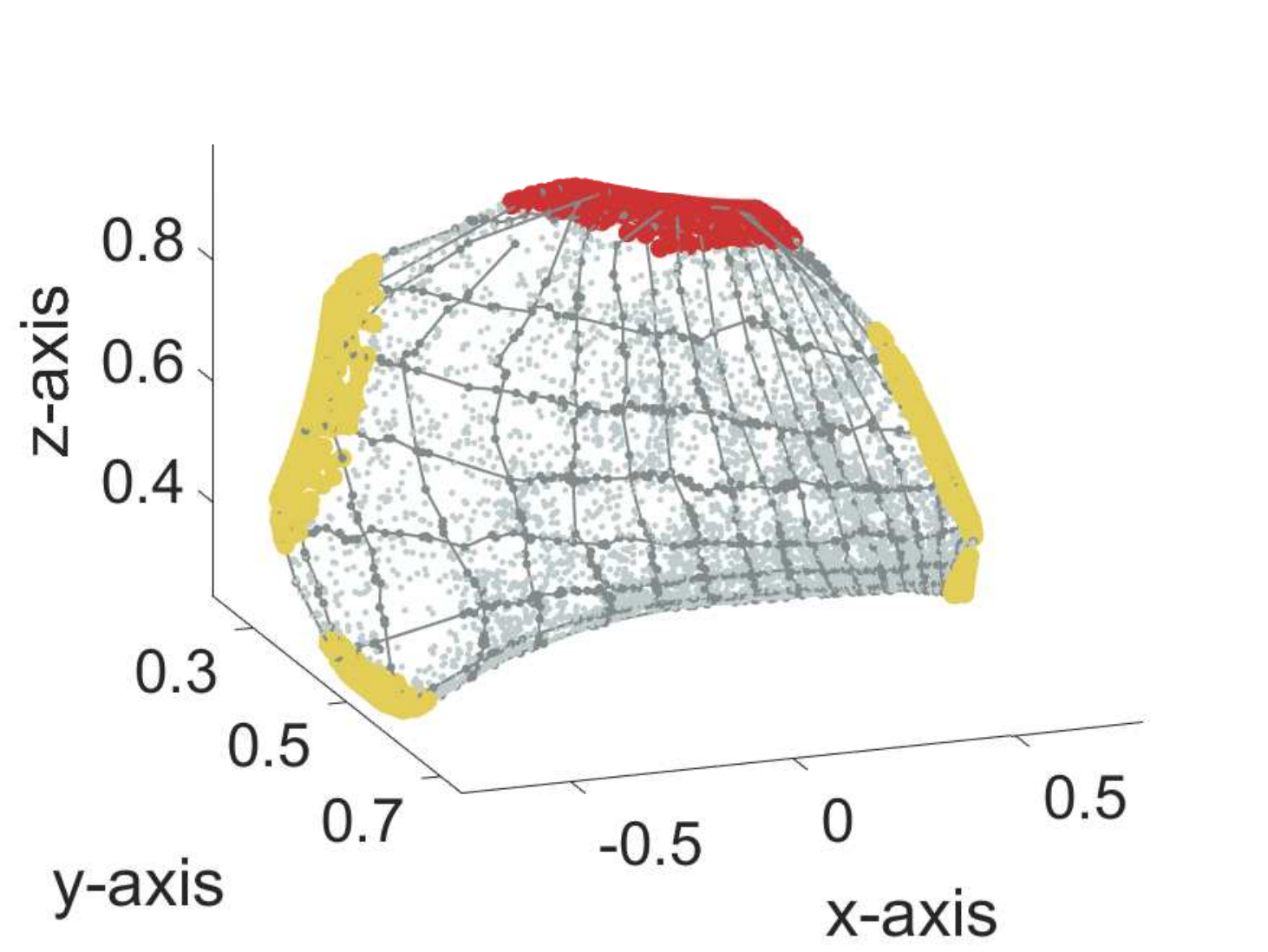} }}%
         \hfill
    \subfloat[\label{subfig:nlm7MDSemb}\protect\centering Embedding 7 sources ]{{\includegraphics[width=0.24\textwidth]{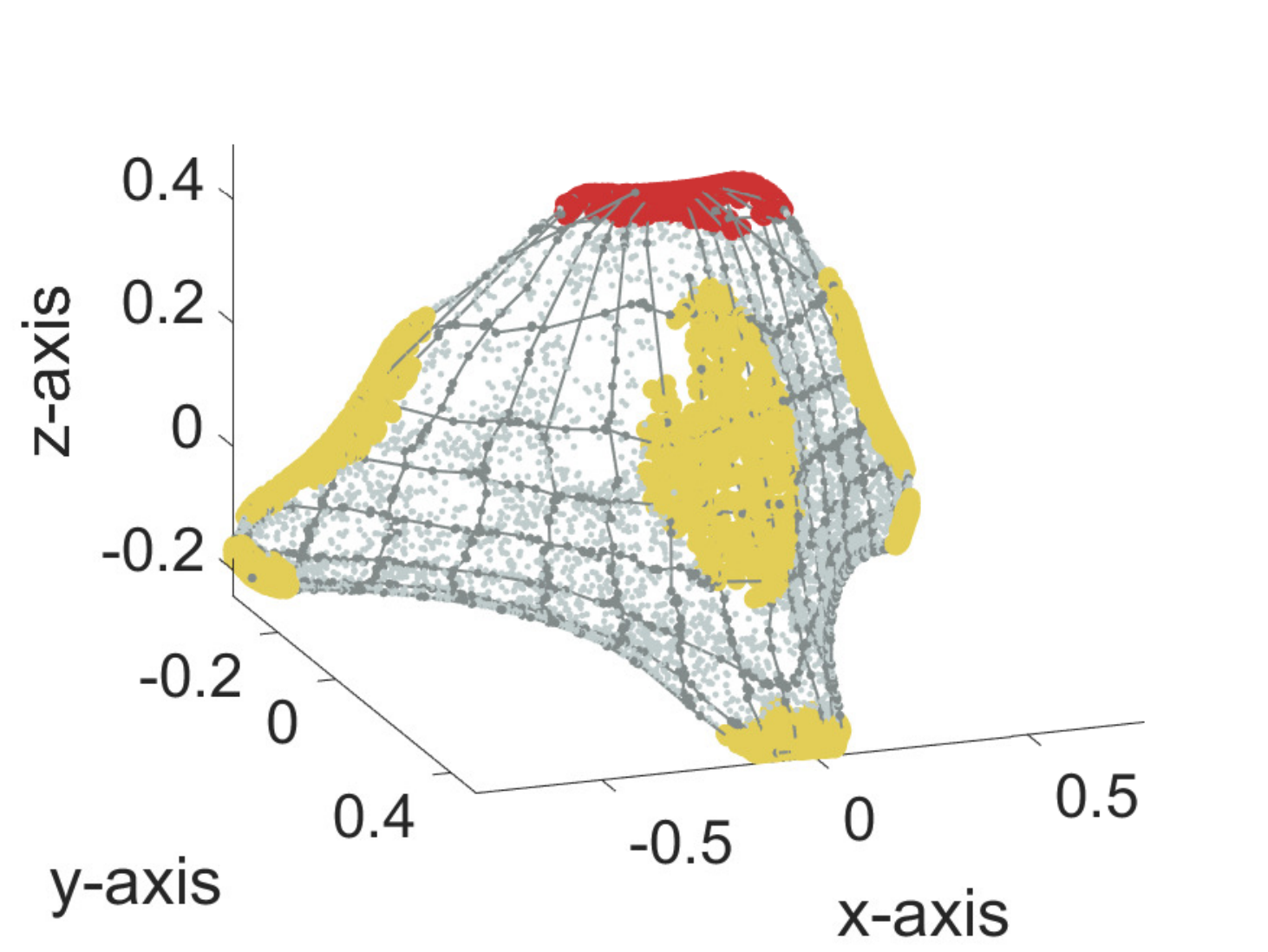} }}%
             \hfill
    \subfloat[\label{subfig:nlm9MDSemb}\protect\centering Embedding 9 sources ]{{\includegraphics[width=0.23\textwidth]{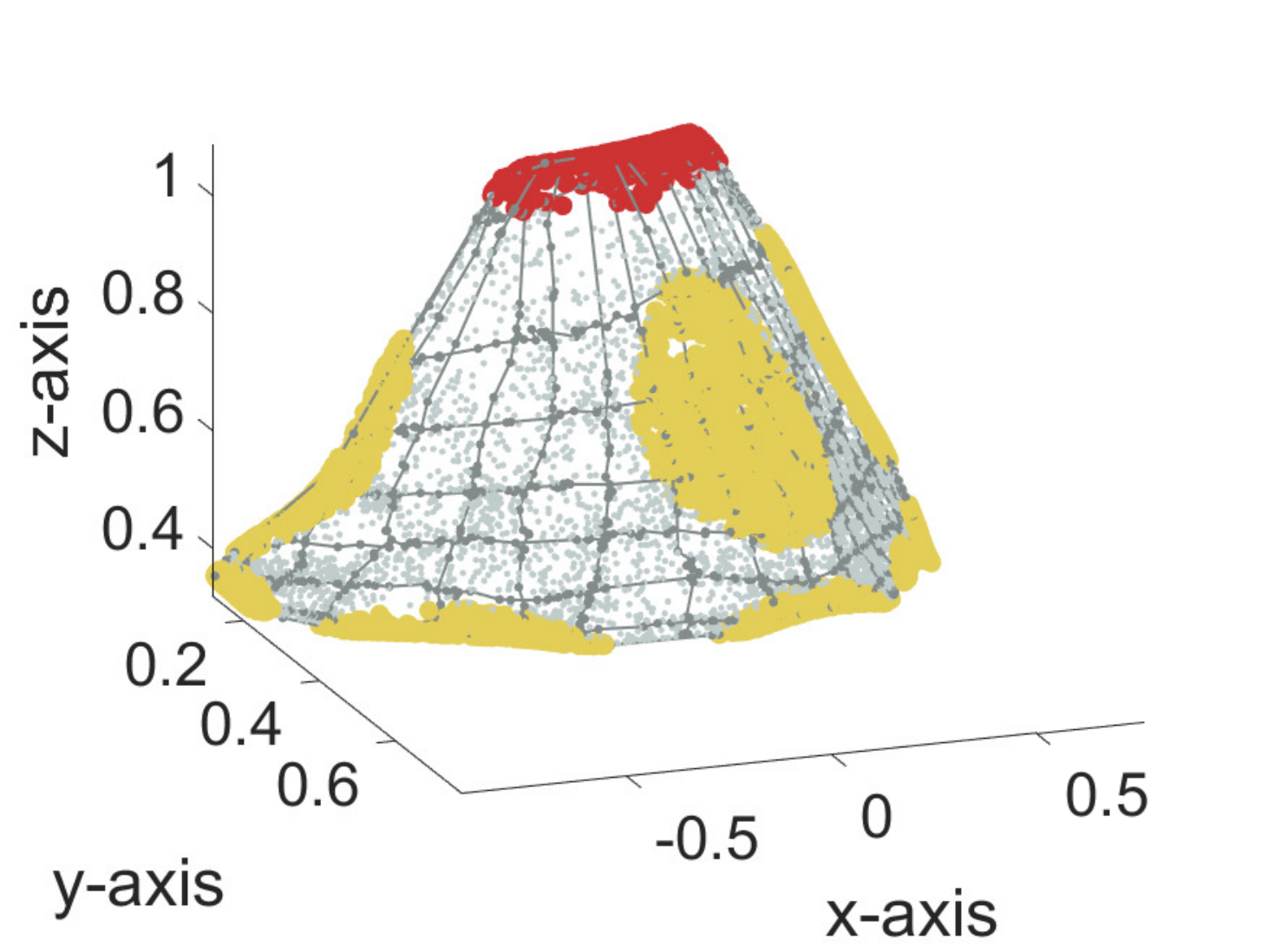} }}%
    \caption{Illustration of the embedding of the two first quadrants of a sphere using localized voltage sources. The yellow and red colored regions correspond to the sources. The upper row is the true manifold and the lower row is the recreation using $m=\{3, 5, 7, 9\}$ sources. In order to align the embedding with the initial data we used orthogonal Procrustes analysis \cite{dokmanic2015euclidean}.} 
    \label{fig:embedding180sphere}%
\end{figure}

\paragraph{Frey-face experiment}
The Frey face data set is taken from \cite[Accessed: 2022-09-30]{dataset_webpage} and consists of $1965$ images of size $20 \times 28$ of Brendan Frey's face. We generate two embeddings using respectively $m=\{3, 10\}$ landmark sources selected randomly from the images. Fig. \ref{fig:freyface_embedding} illustrates a 2-dimensional projection of these embeddings. From the figure we see that the images are clustered based on facial expressions. In particular, the images tend towards landmarks with a similar expression and change gradually as you move away. Comparing Fig. \ref{subfig:ff_embedding_3lm} and Fig. \ref{subfig:ff_embedding_10lm}, we see that additional landmarks open up local clusters by spreading the images out in between. 

\begin{figure}[htb!]
    \centering
    \subfloat[\label{subfig:ff_embedding_3lm}\protect\centering Embedding with 3 landmarks ]{{\includegraphics[width=0.44\textwidth]{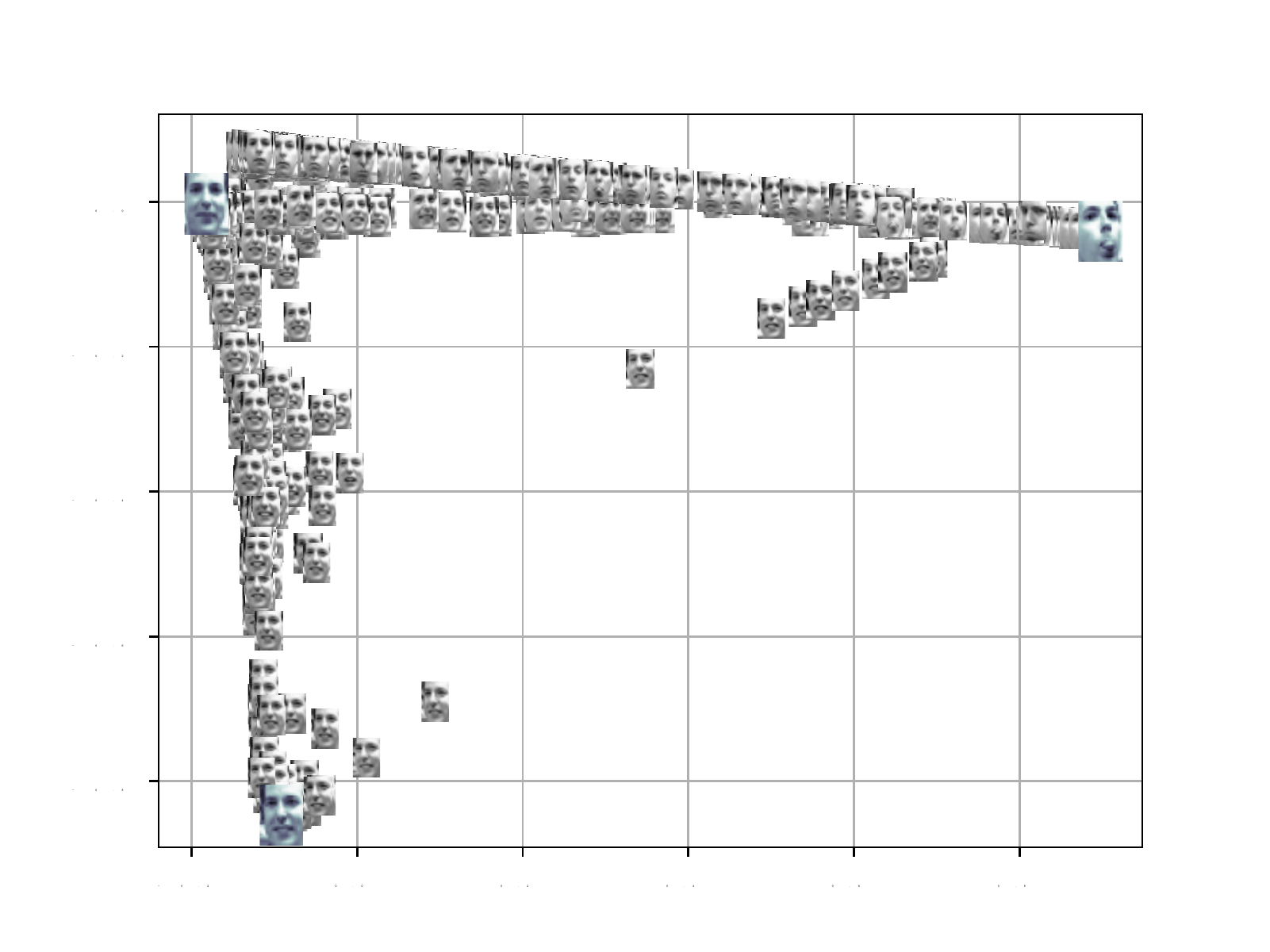} }}%
        \hfill    \subfloat[\label{subfig:ff_embedding_10lm}\protect\centering Embedding with 10 landmarks ]{{\includegraphics[width=0.44\textwidth]{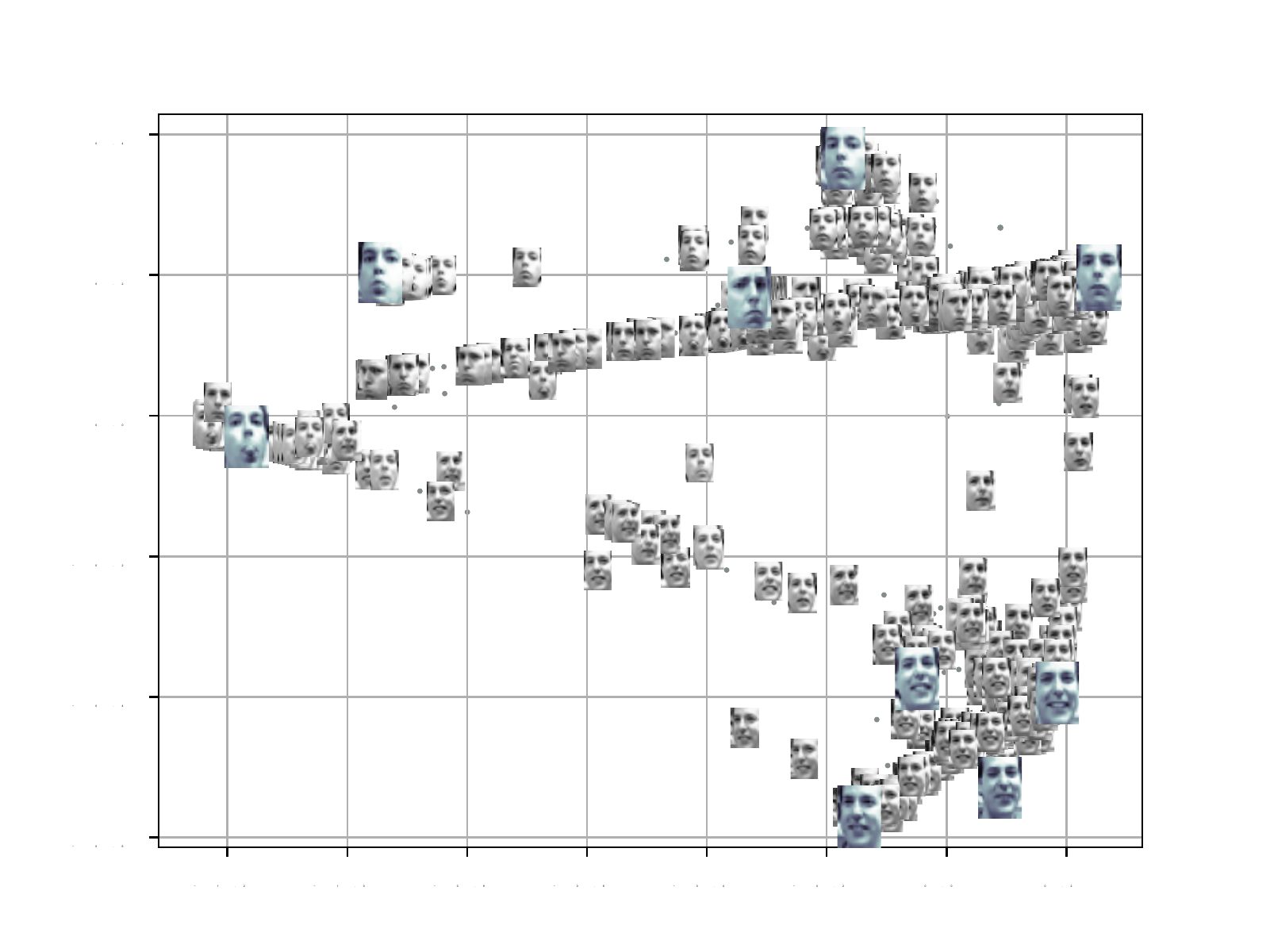} }}%
    \caption{Embedding of the Frey-face dataset. Larger images corresponds to landmarks.} 
    \label{fig:freyface_embedding}%
\end{figure}

\paragraph{MNIST experiment}
The MNIST data-set we use is taken from \cite[Accessed: 2022-09-30]{dataset_webpage} and consists of $60000$ $28\times 28$ images of handwritten digits. For our experiment, we extract $5000$ images from each of the digits $\{3,4\}$. From each digit, we also select at random $m=5$ landmarks. Fig. \ref{fig:mnist_embedding_nlm10} illustrates the embedding. In particular, Fig. \ref{subfig:mnist_embedding_10lm} shows how the embedding separates digit $3$ and $4$. Furthermore, we can see how the orientation/rotation of the digits determines their clustering, especially apparent for digit $4$. Selecting a subset of the landmarks, see Fig. \ref{subfig:mnist_local_vs_global_10lm}, we make a local embedding of their neighborhood as shown in Fig. \ref{subfig:mnist_local_10lm}.

\begin{figure}[htb!]
    \centering
    \subfloat[\label{subfig:mnist_embedding_10lm}\protect\centering MNIST embedding with 10 landmarks ]{{\includegraphics[width=0.32\textwidth]{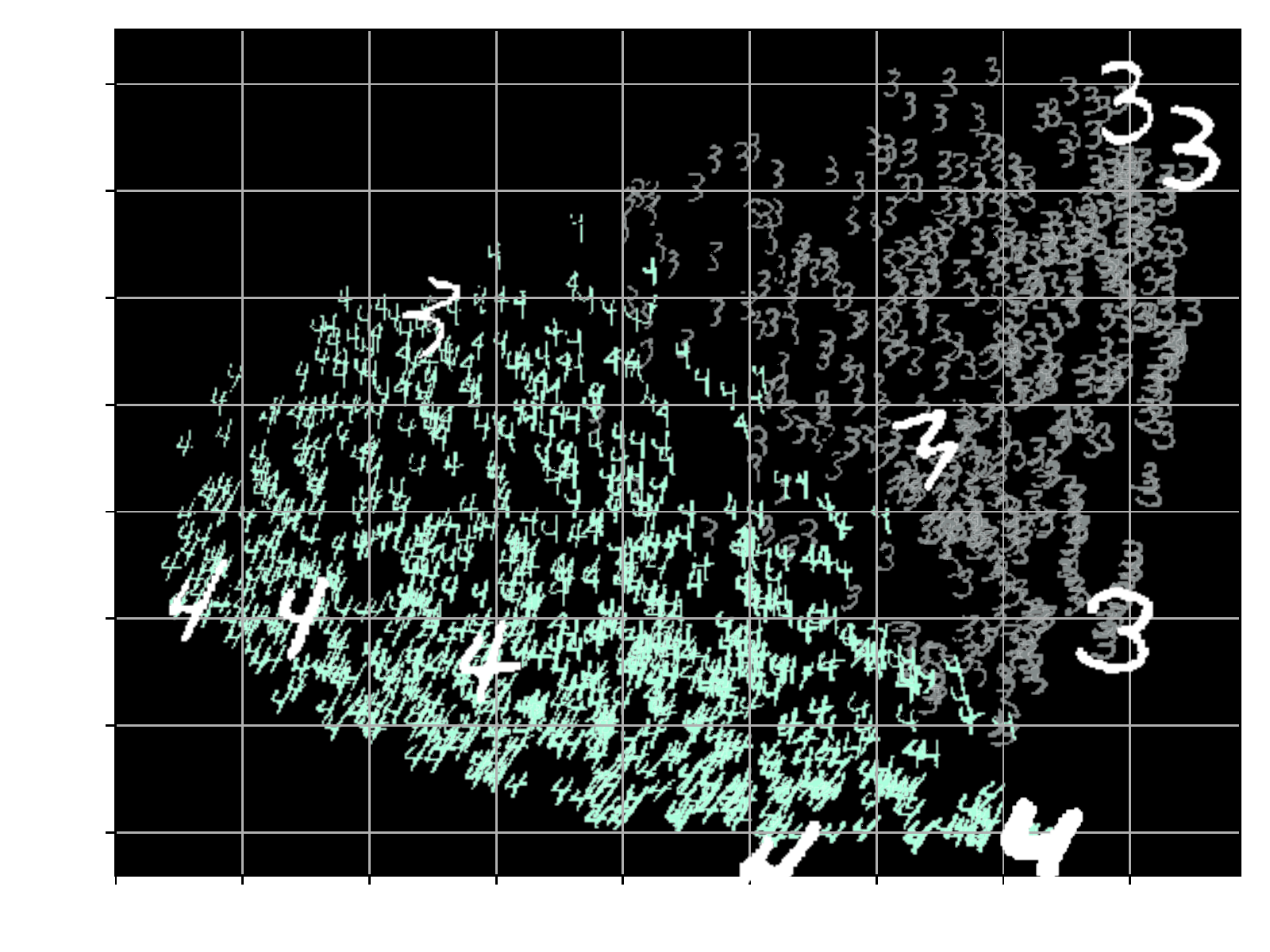} }}%
    \hfill
    \subfloat[\label{subfig:mnist_local_vs_global_10lm}\protect\centering Select a subset of landmarks and their neighborhood ]{{\includegraphics[width=0.32\textwidth]{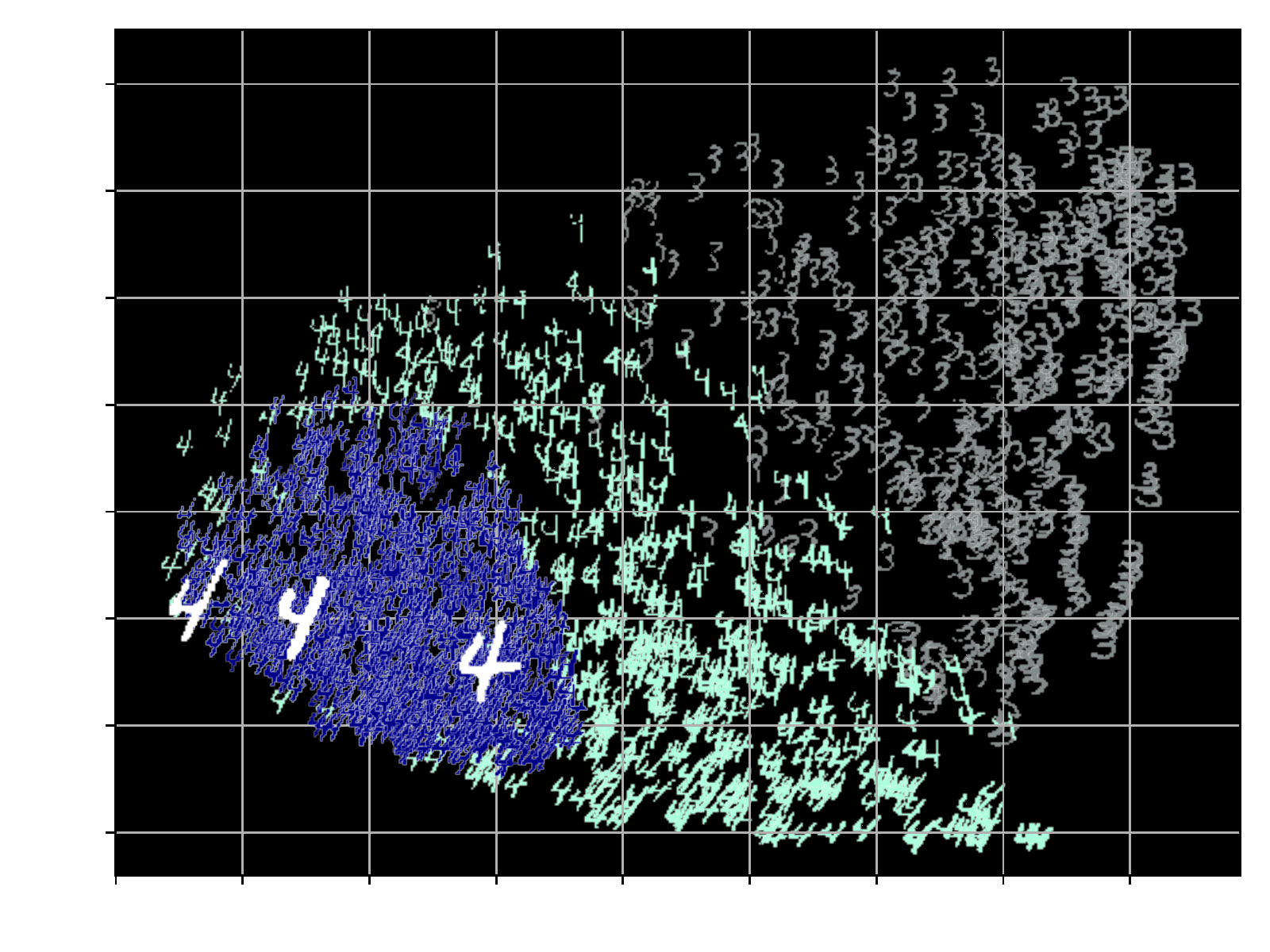} }}%
    \hfill
    \subfloat[\label{subfig:mnist_local_10lm}\protect\centering Local embedding of neighborhood of selected landmarks]{{\includegraphics[width=0.32\textwidth]{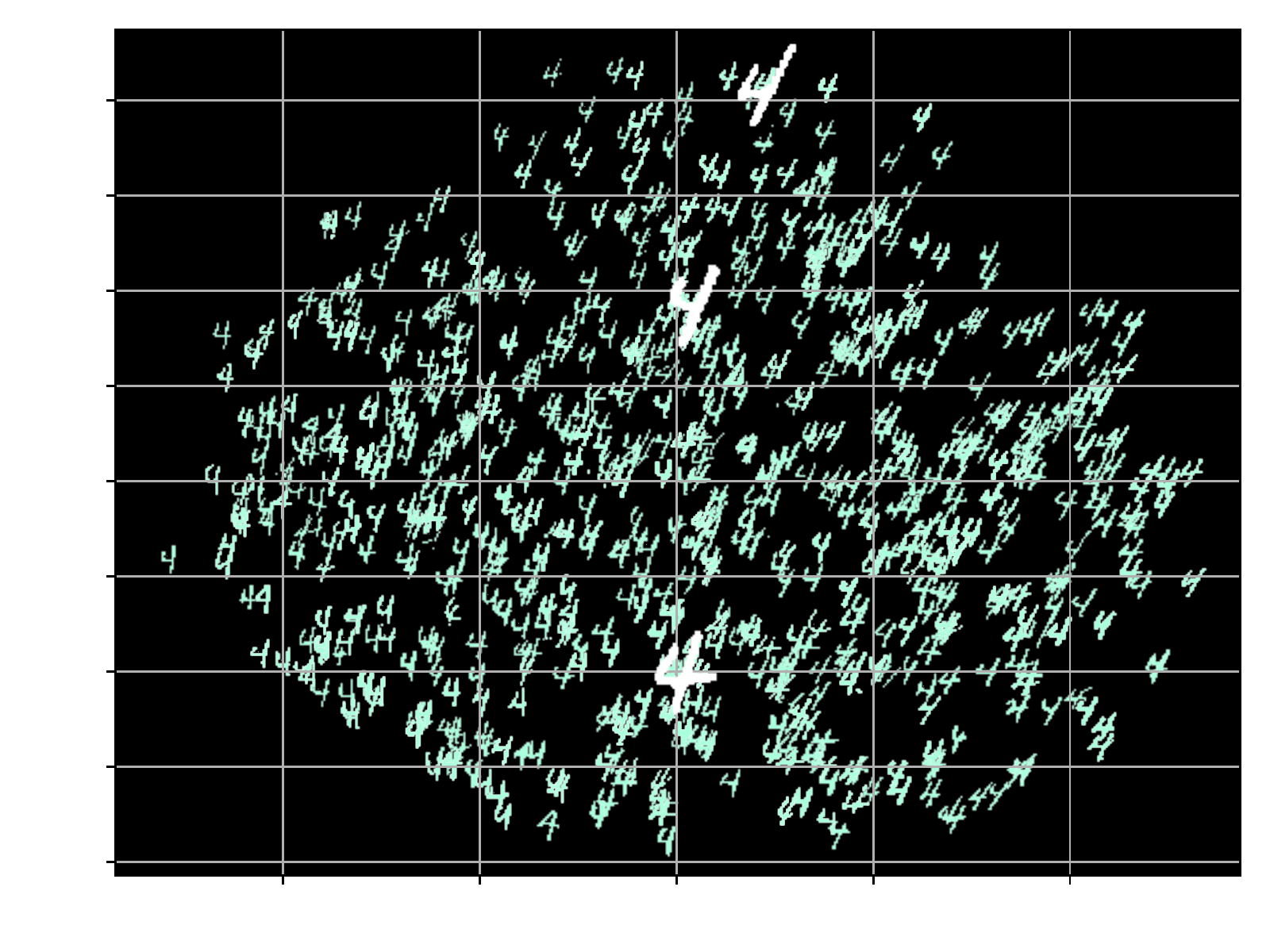} }}%
    \caption{Embedding of digit 3 and 4 from the MNIST dataset suing 10 landmarks and 5000 samples from each digit.} 
    \label{fig:mnist_embedding_nlm10}%
\end{figure}


\section{A note on computational efficiency} 
\label{section:A note on computational efficiency}
We have seen how the grounded metric voltage function decays exponentially with the distance from the source. This property can be thought of as a localization property, as the voltage will be negligible on most of the graph, except for finite support around the source.

The advantage of this localization is that when computing the voltage, only a subset of the graph needs to be used for any given source, reducing both memory and computational requirements. Importantly, the finite support of the voltage solution is a great alternative to traditional methods such as Laplacian eigenmaps, which rely on the computation of eigenfunctions that are typically global (meaning computations relies on the entire graph).

In the following, we characterize the effective support of the voltage solution and its dependency on the parameters of the grounded metric graph. Let $\M_1$ be a source region of radius $r$ as defined in Def. \ref{definition:grounded_metric_resistor_graph}. Furthermore, let $f : \M \rightarrow [0, 1]$ be a radially symmetric function over the metric space $(\M, d)$ such that $f(x) = h(d(x_1, x))$. For $\tau > 0$ and $x_1 \in \M_1$ we say that $f$ is localized around $x_1$ with support radius $r_{supp}$ if for $d(x_1, x) > r$,
\begin{equation*}
    h(x) \geq \tau \quad \text{for} \quad d(x_1, x) \leq r_{supp} \quad \text{and} \quad  h(x)< \tau \quad \text{for} \quad d(x_1, x) > r_{supp}.
\end{equation*}

\begin{corollary}(Locality of the voltage solution)
Consider a disk in $\bbR^d$. As a consequence of the exponential decay of the grounded metric voltage described in Theorem \ref{thm:single_landmark_bounds} and Corollary \ref{coro:voltage_shape_disk}, the voltage solution will be localized around the source region $\M_1$, with a support radius bounded by $r_{l} \leq r_{supp} \leq r_{u}$ where
\begin{equation*}
    r_{l} \coloneqq  \frac{\frac{r}{2}\log{ 1/\tau}}{\log{(Cr^d + \rho)/\Gamma}} \quad \text{and} \quad     r_{u} \coloneqq  \frac{r\log{ 1/\tau}}{\log{(1 + \rho/Cr^d)}}.
\end{equation*}
\label{corollary:span_of_support}
\end{corollary}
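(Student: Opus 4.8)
The plan is to read the support radius directly off the sandwich in Corollary~\ref{coro:voltage_shape_disk} by converting each exponential decay estimate into the distance at which it meets the threshold $\tau$. First I would substitute $a = Cr^d$, the volume of a radius-$r$ ball in $\bbR^d$ (with $C$ the volume of the unit ball), so that the upper bound reads $h(z + tr) \le \exp(-t\ln(1 + 2\rho/(Cr^d)))$ and the lower bound reads $h(z + tr) \ge \exp(-2t\ln((Cr^d + \rho)/\Gamma))$. Since $h$ is monotonically decreasing (Theorem~\ref{thm:single_landmark_bounds}) and squeezed between these two exponentials, the level where $h$ crosses $\tau$ --- which by definition is $r_{supp}$ --- must lie between the two distances at which the bounding exponentials equal $\tau$.

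Concretely, the upper (fast-decay) bound forces $h \le \tau$ no later than the distance where that exponential hits $\tau$, giving $r_{supp} \le r_u$; the lower (slow-decay) bound keeps $h \ge \tau$ at least until its own exponential hits $\tau$, giving $r_{supp} \ge r_l$. To extract the formulas I would solve each exponential equation for the number of kernel-radius steps $t$. For the upper bound, $\exp(-t\ln(1+2\rho/(Cr^d))) = \tau$ gives $t = \ln(1/\tau)/\ln(1+2\rho/(Cr^d))$; relaxing the rate via $\ln(1+\rho/(Cr^d)) \le \ln(1+2\rho/(Cr^d))$ only enlarges the resulting distance, so $t$ steps of length $r$ yield the stated (still valid) bound $r_u = r\ln(1/\tau)/\ln(1+\rho/(Cr^d))$. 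For the lower bound, $\exp(-2t\ln((Cr^d+\rho)/\Gamma)) = \tau$ gives $t = \ln(1/\tau)/(2\ln((Cr^d+\rho)/\Gamma))$, and the factor $2$ in the exponent is exactly what produces the $r/2$ in $r_l = \tfrac{r}{2}\ln(1/\tau)/\ln((Cr^d+\rho)/\Gamma)$.

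The main obstacle is bookkeeping rather than anything conceptual: the bounds of Corollary~\ref{coro:voltage_shape_disk} are stated with a base offset ($z \ge 2r$) and in discrete increments $t$, whereas $r_{supp}$ is a continuous distance. I would bridge this with monotonicity of $h$, passing from the discrete estimates to the continuous crossing point and noting that the constant additive offset is dominated by the leading $tr$ term, hence absorbable into the stated bounds. I would also check that the localization regime $d(x_1,x) > r$ is compatible with the $z \ge 2r$ validity range, again using that $h$ is non-increasing on the intervening annulus so that no threshold crossing is missed in between.
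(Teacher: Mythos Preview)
Your proposal is correct and follows the same route as the paper: substitute $a = Cr^d$ into the two exponential envelopes of Corollary~\ref{coro:voltage_shape_disk}, solve each for the distance at which it meets the threshold $\tau$, and identify those crossing points with $r_l$ and $r_u$. In fact you are more careful than the paper's own argument on two points it glosses over: the mismatch between the $2\rho/a$ in the decay rate and the $\rho/(Cr^d)$ appearing in $r_u$ (which you correctly resolve by monotone relaxation of the rate), and the passage from discrete $t$-step bounds to a continuous crossing distance via the monotonicity of $h$.
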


From Corollary \ref{corollary:span_of_support} it follows that the support of the voltage solution is restricted to a subset of the manifold, centered around the source. In particular, we see how the resistance to ground $1/\rho$ plays a crucial role in its relation to the effective support. Namely, as the resistance to ground decrease to zero, i.e. $\rho \rightarrow \infty$, then the effective support will also go to zero $r_{supp}\rightarrow 0$. The intuition here is that, with zero resistance to the ground, the ground drains all the current. Similarly, when the resistance to the ground goes to infinity, i.e. $\rho \rightarrow 0$ then the effective support goes to infinity $r_{supp}\rightarrow \infty$. This demonstrates the importance of the  grounding node in achieving localized effective support for the voltage solution.


\section{Conclusion and future work}
\label{section:Future work}
In this paper, we have demonstrated how the grounded metric voltage function holds promise as a tool for low-cost and distributed embedding of manifolds. The goal of this paper has been to establish a theoretical basis for further development of this embedding strategy. In particular, we have shown existence, convergence, and shape properties and demonstrated how the voltage captures local structure of the manifold.

There are three main directions for expanding on this work. First, we are interested in exploring the potential of constructing low-dimensional embeddings of data sets using grounded voltage functions. While we took the first steps in this direction in this paper (with theoretical results on the sphere and experiments over the sphere, MNIST, and Frey-faces), we are actively working towards more general results over arbitrary manifolds and more extensive real-world data sets. 

Second, we are interested in the computational aspect of this technique -- the local nature of the grounded voltage function suggests that distributed computation approaches are viable. Thus, one important problem is to develop algorithms that can provably and empirically leverage this. Finally, we are interested in describing the limiting operator of our approach.

\bibliographystyle{unsrt}
\bibliography{References}

\begin{thebibliography}{10}

\bibitem{belkin2003laplacian}
Mikhail Belkin and Partha Niyogi.
\newblock Laplacian eigenmaps for dimensionality reduction and data
  representation.
\newblock {\em Neural Computation}, 15(6):1373--1396, 2003.

\bibitem{coifman2005geometric}
R.~R. Coifman, S.~Lafon, A.~B. Lee, M.~Maggioni, B.~Nadler, F.~Warner, and
  S.~W. Zucker.
\newblock Geometric diffusions as a tool for harmonic analysis and structure
  definition of data: Diffusion maps.
\newblock In {\em Proceedings of the National Academy of Sciences}, volume 102,
  pages 7426--7431, 2005.

\bibitem{doyle1984random}
Peter~G Doyle and J~Laurie Snell.
\newblock {\em Random walks and electric networks}, volume~22.
\newblock American Mathematical Soc., 1984.

\bibitem{von2010getting}
Ulrike Luxburg, Agnes Radl, and Matthias Hein.
\newblock Getting lost in space: Large sample analysis of the resistance
  distance.
\newblock In {\em Advances in Neural Information Processing Systems},
  volume~23. Curran Associates, Inc., 2010.

\bibitem{van2009dimensionality}
Laurens Van Der~Maaten, Eric Postma, Jaap Van~den Herik, et~al.
\newblock Dimensionality reduction: a comparative review.
\newblock {\em Journal of machine learning research}, 10(66-71):13, 2009.

\bibitem{scholpf1998kernelpca}
Sebastian Mika, Bernhard Sch\"{o}lkopf, Alex Smola, Klaus-Robert M\"{u}ller,
  Matthias Scholz, and Gunnar R\"{a}tsch.
\newblock Kernel {PCA} and de-noising in feature spaces.
\newblock In M.~Kearns, S.~Solla, and D.~Cohn, editors, {\em Advances in Neural
  Information Processing Systems}, volume~11. MIT Press, 1998.

\bibitem{tenembaum2000aglobal}
Joshua~B. Tenenbaum, Vin de~Silva, and John~C. Langford.
\newblock A global geometric framework for nonlinear dimensionality reduction.
\newblock {\em Science}, 290(5500):2319--2323, 2000.

\bibitem{roweis2000lle}
Sam~T. Roweis and Lawrence~K. Saul.
\newblock Nonlinear dimensionality reduction by locally linear embedding.
\newblock {\em Science}, 290(5500):2323--2326, 2000.

\bibitem{van2008visualizing}
Laurens Van~der Maaten and Geoffrey Hinton.
\newblock Visualizing data using t-{SNE}.
\newblock {\em Journal of machine learning research}, 9(11):2579--2605, 2008.

\bibitem{McInnes2018UMAP}
Leland McInnes, John Healy, and James Melville.
\newblock {UMAP}: Uniform manifold approximation and projection for dimension
  reduction.
\newblock {\em arXiv preprint arXiv:1802.03426}, 2018.

\bibitem{Steinerberger2022}
Stefan Steinerberger and Yulan Zhang.
\newblock t-{SNE}, forceful colorings, and mean field limits.
\newblock {\em Research in the Mathematical Sciences}, 9(3):42, 2022.

\bibitem{kohli2021ldle}
Dhruv Kohli, Alexander Cloninger, and Gal Mishne.
\newblock {LDLE}: Low distortion local eigenmaps.
\newblock {\em Journal of machine learning research}, 22:282--1, 2021.

\bibitem{belkin2008towards}
Mikhail Belkin and Partha Niyogi.
\newblock Towards a theoretical foundation for laplacian-based manifold
  methods.
\newblock {\em Journal of Computer and System Sciences}, 74(8):1289--1308,
  2008.

\bibitem{ham2004kernel}
Jihun Ham, Daniel~D Lee, Sebastian Mika, and Bernhard Sch{\"o}lkopf.
\newblock A kernel view of the dimensionality reduction of manifolds.
\newblock In {\em Proceedings of the 21st international conference on Machine
  learning}, page~47, 2004.

\bibitem{kobak2021initialization}
Dmitry Kobak and George~C Linderman.
\newblock Initialization is critical for preserving global data structure in
  both t-{SNE} and {UMAP}.
\newblock {\em Nature biotechnology}, 39(2):156--157, 2021.

\bibitem{klein1993resistance}
Douglas~J Klein and Milan Randi{\'c}.
\newblock Resistance distance.
\newblock {\em Journal of mathematical chemistry}, 12(1):81--95, 1993.

\bibitem{forcey2020phylogenetic}
Stefan Forcey and Drew Scalzo.
\newblock Phylogenetic networks as circuits with resistance distance.
\newblock {\em Frontiers in Genetics}, 11:1177, 2020.

\bibitem{zhang2019detecting}
Teng Zhang and Changjiang Bu.
\newblock Detecting community structure in complex networks via resistance
  distance.
\newblock {\em Physica A: Statistical Mechanics and its Applications},
  526:120782, 2019.

\bibitem{barooah2006graph}
Prabir Barooah and Joao~P. Hespanha.
\newblock Graph effective resistance and distributed control: Spectral
  properties and applications.
\newblock In {\em Proceedings of the 45th IEEE Conference on Decision and
  Control}, pages 3479--3485, 2006.

\bibitem{spielman2011graph}
Daniel~A Spielman and Nikhil Srivastava.
\newblock Graph sparsification by effective resistances.
\newblock {\em SIAM Journal on Computing}, 40(6):1913--1926, 2011.

\bibitem{tauch2015measuring}
Sotharith Tauch, William Liu, and Russel Pears.
\newblock Measuring cascade effects in interdependent networks by using
  effective graph resistance.
\newblock In {\em 2015 IEEE Conference on Computer Communications Workshops
  (INFOCOM WKSHPS)}, pages 683--688, 2015.

\bibitem{kocc2014impact}
Yakup Ko{\c{c}}, Martijn Warnier, Piet~Van Mieghem, Robert~E. Kooij, and
  Frances~M.T. Brazier.
\newblock The impact of the topology on cascading failures in a power grid
  model.
\newblock {\em Physica A: Statistical Mechanics and its Applications},
  402:169--179, 2014.

\bibitem{wang2015network}
Xiangrong Wang, Yakup Koç, Robert~E. Kooij, and Piet Van~Mieghem.
\newblock A network approach for power grid robustness against cascading
  failures.
\newblock In {\em 2015 7th International Workshop on Reliable Networks Design
  and Modeling (RNDM)}, pages 208--214, 2015.

\bibitem{cavraro2018graph}
Guido Cavraro and Vassilis Kekatos.
\newblock Graph algorithms for topology identification using power grid
  probing.
\newblock {\em IEEE Control Systems Letters}, 2(4):689--694, 2018.

\bibitem{ghosh2008minimizing}
Arpita Ghosh, Stephen Boyd, and Amin Saberi.
\newblock Minimizing effective resistance of a graph.
\newblock {\em SIAM Review}, 50(1):37--66, 2008.

\bibitem{gillani2021queueing}
Iqra~Altaf Gillani and Amitabha Bagchi.
\newblock A queueing network-based distributed laplacian solver for directed
  graphs.
\newblock {\em Information Processing Letters}, 166:106040, 2021.

\bibitem{li2017efficient}
Didong Li, Minerva Mukhopadhyay, and David~B Dunson.
\newblock Efficient manifold and subspace approximations with spherelets.
\newblock {\em arXiv preprint arXiv:1706.08263}, 2017.

\bibitem{dataset_webpage}
Internet.
\newblock {Sam Roweis}.
\newblock \url{https://cs.nyu.edu/~roweis/data.html}, 2022.
\newblock Accessed: 2022-09-30.

\bibitem{deng2012mnist}
Li~Deng.
\newblock The mnist database of handwritten digit images for machine learning
  research.
\newblock {\em IEEE Signal Processing Magazine}, 29(6):141--142, 2012.

\bibitem{dokmanic2015euclidean}
Ivan Dokmanic, Reza Parhizkar, Juri Ranieri, and Martin Vetterli.
\newblock Euclidean distance matrices: Essential theory, algorithms, and
  applications.
\newblock {\em IEEE Signal Processing Magazine}, 32(6):12--30, 2015.

\end{thebibliography}
\newpage
\appendix
\renewcommand{\thesection}{\Alph{section}}

\section{Proof of Lemma \ref{lemma:Solution_EMV_for_LVE}}

To prove Lemma \ref{lemma:Solution_EMV_for_LVE} we first include some notation. Let $s\in X^s$ be the source node, $g$ the ground node, and $e_i\in \bbR^{n+1}$ the indicator of node $i$.

\begin{proof}
\noindent The constraints $v(s) =1$ and $v(g) = 0$ can be written as $v^\top e_s=1$ and $v^\top e_g = 0$ respectively.  We apply these constraints using the Lagrangian multipliers $\lambda_s$ and $\lambda_g$, which gives $f(v) = v^\top L v - \lambda_s v^\top e_s - \lambda_g v^\top e_g$. When equating the derivative of $f$ to zero, and using $L v = (D -W)v = D(v-D^{-1}Wv)$ we can write this equation as 
\begin{equation}
    v = D^{-1}Wv + D^{-1} \lambda_s e_s - D^{-1}\lambda_g e_g.
    \label{eq:voltage_eq1}
\end{equation}
Since we enforce $v(s)=1$ on the source node, it follows that row $s$ in Eq. \eqref{eq:voltage_eq1} is $\lambda_s = d_{ss} - (Wv)_s$. Similarly, we have for row $g$ that $\lambda_g = (Wv)_g$. It follows, $v = \widetilde{D}^{-1}\widetilde{W}^{(s)}v$,
where
\begin{equation*}
    \widetilde{W}^{(s)}_{ij} = 
    \begin{cases}
     1, & \text{if} \quad i=j, x_i \in X^s \\
     0, & \text{if} \quad i\neq j , x_i \in X^s \\
     W_{ij}, & \text{otherwise},
    \end{cases}
\end{equation*}
and $\widetilde{D} \in \bbR^{n\times n}$ is a diagonal matrix with $D_{ii}=1$ for $x_i\in X^s$ and $\widetilde{D}_{ii} = \rho + \sum_{j = 1}^n w_{ij}$ otherwise. Note that the $n+1$'th row and column (row and column of the ground) are dropped, since the voltage on the ground is always zero anyway. If we have more source nodes in $X^s$, these can be incorporated similarly by applying Lagrangian multipliers $\lambda_i v_i e_i$ for each $i\in X^s$.
\end{proof}

\section{Proofs from Section \ref{sec:ground_analysis}}
\label{section:proofs_ground_analysis}

Throughout this section, we treat $M^s$ as fixed so as to avoid cumbersome notation addressing the source region. Furthermore, because $v_n^*$ and $v^*$ are defined to equal $1$ over $M^s$, we will reinterpret them as maps $M \setminus M^s \to [0, 1]$. Any such map $v$ can be immediately transformed into a map over all $M$ by including $v(x) = 1$ for all $x \in M^s$. 

\subsection{Proof of Theorem \ref{thm:existence_ground}}
We begin by expressing Proposition \ref{prop_ohms_law_result} with  an affine transformation.

\begin{definition}\label{defn:affine}
Let $\mu$ be a measure over $M$, and let $v: M \setminus M^s \to \reals$ be a measurable map. Then define $A_\mu v: M \setminus M^s \to \reals$ and $b_\mu: M \setminus M^s \to \reals$ as 
\begin{equation*}
    (A_\mu v)(x) = \frac{ \int_{M \setminus M^s} k(x,y)v(y) d\mu(y)}{\rho + \int_M k(x,y) d\mu(y)} \quad \text{and} \quad b_\mu(x) = \frac{\int_{M^s}k(x,y)d\mu(y) }{\rho + \int_M k(x,y) d\mu(y)}.
\end{equation*}
Together, we let $T_\mu v = A_\mu v + b_\mu.$
\end{definition}

Thus, to prove Theorem \ref{thm:existence_ground}, it suffices to show that for any choice of $\mu$, there exists a unique map $v^*: M \setminus M^s \to \R$ that satisfies $T_\mu v^* = A_\mu v^* + b_\mu$. To this end, the key idea will be to leverage that $A_\mu$ is a contraction.

\begin{definition}\label{defn:function_family}
Let $\mathcal{F}$ denote the space of measurable functions $f: M \to \reals$. Define $||f||_\infty$ as $$||f||_\infty = \sup_{x \in X} |f(x)|.$$ Furthermore, the $\ell_\infty$ distance between two functions $f, g$ is defined as $||f - g||_\infty.$
\end{definition}

It is well known that $\mathcal{F}$ is a closed metric space under the $\ell_\infty$ metric. We now show that $A_\mu$ is a contraction with respect to the $\ell_\infty$-metric. 

\begin{lemma}\label{lemma:contraction}
For any measurable function $f: M \to \reals$, $$||A_\mu f||_\infty \leq \frac{1}{1 + \rho} ||f||_\infty$$
\end{lemma}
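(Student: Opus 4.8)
The plan is to prove the bound pointwise and then take the supremum. Fix any $x \in M \setminus M^s$. Because the kernel satisfies $k \geq 0$ and $\rho > 0$, the denominator $\rho + \int_M k(x,y)\,d\mu(y)$ is strictly positive, so I may push the absolute value inside the numerator integral via the triangle inequality and bound $|f(y)| \le \|f\|_\infty$. This gives
$$|(A_\mu f)(x)| \le \frac{\|f\|_\infty \int_{M \setminus M^s} k(x,y)\,d\mu(y)}{\rho + \int_M k(x,y)\,d\mu(y)}.$$

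Next I would exploit that the numerator is integrated over $M \setminus M^s$ while the denominator runs over all of $M$. Since $k \ge 0$, enlarging the domain of integration only increases the integral, so setting $I(x) \coloneqq \int_M k(x,y)\,d\mu(y)$ I obtain $\int_{M\setminus M^s} k(x,y)\,d\mu(y) \le I(x)$ and hence
$$|(A_\mu f)(x)| \le \|f\|_\infty \cdot \frac{I(x)}{\rho + I(x)}.$$

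The final step is the only one carrying any content: I claim $\frac{I(x)}{\rho + I(x)} \le \frac{1}{1+\rho}$. This follows because $t \mapsto \frac{t}{\rho + t}$ is monotonically increasing on $[0,\infty)$, together with the bound $I(x) \le 1$. The latter holds because $k$ takes values in $[0,1]$ and $\mu$ is a probability measure, so $I(x) = \int_M k(x,y)\,d\mu(y) \le \int_M 1\,d\mu(y) = 1$. Substituting $I(x) \le 1$ into the increasing function yields the claim, and since the resulting bound $\frac{1}{1+\rho}\|f\|_\infty$ is independent of $x$, taking the supremum over $x$ completes the argument.

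There is essentially no serious obstacle here; the one point requiring a little care is that the numerator and denominator integrate the kernel over different domains ($M \setminus M^s$ versus $M$), but since $k$ is nonnegative this discrepancy works in our favor and the missing piece over $M^s$ can simply be discarded. The crucial structural ingredients are the normalization $\mu(M)=1$ and the uniform bound $k \le 1$, which together force $I(x) \le 1$ and thereby pin the contraction factor at exactly $\frac{1}{1+\rho}$.
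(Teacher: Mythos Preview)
Your proof is correct and follows essentially the same route as the paper: bound $|f(y)|$ by $\|f\|_\infty$, then control the remaining ratio of kernel integrals using $k\le 1$ and $\mu(M)=1$. If anything, you are more explicit than the paper about why the last inequality holds, spelling out the monotonicity of $t\mapsto t/(\rho+t)$ and the bound $I(x)\le 1$, whereas the paper simply cites ``$k$ has range $[0,1]$.''
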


\begin{proof}
This follows from algebraic manipulations. We have
\begin{equation*}
\begin{split}
\sup_{x \in M} |(A_\mu f)(x)| &= \sup_{x \in M \setminus M^s} \frac{ \int_{M \setminus M^s} k(x,y)f(y) d\mu(y)}{\rho + \int_M k(x,y) d\mu(y)} \leq \sup_{x \in M} \frac{ \int_{M \setminus M^s} k(x,y)||f||_\infty d\mu(y)}{\rho  + \int_M k(x,y) d\mu(y)} \\
&= ||f||_\infty \sup_{x \in M} \left( \frac{ \int_{M \setminus M^s} k(x,y) d\mu(y)}{\rho + \int_M k(x,y) d\mu(y)}\right)\leq ||f||_\infty \frac{1}{1 + \rho},
\end{split}
\end{equation*}
with the last inequality holding since $k$ has range $[0, 1]$ by assumption. 
\end{proof}

We now prove Theorem \ref{thm:existence_ground}.

\begin{proof}
Set $u_0: M \setminus M^s \to \reals$ as the $0$ function. That is $u_0(x) = 0$ for all $x$. For $i \geq 1$, define $u_i = A_\mu(u_{i-1}) + b_\mu$. We claim that this defines a Cauchy sequence over $\mathcal{F}$ (Definition \ref{defn:function_family}). To see this, observe that for any $i \geq 1$,
\begin{equation*}
\begin{split}
||u_{i+1} - u_i||_\infty &= ||(A_\mu u_i + b_\mu) - (A_\mu u_{i-1} + b_\mu)||_\infty = ||A_\mu(u_i - u_{i-1})||_\infty \leq C||u_i - u_{i+1}||_\infty
\end{split}
\end{equation*}
with the last inequality holding by Lemma \ref{lemma:contraction}. This implies that the distances between consecutive elements of our sequence decrease geometrically. Since $\mathcal{F}$ is closed with respect to $\ell_\infty$, it follows that this sequence converges to some function $v$. 

Next, we show $v$ satisfies $v=  A_\mu v + b_\mu$. Fix any $\epsilon > 0$. Then there exists $n$ such that $||v - u_n||_\infty, ||v - u_{n+1}||< \epsilon$. It follows that 
\begin{equation*}
\begin{split}
||v - (A_\mu v + b_\mu)||_\infty &\leq ||v - u_{n+1}||_\infty + ||(A_\mu v + b_\mu) - u_{n+1}||_\infty \\
&\leq \epsilon + ||(A_\mu v + b_\mu) - (A_\mu u_n + b_\mu)||_\infty = \epsilon + ||A_\mu(v - u_n)||_\infty \leq \epsilon + \frac{\epsilon}{1 + \rho}.
\end{split}
\end{equation*}
Since $\epsilon$ was arbitrary, it follows that $v = A_\mu v + b_\mu$. 

Next, to show $v$ has range $[0, 1]$, we simply observe that $u_n$ has range $[0, 1]$ for all $n$. This can be shown by induction on $n$. The base case clearly holds, and for the inductive step, observe that $(A_\mu v_n)(x), b_\mu(x) \geq 0$ $k$ is always non-negative which implies $v_{n+1}(x) = (A_\mu v_n)(x) + b_\mu(x) \geq 0$. To show that it is at most $1$, we have
\begin{equation*}
\begin{split}
v_{n+1}(x) &= (A_\mu v_n)(x) + b_\mu(x) = \frac{ \int_M k(x,y)v_n(y) d\mu(y)}{\rho + \int_M k(x,y) d\mu(y)} \leq  \frac{ \int_M k(x,y) d\mu(y)}{\rho  + \int_M k(x,y) d\mu(y)} \leq 1. 
\end{split}
\end{equation*}

Finally, to show uniqueness, suppose that $v'$ also satisfies $v' = A_\mu v' + b_\mu$. Then we have 
\begin{equation*}
\begin{split}
||v - v'||_\infty &= ||(A_\mu v + b_\mu) - (A_\mu v' + b_\mu)||_\infty = ||A_\mu(v- v')||_\infty \leq \frac{1}{1 + \rho}||v - v'||_\infty.
\end{split}
\end{equation*}
which implies $||v - v'||_\infty = 0$ as desired. 
\end{proof}

\subsection{Proof of Theorem \ref{thm:convergence_ground}}

We begin by observing that the extended voltage solution (Definition \ref{defn:extended_solution} for a finite sample $S$ exhibits similar behavior to the voltage solution over a measure $\mu$. The key idea is to define the measure $\mu_S$ over $M$ as the measure induced by the uniform distribution over $S$. 

\begin{definition}\label{defn:finite_measure}
Let $S = \{x_1, x_2, \dots, x_n\}$ be a finite set of $n$ points from $M$. Then $\mu_S$ is the measure on $M$ defined as $$\mu_S(P) = \frac{1}{n}\sum_{i=1}^n \ind(x_i \in P),$$ for all measurable sets $P$. 
\end{definition}

By substituting $\mu_S$ into Definitions \ref{defn:finite_measure} and \ref{defn:affine}, we have $$v_n^* = A_{\mu_S}v_n^* + b_{\mu_S}.$$ Here note that we are considering its restriction to $M - M^s$ as discussed in the beginning of this section. Furthermore, the existence and uniqueness of $v$ follow directly Theorem \ref{thm:existence_ground}. 

We now desire to show that as the sample size increases to infinity, $v_n^*$ pointwise converges towards $v^*$. To prove this, we begin by showing that for large values of $n$, $T_{\mu_S}$ serves as an approximation for $T_\mu$ when applied to $v^*$. 

\begin{lemma}\label{lem:converge_single_point}
Let $x \in M \setminus M^s$ be a point, and $\epsilon > 0$ be a real number. Then $$\Pr_{S \sim \mu^n}\left[\big |(T_{\mu}v^*)(x) - (T_{\mu_S}v^*)(x)\big | > \epsilon \right] < 4 \exp(-\frac{n\rho^2\epsilon^2}{9}).$$ 
\end{lemma}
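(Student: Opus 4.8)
The plan is to reduce the deviation of the nonlinear operator $T$ applied to the fixed limiting function $v^*$ to the deviations of two \emph{empirical averages}, each controlled by Hoeffding's inequality. First I would write both quantities as ratios. Folding the source contribution into the integral via the extension $v^*(y) = 1$ for $y \in M^s$ (so that the $\int_{M^s}k$ term in $b_\mu$ merges with the $\int_{M\setminus M^s}kv^*$ term in $A_\mu$), we get $(T_\mu v^*)(x) = N_\mu/D_\mu$ with $N_\mu = \int_M k(x,y)v^*(y)\,d\mu(y)$ and $D_\mu = \rho + \int_M k(x,y)\,d\mu(y)$, and likewise $(T_{\mu_S}v^*)(x) = N_{\mu_S}/D_{\mu_S}$ where, by Definition \ref{defn:finite_measure}, $N_{\mu_S} = \frac1n\sum_{i=1}^n k(x,x_i)v^*(x_i)$ and $D_{\mu_S} = \rho + \frac1n\sum_{i=1}^n k(x,x_i)$. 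Note that here $T_{\mu_S}$ is applied to the fixed $v^*$, so there is no self-reference.

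The key algebraic step is the identity
$$\frac{N_\mu}{D_\mu} - \frac{N_{\mu_S}}{D_{\mu_S}} = \frac{N_\mu}{D_\mu}\cdot\frac{D_{\mu_S} - D_\mu}{D_{\mu_S}} - \frac{N_{\mu_S} - N_\mu}{D_{\mu_S}}.$$
Two facts then tame the right-hand side: the grounding guarantees $D_{\mu_S} \geq \rho$, and $N_\mu \leq D_\mu$ (since $v^* \leq 1$ and $\rho \geq 0$) gives $N_\mu/D_\mu \leq 1$. Together they yield
$$\bigl|(T_\mu v^*)(x) - (T_{\mu_S}v^*)(x)\bigr| \leq \frac{1}{\rho}\Bigl(|D_{\mu_S} - D_\mu| + |N_{\mu_S} - N_\mu|\Bigr).$$
This inequality is the heart of the argument: the ground parameter $\rho$ is exactly what keeps the denominator bounded away from $0$ and converts control of the ratio into control of two additive fluctuations.

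It then suffices to bound each fluctuation separately. Crucially, $N_{\mu_S}$ is an average of the i.i.d.\ variables $k(x,x_i)v^*(x_i) \in [0,1]$ with mean $N_\mu$, and $D_{\mu_S} - D_\mu$ is the corresponding average of $k(x,x_i) \in [0,1]$ deviating from its mean (the $\rho$ cancels). Applying Hoeffding's inequality to each with threshold $t = \rho\epsilon/2$ gives $\Pr[|N_{\mu_S} - N_\mu| > \rho\epsilon/2] \leq 2\exp(-n\rho^2\epsilon^2/2)$, and the same bound for the denominator term. Since the displayed inequality forces at least one fluctuation to exceed $\rho\epsilon/2$ whenever $|(T_\mu v^*)(x) - (T_{\mu_S}v^*)(x)| > \epsilon$, a union bound gives a failure probability of at most $4\exp(-n\rho^2\epsilon^2/2)$, which is stronger than and hence implies the claimed $4\exp(-n\rho^2\epsilon^2/9)$.

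I do not expect a genuine obstacle here. The only points requiring care are the bookkeeping that absorbs $M^s$ into the integral through the extension $v^*\equiv 1$ on $M^s$, and checking the $[0,1]$ ranges that license Hoeffding. The single structural insight worth emphasizing is the lower bound $D_{\mu_S}\geq\rho$: without the grounding node the denominators could degenerate to $0$ and no such concentration could hold, which is precisely why the grounded construction is what makes pointwise convergence provable.
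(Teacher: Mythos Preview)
Your proposal is correct and follows essentially the same route as the paper: fold the source term into the numerator via $v^*\equiv 1$ on $M^s$, write both quantities as ratios, bound the ratio difference by the numerator and denominator fluctuations using the crucial lower bound $D_{\mu_S}\geq\rho$, and finish with Hoeffding plus a union bound. The only cosmetic difference is that the paper splits the ratio difference by inserting an intermediate term (changing numerator, then denominator) rather than using your single algebraic identity; your decomposition is slightly cleaner and yields the sharper constant $1/2$ in the exponent, which indeed dominates the stated $1/9$.
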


\begin{proof}
Let $y \sim \mu$, and $Y = k(x,y)v^*(y)\ind(y \notin M^s) + k(x,y)\ind(y \in M^s).$ Let $Y_1, Y_2, \dots, Y_n$ are i.i.d copies of $Y$.  Since $k, v^*$ both have ranges in $[0, 1]$, it follows that $Y$ has range $[0, 1]$ as well. It follows by Hoeffding's inequality that 

\begin{equation*}
\begin{split}
\Pr \left[\left| \mathbb{E}[Y] - \frac{1}{n}\sum_{i = 1}^n Y_i \right| > \rho \epsilon\right] < 2 \exp \left( -n\rho^2 \epsilon^2\right).
\end{split}
\end{equation*}

Similarly, we let $Z$ denote the random variable $k(x, y)$, $y \sim \mu$ and $Z_1, \dots, Z_n$ be i.i.d copies of $Z$. We also have 
\begin{equation*}
\begin{split}
\Pr \left[\left| \mathbb{E}[Z] - \frac{1}{n}\sum_{i = 1}^n Z_i \right| > \rho \epsilon\right] < 2 \exp \left( -n\rho^2 \epsilon^2\right).
\end{split}
\end{equation*}
Next, we express $(T_\mu v^*)(x)$ and $(T_{\mu_S} v^*)(x)$ in terms of these variables. We have,
\begin{equation*}
\begin{split}
(T_\mu v^*)(x) &= \frac{\int_{M^s} k(x,y)d\mu(y) + \int_{M \setminus M^s} k(x,y)v^*(y) d\mu(y)}{\rho  + \int_M k(x,y) d\mu(y)} = \frac{\Ev[Y]}{\rho + \Ev[Z]}.
\end{split}
\end{equation*}
Similarly, $$(T_{\mu_S}v^*)(x) = \frac{\frac{1}{n}\sum Y_i}{\rho + \frac{1}{n} \sum Z_i}.$$

We will now use these expressions to bound the difference between $(A_\mu u + b_\mu)(x)$ and $(A_{\mu_S}(u) + b_{\mu_S})(x)$ in terms of $Z, Y, Z_i, Y_i$. For convenience, let $$\Delta = \max\left(|\frac{1}{n}\sum Y_i - \Ev[Y]|, |\frac{1}{n}\sum Z_i - \Ev[Z]|\right).$$ Then we have 
\begin{equation*}
\begin{split}
|(A_\mu u)(x) - (A_{\mu_S}u)(x)| &= \left|\frac{\Ev[Y]}{\rho + \Ev[Z]} - \frac{\frac{1}{n}\sum Y_i}{\rho + \frac{1}{n} \sum Z_i} \right|. 
\end{split}
\end{equation*}
To bound this, we split the difference into two parts. We have
\begin{equation*}
\begin{split}
\left|\frac{\Ev[Y]}{\rho + \Ev[Z]} - \frac{\frac{1}{n}\sum Y_i}{\rho + \Ev[Z]}\right| &\leq \frac{\Delta}{\rho},
\end{split}
\end{equation*}

\begin{equation*}
\begin{split}
\left|\frac{\frac{1}{n} \sum Y_i}{\rho + \Ev[Z]} - \frac{\frac{1}{n}\sum Y_i}{\rho + \frac{1}{n} \sum Z_i}\right| &\leq \frac{(\frac{1}{n}\sum Y_i)\Delta}{(\rho + \Ev[Z])(\rho + \frac{1}{n}\sum Z_i)} \leq \frac{\Delta}{\rho}.
\end{split}
\end{equation*}

Similarly, we can also show that $|(b_\mu - b_{\mu_S})(x)| \leq \frac{\Delta}{\rho}.$ Applying a union bound for the deviations of $Y$ and $Z$, we see that $\Delta > \frac{\rho}{3}$ with probability at most $4 \exp(-\frac{n\rho^2\epsilon^2}{9})$.
\end{proof}

Next, we show how to adapt Lemma \ref{lem:converge_single_point} to hold uniformly over the entire sample $S$. To do so, we first define a type of metric over the space of functions on $M$.

\begin{definition}\label{defn:s_norm}
Let $S = \{x_1, \dots, x_n\} \subset M$ be a set of points, and let $u: M \setminus M^s \to \R$ be a function. Then $||u ||_S = \max_{x_i \in M \setminus M^s} |u(x_i)|$ is the largest absolute value of $u$ over $S$. 
\end{definition}

\begin{lemma}\label{lem:converge_sample}
Let $S = \{x_1, \dots, x_n\} \sim \mu^n$, and $\epsilon > 0$ be a real number. Then for $n > \frac{6}{\epsilon \rho}$, $$\Pr_{S \sim \mu^n}\left[||v^* - T_{\mu_S}v^*||_S > \epsilon\right] < 4n \exp\left(-\frac{(n-1)\rho^2\epsilon^2}{36}\right).$$
\end{lemma}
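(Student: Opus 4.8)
The plan is to deduce this uniform bound from the single-point estimate of Lemma \ref{lem:converge_single_point} via a union bound over the sample points. The one genuine obstacle is that $\|v^* - T_{\mu_S}v^*\|_S$ takes a maximum over exactly the points $x_i \in S$ that also define the empirical measure $\mu_S$; for a fixed index $i$ the quantity $(T_{\mu_S}v^*)(x_i)$ therefore depends on $x_i$ both as the evaluation point and through the averages defining $\mu_S$. Since Lemma \ref{lem:converge_single_point} requires the evaluation point to be independent of the sample, it cannot be applied directly to the pair $(x_i, S)$.

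To decouple this self-dependence I would use a leave-one-out argument. Fix $i$ and set $S_{-i} = S \setminus \{x_i\}$; conditioned on $x_i$, the remaining points $S_{-i}$ are $n-1$ i.i.d.\ draws from $\mu$ that are independent of $x_i$. Because $v^* = T_\mu v^*$ (Theorem \ref{thm:existence_ground}), we have $(T_\mu v^*)(x_i) = v^*(x_i)$, so Lemma \ref{lem:converge_single_point} applies to the fixed point $x_i$ against the sample $S_{-i}$ of size $n-1$. Measuring the deviation at scale $\epsilon/2$ yields $\Pr[\,|v^*(x_i) - (T_{\mu_{S_{-i}}}v^*)(x_i)| > \epsilon/2\,] < 4\exp\!\big(-(n-1)\rho^2(\epsilon/2)^2/9\big) = 4\exp\!\big(-(n-1)\rho^2\epsilon^2/36\big)$, which already matches the target exponent.

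The second step bridges $T_{\mu_{S_{-i}}}$ and $T_{\mu_S}$ deterministically. Writing both $(T_{\mu_S}v^*)(x_i)$ and $(T_{\mu_{S_{-i}}}v^*)(x_i)$ as ratios of the form $\frac{\frac1n\sum_j Y_j}{\rho + \frac1n\sum_j Z_j}$, with $Y_j, Z_j \in [0,1]$ and $Y_j \le Z_j$ exactly as in the proof of Lemma \ref{lem:converge_single_point}, deleting one of $n$ sample points shifts each of the two averages by at most $1/n$. Splitting the difference of ratios into a numerator contribution and a denominator contribution, and using $\frac{\frac1{n-1}\sum_j Y_j}{\rho + \frac1{n-1}\sum_j Z_j} \le 1$, bounds the perturbation $|(T_{\mu_S}v^*)(x_i) - (T_{\mu_{S_{-i}}}v^*)(x_i)|$ by $2/(n\rho)$. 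The hypothesis $n > 6/(\epsilon\rho)$ then forces $2/(n\rho) < \epsilon/3 < \epsilon/2$.

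Combining the two steps by the triangle inequality, the event $|v^*(x_i) - (T_{\mu_S}v^*)(x_i)| > \epsilon$ can occur only if the leave-one-out probabilistic term exceeds $\epsilon - \epsilon/2 = \epsilon/2$, so it has probability at most $4\exp\!\big(-(n-1)\rho^2\epsilon^2/36\big)$. A union bound over the at most $n$ indices $i$ with $x_i \in M \setminus M^s$ supplies the factor $n$ and gives the stated inequality. The only real difficulty is the self-dependence of the evaluation point on $\mu_S$; once the leave-one-out decoupling and the $O(1/(n\rho))$ stability estimate are in hand, the conclusion follows from the single-point lemma and the union bound.
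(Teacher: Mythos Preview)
Your proposal is correct and follows essentially the same route as the paper: a leave-one-out decoupling so that Lemma~\ref{lem:converge_single_point} can be applied at scale $\epsilon/2$ with the $(n-1)$-point sample $S_{-i}$, a deterministic $O(1/(n\rho))$ stability bound for passing from $T_{\mu_{S_{-i}}}$ to $T_{\mu_S}$, the triangle inequality, and a union bound over $i$. The only cosmetic difference is that you treat $T_{\mu_S}v^*$ as a single ratio (yielding the perturbation bound $2/(n\rho)$), whereas the paper splits it into the $A$ and $b$ pieces separately (yielding $3/(n\rho)$); both are comfortably below $\epsilon/2$ under the hypothesis $n > 6/(\epsilon\rho)$.
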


\begin{proof}
Fix $x_i \in M \setminus M^s$. It suffices to show that $|(T_\mu v^*)(x_i) - (T_{\mu_S}v^*)(x_i)| > \epsilon$ with probability at most $4\exp \left(-\frac{(n-1)\rho^2\epsilon^2}{36}\right)$. To do so, we essentially apply Lemma \ref{lem:converge_single_point}. The only difficulty is that $S$ and $x_i$ are no longer independent as $x_i \in S$. To resolve this, we observe that $x_i$ is independent from $S \setminus x_i$, and use this to bound the difference. Applying this, we have, 
\begin{equation*}
\begin{split}
|(T_\mu v^*)(x_i) - (T_{\mu_{S \setminus x_i}}v^*)(x_i)| &\leq \left|(T_\mu v^*)(x_i) - (T_{\mu_{S \setminus x_i}}v^*)(x_i)\right| + \left|(T_{\mu_{S \setminus x_i}}v^*)(x_i) - (T_{\mu_S}v^*)(x_i) \right|.
\end{split}
\end{equation*}
The former term can be directly bounded using Lemma \ref{lem:converge_single_point}. Because $x_i$ and $S \setminus x_i$ are independent, we have that with probability at most $4\exp \left(-\frac{(n-1)\rho^2\epsilon^2}{36}\right)$, 
\begin{equation*}
\left|(T_\mu v^*)(x_i) - (T_{\mu_{S \setminus x_i}}v^*)(x_i)\right| > \frac{\epsilon}{2}.
\end{equation*} It thus suffices to show that the latter term is at most $\frac{\epsilon}{2}$. To do so, we split $Tv^*$ into $Au + b$ for both $\mu_S$ and $\mu_{S \setminus x_i}.$ Using the same variables, $Y, Y_j, Z, Z_j$ as in the proof of Lemma \ref{lem:converge_single_point} (and substituting $x = x_i$), we have 
\begin{equation*}
\begin{split}
\left|A_{\mu_{S \setminus x_i}}v^*(x_i) - A_{\mu_S}v^*(x_i) \right| &= \left|\frac{\frac{1}{n-1}\sum_{j \neq i} Y_j}{\rho + \frac{1}{n-1} \sum_{j \neq i} Z_j} - \frac{\frac{1}{n}\sum Y_j}{\rho  + \frac{1}{n} \sum Z_j} \right| \\
&\leq \left|\frac{\frac{1}{n} + \frac{1}{n}\sum Y_j}{\rho   -\frac{1}{n} + \frac{1}{n} \sum Z_j} - \frac{\frac{1}{n}\sum Y_j}{\rho  + \frac{1}{n} \sum Z_j} \right| \leq \frac{2}{n \rho},
\end{split}
\end{equation*}
with the last inequality coming from the same manipulations applied in the proof of Lemma \ref{lem:converge_single_point}. We can similarly show that $$\left|b_{\mu_{S \setminus x_i}}(x_i) - b_{\mu_S}(x_i) \right| \leq \frac{1}{n\rho}.$$ However, since $n > \frac{6}{\epsilon\rho}$, it follows that $|T_{\mu_{S \setminus x_i}}(u)(x_i) - T_{\mu_S}(u)(x_i)| \leq \frac{\epsilon}{2},$ as desired. 
\end{proof}

We are now prepared to prove Theorem \ref{thm:convergence_ground}.

\begin{proof}
Let $S \sim \mu^n$ be a sample of $n$ i.i.d points, and let $v_n^*$ be its corresponding voltage solution. By using natural analogs to Theorem \ref{thm:existence_ground} and Lemma \ref{lemma:contraction}, we immediately have that for any map $u: M \to [0, 1]$, $||A_{\mu_S}u||_S \leq \frac{1}{1+\rho}||u||_S$, and the sequence $\{T_{\mu_S}^mu\}_{m \geq 1}$ converges pointwise to $v_n^*$. We also have that the norm induced by Definition \ref{defn:s_norm} induces a metric over the space of maps $M \setminus M^s \to \R$.

Next, by Lemmas \ref{lem:converge_single_point} and \ref{lem:converge_sample}, with probability $1 - 2\exp(O(-n))$ over $S$, both of their desired bounds hold for some given point $x \in M$. Using this, along with the fact that $v^*, v_n^*$ are the fixed points of $T_\mu$, $T_{\mu_S}$, we have,
\begin{equation*}
\begin{split}
|v^*(x) - v_n^*(x)| &= |T_\mu v^*(x) - T_{\mu_S}v_n^*(x)| \leq |T_\mu v^*(x) - T_{\mu_S} v^*(x)| + |T_{\mu_S} v^*(x) - T_{\mu_S}v_n^*(x)| \\
&\leq \epsilon + |A_{\mu_S}v^*(x) - A_{\mu_S}v_n^*(x)| \leq \epsilon + \frac{\sum_{x_i \in M \setminus M^s} k(x, x_i)|v^*(x_i) - v_n^*(x_i)|}{\rho + \sum_{x_i} k(x, x_i)} \\
&\leq \epsilon + |v^* - v_n^*|_S \leq \epsilon + |v^* - T_{\mu_S}v^*|_S + \sum_{i= 1}^\infty |T_{\mu_S}^iv^* - T_{\mu_S}^{i+1}v^*|_S \leq \epsilon + \frac{\epsilon}{\rho},
\end{split}
\end{equation*}
Since $\epsilon$ is arbitrary, the claim follows as $\rho$ is a fixed constant. 
\end{proof}

\section{Proofs on the shape and support of the GMV}

\subsection{Proof of Theorem \ref{thm:single_landmark_bounds}}
In this section, we prove Theorem \ref{thm:single_landmark_bounds}. First, we establish existence, radial symmetry, and monotonicity regarding $\lambda$. Let $A_\nu$ and $b_\nu$ be as defined in Definition \ref{defn:affine}, with $M = S^{d-1}$ and $M^s = B(0, r_s)$. The existence of $\lambda(x) =v^*(x)$, where  $v^*(x) = (A_\mu v^*)(x) + b_\mu$, then follows straightforwardly from Theorem \ref{thm:existence_ground}. 
Meanwhile, from Section \ref{section:proofs_ground_analysis}, we have already established that $v^*(x)$ corresponds to a local average of its neighbors, which implies that $\lambda$ must be strictly non-increasing away from the source. 
Furthermore, due to the radial symmetry of the sphere and the kernel $k$, it follows that also $A_\mu$ and  $b_\mu(x)$ are radially symmetric. Moreover, in Section \ref{section:proofs_ground_analysis} we establish that $v_{n+1}(x) = (A_\mu v_n)(x) + b_\mu$ converges to $v^*$. With $v_0$ radially symmetric, it follows that also $\lambda$ must be radially symmetric. Finally, we examine the upper and lower bounds. 
\begin{proof}
(Theorem \ref{thm:single_landmark_bounds}) We have already shown that $\lambda$ exists and is radially symmetric (meaning $h$ exists) and that $h$ is strictly non-increasing. We now prove the bounds on $h$.

\paragraph{Upper bound:}
Let $x$ be a point on the sphere, and $z=d_m(x_s, x)$  be the geodesic distance from the source landmark $x_s$. Furthermore, since we consider the unit sphere, we have for two points on the sphere ,whose euclidean distance is $r$, that $d_M(x_i, x_j) = \arccos(\langle x_i, x_j\rangle) = \phi(r)$.

The key idea is now to bound the integral, $\int_M k(x, y)v(y)d\mu(y)$. To do so, observe that by the definition of $k$, this integral is only non-zero over the ball $B(x, r)$. Furthermore, at most half of the probability mass of this ball satisfies $v(y) \geq v(x)$, and all points inside this ball satisfy $v(y) \leq h(z - \phi(r))$ as $z-\phi(r)$ is the closest that any point in this ball gets to the origin. Thus, bounding the expectation, we have 
$$\int_M k(x, y)v(y)d\mu(y) \leq \frac{a}{2}h(z-r) + \frac{a}{2}h(z).$$ Substituting this, we have that 
\begin{equation*}
\begin{split}
h(z) &= v(x) = T_\mu v (x) = \frac{\int_M k(x, y)v(y)d\mu(y)}{\rho + a} \leq \frac{\frac{a}{2}h(z-\phi(r)) + \frac{a}{2}h(z)}{\rho + a}.
\end{split}
\end{equation*}

With $z = z^\prime + \phi(r)$ the upper bound can be written as $h(z+\phi(r)) \leq C_u h(z)$ where $C_u = 1/(1+2\rho/a)$. Recursion of this expression gives $h(z+t \phi(r)) \leq C_u^t h(z)$. We now let $z = z_1$ where $z_1$ defines the center of the source, since $h(z_1) = 1$ this gives $  h(z + t \phi(r)) \leq \exp{(-t\ln{(1+2\rho/a)})}$.

\paragraph{Lower Bound:}
We use a similar strategy as we did with the upper bound. This time, we let $\Gamma$ the probability mass of the ball $B(x, r)$ that consists of points $y$ for which $||y|| \leq z - \phi(r/2)$. While this value depends on $z$, it can be lower bounded by the case in which $||z|| = \phi(r)$. This constant thus equals the intersection volume between 2 $d$-dimensional spheres. Using this constant, we see that $$\int_M k(x, y)v(y)d\mu(y) \geq \Gamma h(z-\phi(r/2)).$$ Substituting this, we have 
\begin{equation*}
\begin{split}
h(z) &= v(x) = T_\mu v (x) = \frac{\int_M k(x, y)v(y)d\mu(y)}{\rho + a} \geq \frac{\Gamma h(z - \phi(r/2))}{\rho + a},
\end{split}
\end{equation*}
We now treat the lower bound similarly to the upper bound. Here $h(z + t \phi(r/2)) \geq C_l^t$ where $C_l = \Gamma/(a+\rho)$, which gives $h(z + t \phi(r/2)) \geq \exp{(-t\ln{((a+\rho)/\Gamma)}}$.
\end{proof}

\subsection{Proof of Corollary \ref{corollary:span_of_support}} Consider a disk in $\bbR^d$. Let $z = z_1$ where $z_1$ defines the center of the source, $h(z_1) = 1$.
We are interested in the radial distance $r_{supp}$ from $z_1$ such that $h(z_1 + r_{supp}) \geq \tau$. Let $r_{supp} = tr$. From Corollary \ref{coro:voltage_shape_disk} we then have that
\begin{equation*}
    h_u(r_{supp}) \coloneqq \exp{\bracket{-\frac{r_{supp}}{r}\ln{(1+2\rho/a)}}} \quad \text{and} \quad h_l(r_{supp}) \coloneqq \exp{\bracket{-2\frac{r_{supp}}{r}\ln{((a+\rho)/\Gamma}}}
\end{equation*}

The upper bound for $r_{supp}$ should therefore be $r_{s,u} = \max\{r_{supp} : h_u(z_1 + r_{supp}) \geq \tau\}$. Similarly the lower bound should be $r_{s,l} = \max\{r_{supp} : h_l(z_1 + r_{supp}) \geq \tau\}$. Solving for $r_{supp}$ we find that $h_l(r_{supp}) \geq \tau $ and $h_u(r_{supp}) \geq \tau$ implies
\begin{align*}
    \begin{split}
         r_{supp} \leq \frac{\frac{r}{2}\log{ 1/\tau}}{\log{(Cr^d + \rho)/\Gamma}} \quad \text{and} \quad r_{supp} \leq \frac{r\log{ 1/\tau}}{\log{(1 + \rho/Cr^d)}}.
    \end{split}
\end{align*}
respectively. Here we used that $a = Cr^d$ as it is the volume of a $d$ dimensional sphere. From the definition of $r_{s,l}$ and $r_{s,u}$ the result follows.

\end{document}